\documentclass{article}

\usepackage[english]{babel}

\usepackage[letterpaper,top=2cm,bottom=2cm,left=3cm,right=3cm,marginparwidth=1.75cm]{geometry}

\usepackage{authblk} 
\usepackage{graphicx}
\usepackage[colorlinks=true, allcolors=blue]{hyperref}
\usepackage{amsmath,amssymb,amsthm,mathtools}
\usepackage{booktabs}
\usepackage{array}
\usepackage{multirow}
\usepackage{longtable}
\usepackage{enumitem}
\usepackage{algorithm}
\usepackage{algpseudocode}
\usepackage{bbm}
\usepackage[numbers]{natbib}
\theoremstyle{plain}
\newtheorem{theorem}{Theorem}
\newtheorem{lemma}{Lemma}
\newtheorem{proposition}{Proposition}

\theoremstyle{definition}
\newtheorem{definition}{Definition}
\newtheorem{assumption}{Assumption}
\theoremstyle{remark}
\newtheorem{remark}{Remark}

\newcommand{\ctr}{m} 
\newcommand{\rad}{h} 

\newcommand{\SymInt}[2]{\left[\,#1-#2,#1+#2\,\right]}

\title{Co-optimization for Adaptive Conformal Prediction}

\author[1]{Xiaoyi Su\thanks{\texttt{xiaoyisu3-c@my.cityu.edu.hk}}} 
\author[2]{Zhixin Zhou\thanks{\texttt{zzhou@alphabenito.com}}} 
\author[1]{Rui Luo\thanks{\texttt{ruiluo@cityu.edu.hk}}} 

\affil[1]{Department of Systems Engineering, City University of Hong Kong, Hong Kong SAR, China}
\affil[2]{Alpha Benito Research, Los Angeles, USA}

\date{}

\begin{document}
\maketitle

\begin{abstract}
Conformal prediction (CP) provides finite-sample, distribution-free marginal coverage, but standard conformal regression intervals can be inefficient under heteroscedasticity and skewness. In particular, popular constructions such as conformalized quantile regression (CQR) often inherit a fixed notion of center and enforce equal-tailed errors, which can displace the interval away from high-density regions and produce unnecessarily wide sets. We propose \textbf{Co-optimization for Adaptive Conformal Prediction (CoCP)}, a framework that learns prediction intervals by jointly optimizing a center $\ctr(x)$ and a radius $\rad(x)$.
CoCP alternates between (i) learning 
$\rad(x)$ via quantile regression on the folded absolute residual around the current center, and (ii) refining 
$\ctr(x)$ with a differentiable soft-coverage objective whose gradients concentrate near the current boundaries, effectively correcting mis-centering without estimating the full conditional density. Finite-sample marginal validity is guaranteed by split-conformal calibration with a normalized nonconformity score. 
Theory characterizes the population fixed point of the soft objective and shows that, under standard regularity conditions, CoCP asymptotically approaches the length-minimizing conditional interval at the target coverage level as the estimation error and smoothing vanish.
Experiments on synthetic and real benchmarks demonstrate that CoCP yields consistently shorter intervals and achieves state-of-the-art conditional-coverage diagnostics.
\end{abstract}

\section{Introduction}
Even though conformal prediction (CP) \cite{vovk2005algorithmic, shafer2008atutorial, lei2018distribution, angelopoulos2021gentle} provides a flexible and distribution-free recipe for constructing prediction sets with finite-sample \emph{marginal} coverage under exchangeability, high-quality prediction intervals must do more than simply meet a nominal coverage level. They should be \emph{as short as the conditional distribution permits} while adapting to heteroscedasticity and shape changes across the covariate space. In this context, Conformalized Quantile Regression (CQR) \cite{romano2019conformalized,sesia2020comparison} has emerged as a particularly popular and effective instantiation, combining conditional quantile estimation with conformal calibration to produce heteroscedastic, marginally valid prediction intervals.

For comparison against the theoretical minimum length, the ideal benchmark is the highest density interval (HDI). By the Neyman-Pearson lemma, the shortest region capturing \(1-\alpha\) probability mass is exactly a density level set \(\{y: f(y) \ge \lambda\}\), which under unimodality reduces to the contiguous HDI \cite{casella2024statistical, hyndman1996computing}. For interior solutions, its endpoints naturally satisfy an equal-density boundary condition. However, standard methods like CQR enforce equal-tailed error rates (e.g., \(\alpha/2\) on each side). Under skewed distributions, this rigid equal-tailed constraint systematically displaces the interval from high-density regions, yielding unnecessarily long intervals even when conditional coverage is perfectly maintained (Figure~\ref{fig:hdi_folding}a). This contrast highlights the central challenge: to construct an optimally efficient \((1-\alpha)\)-interval, a method must not only \emph{scale} to accommodate local noise but also \emph{translate} its center toward the concentration of probability mass.

The core insight of this paper is a geometric pathway from any interval to an HDI-like interval, conceptualized via the ``folded-flag'' visualization in Figure~\ref{fig:hdi_folding}. Consider an interval parameterized by a center \(\ctr(x)\) and a radius \(\rad(x)\). For a fixed center \(\ctr\), the radius required to capture \(1-\alpha\) conditional mass is determined by a single boundary quantity: the \((1-\alpha)\)-quantile of the folded residual \(\lvert Y-\ctr(X)\rvert\mid X=x\). If the conditional densities at the two corresponding endpoints are imbalanced, translating \(\ctr\) slightly toward the higher-density side shifts the data distribution relative to the center. Visually, this relative motion \emph{pushes} denser probability mass into the interval while \emph{pulling} sparser mass out (Figure~\ref{fig:hdi_folding}c). Because a net positive mass enters the interval, the boundary quantile—and thus the radius \(\rad\)—must contract to maintain the exact \(1-\alpha\) mass, thereby shortening the interval. This ``push--pull'' dynamic continues until the endpoint densities balance, which, under unimodality, perfectly recovers the HDI (Figure~\ref{fig:hdi_folding}d). Thus, the discrepancy between any intervals and HDI-like intervals can be elegantly framed as a \emph{boundary-balancing} problem in a folded geometry.

\begin{figure}[!t]
  \centering
  \includegraphics[width=0.95\textwidth]{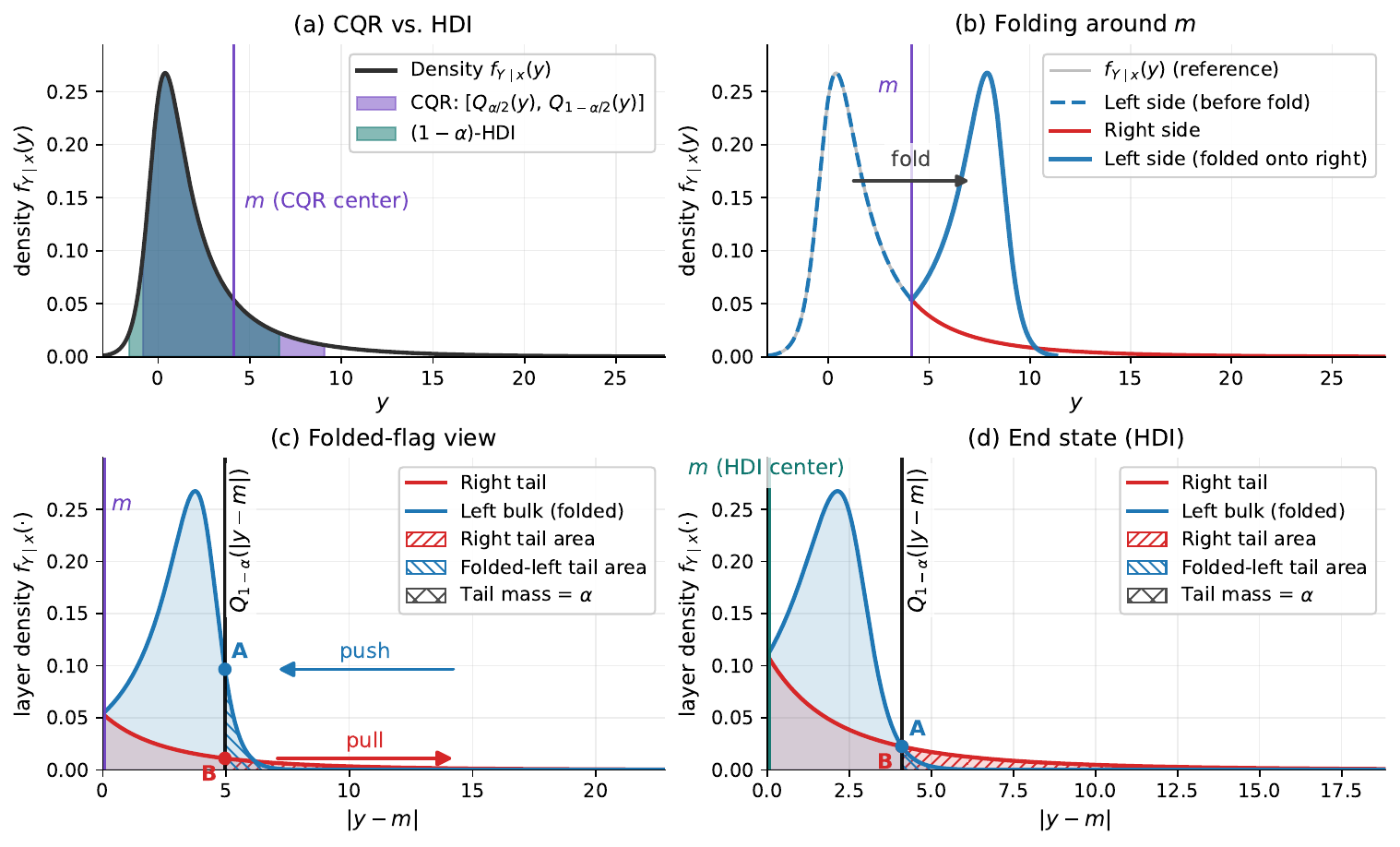}
  \caption{\textbf{From any interval to HDI via folding and boundary balancing.}
  (a) Under skewness, equal-tailed intervals can be displaced relative to the $(1-\alpha)$-HDI and are typically longer.
  (b) Folding around a candidate center $\ctr$ maps $y$ to $\lvert y-\ctr\rvert$, producing a two-layer view.
  (c) The folded $(1-\alpha)$-boundary corresponds to two endpoints; translating $\ctr$ shortens the interval when the endpoint densities are imbalanced (push--pull).
  (d) At equilibrium the endpoint densities match, recovering the HDI.}
  \label{fig:hdi_folding}
\end{figure}

\begin{figure}[!ht]
  \centering
  \includegraphics[width=0.95\textwidth]{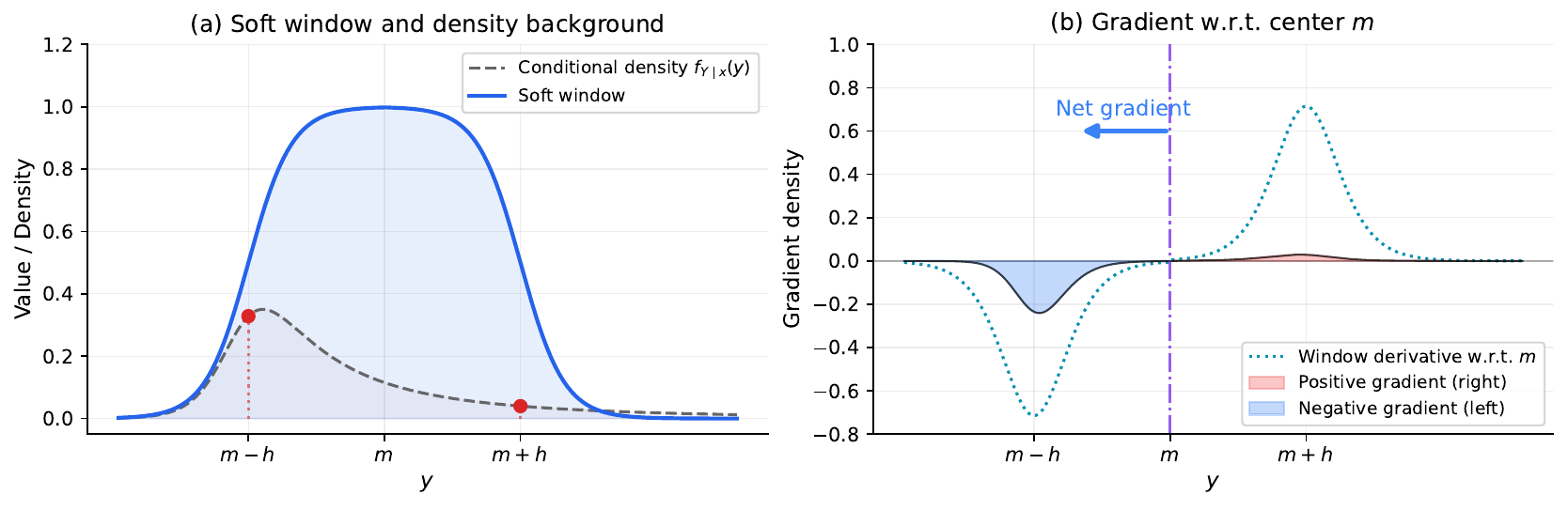}
  \caption{\textbf{Gradient dynamics of the soft-coverage objective.}
  (a) The soft window serves as a differentiable surrogate for the interval indicator. The underlying data density (dashed line) is imbalanced at the two boundaries. 
  (b) The gradient of the soft window with respect to the center $m$ acts as a signed sampling kernel, being negative at the left boundary and positive at the right. The resulting local gradient (the product of the conditional density and the derivative kernel) is asymmetric under skewness; the higher-density boundary generates a dominant directional signal that shifts the center $m$ toward the concentration of probability mass.}
  \label{fig:softcov}
\end{figure}

We operationalize this principle with \textbf{CoCP} (\textbf{Co}-optimization for adaptive \textbf{C}onformal \textbf{P}rediction).
CoCP explicitly couples the learning of \(\ctr(x)\) and \(\rad(x)\), enabling the interval to simultaneously translate and scale. 
Specifically, \(\rad\) tracks the folded \((1-\alpha)\)-boundary via a pinball loss, while \(\ctr\) is refined using a differentiable \emph{soft-coverage} surrogate. 
This surrogate cleverly concentrates the learning signal near the interval endpoints (Figure~\ref{fig:softcov}), directly capturing the boundary imbalance without requiring full conditional density estimation. 
Finally, we apply a standard conformal calibration step to guarantee exact finite-sample \emph{marginal} coverage. 
In sum, our co-optimization stage determines the interval's adaptive geometry to maximize efficiency and improves conditional reliability, while the calibration stage ensures rigorous validity.
Our main contributions are summarized as follows:
\begin{itemize}[leftmargin=*]
  \item \textbf{HDI-motivated viewpoint and folded geometry.} We introduce a folded boundary-balancing perspective that explains the inefficiency of center-invariant conformal intervals under skewness. This motivates co-optimizing an interval center and radius to recover HDI-like translation and scaling.
  
  \item \textbf{A practical, co-optimized construction (CoCP).} We instantiate this perspective into a differentiable alternating optimization framework. Beyond guaranteeing finite-sample marginal validity, we theoretically establish that, under standard regularity conditions, CoCP asymptotically recovers the optimal conditional interval length and achieves exact conditional coverage as learning errors and the smoothing parameter vanish.

  \item \textbf{Extensive empirical evidence.} We validate CoCP across diverse synthetic and real-world benchmarks, demonstrating that it not only yields significantly tighter intervals but also consistently achieves state-of-the-art conditional coverage diagnostics.
\end{itemize}

\section{Related Work}
Work on efficient conformal regression intervals often starts from CQR, which conformalizes estimated lower and upper conditional quantiles to obtain heteroscedastic but \emph{equal-tailed} (typically $\alpha/2$ and $1-\alpha/2$) intervals \cite{romano2019conformalized,sesia2020comparison}. A prominent alternative is \emph{distributional} conformal inference \cite{chernozhukov2021distributional,izbicki2022cd}, which aims to better match skewed or multimodal conditionals by first learning richer conditional information and then extracting high-density sets. Conformal Histogram Regression (CHR) \cite{sesia2021conformal} constructs approximately shortest intervals by conformalizing conditional histograms, while Conformal Interquantile Regression (CIR) \cite{guo2026fast} replaces explicit histograms with interquantile partitions for improved speed. However, these approaches still require approximating a discretized conditional distribution, so the compute--accuracy trade-off is governed by the chosen resolution (number of bins or quantiles) and the fidelity of the underlying multi-quantile estimator. Density/generative-model approaches such as C-HDR \cite{izbicki2022cd,dheur2025aunified} can capture multimodality, but they inherit the calibration and stability of the learned predictive density and typically rely on sampling estimates of density ranks; consequently, the resulting prediction set need not be a single interval and may be sensitive to model and Monte Carlo variability. 

Complementarily, recent work \cite{gao2025volume} formalizes \emph{restricted} optimality by constraining admissible sets (e.g., unions of $k$ intervals), making near-optimal volume achievable via dynamic programming, but again depends on an estimated conditional CDF and discretization choices. Similarly, Conformal Thresholded Intervals (CTI) \cite{luo2025conformal} constructs prediction sets by selecting the shortest interquantile pieces; as highlighted by a comprehensive review \cite{bao2025areview}, although methods like CHR and CIR achieve high efficiency, they inherently trade off simplicity and contiguity, either relying on complex density estimations or yielding non-contiguous sets. Prior to modern CP, direct interval construction via neural networks was explored by \cite{pearce2018high}, who proposed a quality-driven loss using a differentiable sigmoid surrogate to balance coverage and width. However, it relies on a heuristic Lagrangian penalty to govern this trade-off. In contrast, we leverages a similar soft-coverage concept but embeds it within a principled folded geometry. Crucially, CoCP bypasses the complex estimation of the entire conditional CDF or full density. Instead, it only needs to learn a single conditional quantile for the radius; meanwhile, its soft-coverage gradients focus exclusively on the local density near the interval boundaries, autonomously driving the center toward higher-density regions to naturally minimize the required radius.

A separate line of work aims to strengthen coverage beyond marginal guarantees. The necessity of this is underscored by recent diagnostic frameworks \cite{braun2025conditional}, which reframe conditional coverage evaluation as a supervised classification task to formally quantify coverage gaps via the Excess Risk of Target coverage (ERT). Since distribution-free methods cannot, in general, achieve exact conditional coverage uniformly over $x$ without degenerating to trivial sets \cite{lei2014distribution, barber2020thelimits}, many approaches seek principled relaxations via localization, weighting, or restricted shift classes \cite{tibshirani2019conformal,guan2023localized,hore2025conformal}. Conditional calibration methods provide finite-sample conditional guarantees against a user-specified function class, e.g., linear or reproducing kernel Hilbert space (RKHS) reweightings \cite{gibbs2025conformal}; these guarantees are powerful but are inherently \emph{relative to} the chosen function class and regularization, making practical performance sensitive to kernel and feature design and hyperparameter selection. SpeedCP \cite{jung2025speedcp} makes this RKHS-based program computationally practical by tracing the full solution path efficiently, and Conformal Prediction with
Length-Optimization (CPL) \cite{kiyani2024length} further incorporates explicit length optimization under the same conditional-constraint viewpoint. For fairness-oriented group guarantees, Kandinsky conformal prediction \cite{bairaktari2025kandinsky} generalizes class- and covariate-conditional formulations to overlapping and fractional group memberships. Another branch aims to directly minimize the exact conditional coverage error (e.g., Mean Squared Conditional Error \cite{kiyani2024conformal}); for instance, Colorful Pinball Conformal Prediction (CPCP) \cite{chen2026colorful} utilizes a density-weighted pinball loss to refine the underlying quantile regression. 
CoCP offers a distinct geometric alternative that we improve conditional reliability by explicitly co-optimizing the interval's translation and scaling, allowing it to autonomously adapt to local density imbalances.

In addition, several recent methods improve local adaptivity by learning or transforming conformity scores. Rectified Conformal Prediction (RCP) \cite{plassier2025rectifying} rectifies a base score using an estimate of the conditional score quantile, improving approximate conditional validity while preserving finite-sample marginal validity; related ideas adapt the score using calibration data through split-Jackknife+ constructions \cite{deutschmann2024adaptive} or by explicitly training locally adaptive conformal predictors \cite{colombo2023training}. When instantiated with residual-type scores, these approaches primarily adjust the \emph{scale} of uncertainty and typically do not correct \emph{mis-centering} induced by skewness, since the interval location is inherited from a fixed underlying predictor. Orthogonally, cross-conformal prediction and recent refinements improve statistical efficiency by reusing data and combining fold-wise p-values more effectively \cite{vovk2015cross,gasparin2025improving}. Our approach differs in that we do \emph{not} treat the interval center as fixed: CoCP co-optimizes a center $\ctr(x)$ and radius $\rad(x)$ in an alternating loop, where $\rad$ tracks a folded boundary quantile via pinball loss while $\ctr$ is updated using boundary-local soft-coverage gradients, directly operationalizing the push--pull boundary-balancing mechanism underlying HDI optimality.

\section{The Geometry of Efficient Intervals}\label{sec:geometry}

This section establishes the geometric foundation of CoCP. We formalize why the HDI serves as the theoretical lower bound for interval length, and demonstrate how \emph{translating} the interval (i.e., shifting its center) actively reduces the minimal radius required to maintain a \((1-\alpha)\) conditional mass. To analyze these dynamics, our central conceptual device is a \emph{folded} representation, which elegantly reduces a two-sided interval boundary into a \emph{single} quantile boundary of a folded residual distribution.

\subsection{Setup and the oracle target}
Let \((X,Y)\sim P\) with \(X\in\mathcal{X}\subseteq\mathbb{R}^d\) and \(Y\in\mathbb{R}\).
Write \(F_x\) for the conditional CDF of \(Y\mid X=x\), and \(f_x\) for its density when it exists.
Fix a miscoverage level \(\alpha\in(0,1)\).
When \(f_x\) is unimodal and regular, the \emph{shortest} interval achieving a conditional mass of \(1-\alpha\) is given by:
\begin{equation}
C^\star(x)\in\arg\min_{[\ell,u]}\left\{u-\ell:\ \mathbb{P}\bigl(\ell\le Y\le u\mid X=x\bigr)\ge 1-\alpha\right\},
\end{equation}
which precisely corresponds to the \((1-\alpha)\)-HDI. If the optimum is an \emph{interior solution} (i.e., both endpoints lie within the interior of the conditional support), it naturally satisfies the classical equal-density condition \(f_x(\ell^\star(x))=f_x(u^\star(x))\); see Appendix ~\ref{app:hdi_endpoints}. 

To facilitate a unified geometric analysis, we can  also express any interval in a center--radius form. Thus, the oracle \((1-\alpha)\) HDI can be written as:
\begin{equation}
 C^\star(x)= \SymInt{\ctr^\star(x)}{\rad^\star(x)},
\end{equation}
where \(\ctr^\star(x)= (\ell^\star(x)+u^\star(x))/2\) and \(\rad^\star(x)= (u^\star(x)-\ell^\star(x))/2\).

\subsection{Residual folding}
By folding the conditional distribution around a candidate center \(c\in\mathbb{R}\), we map the two-sided interval problem into a one-sided thresholding problem. The folded residual is defined as \(\lvert Y-c\rvert\bigm|X=x\). To achieve a conditional mass of \(1-\alpha\), the minimal feasible radius at center \(c\) is:
\begin{equation}\label{eq:minimal_feasible_radius}
\psi_x(c)\ \coloneqq\ \inf\Bigl\{r\ge 0:\ \mathbb{P}\bigl(|Y-c|\le r\mid X=x\bigr)\ge 1-\alpha\Bigr\}.
\end{equation}
When the folded residual is continuous, this minimal radius is exactly the \((1-\alpha)\)-quantile of the folded distribution: \(Q_{1-\alpha}\bigl(\lvert Y-c\rvert\mid X=x\bigr)\).

\subsection{Why translating the center can shorten the interval}
From the folded perspective, finding the HDI is equivalent to searching for the optimal center that minimizes this minimal feasible radius:
\begin{equation}
\rad^\star(x)=\min_{c\in\mathbb{R}} \psi_x(c),
\qquad
\ctr^\star(x)\in\arg\min_{c\in\mathbb{R}} \psi_x(c).
\end{equation}
Under mild smoothness conditions, \(\psi_x(c)\) is differentiable and satisfies the implicit mass constraint:
\begin{equation}
F_x\bigl(c+\psi_x(c)\bigr)-F_x\bigl(c-\psi_x(c)\bigr)=1-\alpha.
\end{equation}
Assuming the densities \(f_x(c\pm \psi_x(c))\) exist and their sum is strictly positive, differentiating this implicit equation with respect to \(c\) yields the analytical \emph{push--pull} rule:
\begin{equation}
\psi_x'(c)=-
\frac{f_x\bigl(c+\psi_x(c)\bigr)-f_x\bigl(c-\psi_x(c)\bigr)}
{f_x\bigl(c+\psi_x(c)\bigr)+f_x\bigl(c-\psi_x(c)\bigr)}.
\end{equation}
This derivative explicitly dictates the interval's optimal translation dynamic. If the right endpoint \(c+\psi_x(c)\) lies in a region of higher density than the left endpoint \(c-\psi_x(c)\), the numerator is positive, making \(\psi_x'(c)<0\). Consequently, shifting the center \(c\) to the right \emph{decreases} the required radius. At an interior optimal center \(c=\ctr^\star(x)\), the derivative vanishes (\(\psi_x'(\ctr^\star(x))=0\)), which perfectly recovers the endpoint density balance \(f_x(\ctr^\star+\rad^\star)=f_x(\ctr^\star-\rad^\star)\).

This aligns with the boundary-balancing intuition illustrated in Figure~\ref{fig:hdi_folding}c--d. The main takeaway is that constructing an efficient interval naturally decouples into two cooperative tasks:
\begin{enumerate}[leftmargin=*]
\item \textbf{Scaling:} For any given center \(\ctr(x)\), the shortest symmetric interval capturing \(1-\alpha\) mass uses a radius \(\psi_x(\ctr(x))\), which is exactly the \((1-\alpha)\)-quantile of the folded residual.
\item \textbf{Translation:} Moving \(\ctr(x)\) toward the denser endpoint strictly reduces the minimal required radius, progressively shortening the interval until the endpoint densities balance.
\end{enumerate}
The next section translates this geometric blueprint into a practical, differentiable learning procedure that bypasses the need to estimate the full conditional density \(f_x\).

\section{Co-optimizing Interval Geometry via Alternating Learning}\label{sec:method}

As illustrated in Figure~\ref{fig:cocp}, we now introduce CoCP as a coupled learning procedure for a center \(\ctr(x)\) and a radius \(\rad(x)\), followed by a split-conformal calibration step. To keep the theoretical exposition transparent, we first present a \emph{split version} in which \(\ctr\) and \(\rad\) are trained on independent data subsets. Cross-fitting and ensembling are subsequently introduced as practical refinements that reuse data and reduce variance, without compromising the conformal validity argument.

\begin{figure}[htbp]
  \centering
  \includegraphics[width=\textwidth]{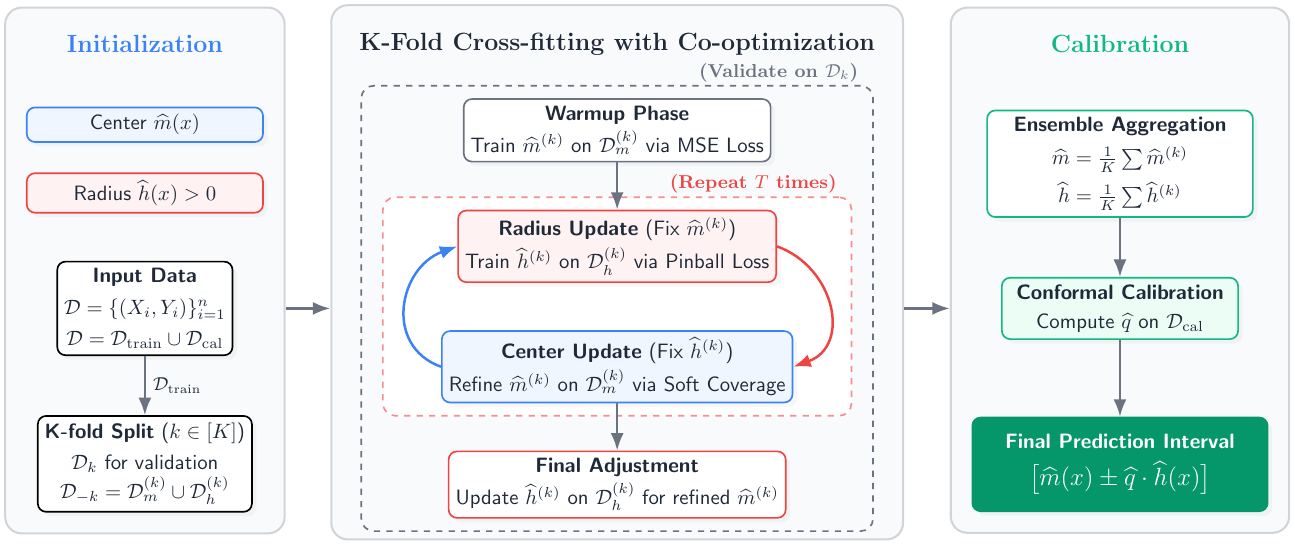}
  \caption{Overview of the CoCP framework. The pipeline begins by parameterizing the prediction interval into a center $\ctr(x)$ and a radius $\rad(x)$, which are then iteratively refined through an alternating co-optimization process during K-fold cross-fitting. The final adaptive interval is constructed by aggregating the learned components and applying a split-conformal calibration to ensure marginal validity.}
  \label{fig:cocp}
\end{figure}

\subsection{A split version of CoCP}
The desired output is an interval parameterized as:
\begin{equation}
C_{\ctr,\rad}(x)=\SymInt{\ctr(x)}{\rad(x)}.
\end{equation}
In practice, \(\ctr\) and \(\rad\) can be instantiated as neural networks, with the constraint \(\rad(x)>0\) enforced to prevent numerical instability during normalization.
Let the available data be partitioned into a training set \(\mathcal{D}_{\mathrm{train}}\) and a calibration set \(\mathcal{D}_{\mathrm{cal}}\). To facilitate our alternating optimization, we further divide \(\mathcal{D}_{\mathrm{train}}\) into two disjoint subsets, \(\mathcal{D}_{\ctr}\) and \(\mathcal{D}_{\rad}\), which are used to train the center network \(\ctr\) and the radius network \(\rad\), respectively.

Starting from a center initialized via Mean Squared Error (MSE) pre-training, CoCP alternates between two steps:
(i) fitting \(\rad\) as a conditional \((1-\alpha)\)-quantile of folded residuals given the current \(\ctr\);
(ii) refining \(\ctr\) using a boundary-local smooth surrogate of the hard coverage event \(\mathbbm{1}\{|Y-\ctr(X)|\le \rad(X)\}\). The process concludes with a final fine-tuning of \(\rad\) to ensure the radius is optimally scaled to the converged center.
This directly implements the scaling and translation discussed in Section~\ref{sec:geometry}.

\subsection{Radius update: folded quantile regression}
Fix a current center \(\ctr\). Motivated by the identity \(\psi_x(\ctr(x))=Q_{1-\alpha}(\lvert Y-\ctr(X)\rvert \mid X=x)\), we update \(\rad\) by performing quantile regression on the folded residual. Define the pinball loss \(\rho_\tau(u)\coloneqq u(\tau-\mathbbm{1}\{u<0\})\) for \(\tau\in(0,1)\). The radius objective is formulated as:
\begin{equation}
\label{eq:rad-loss}
\mathcal{L}_{H}(\rad;\ctr)
\coloneqq\mathbb{E}\Bigl[\rho_{1-\alpha}\bigl(|Y-\ctr(X)|-\rad(X)\bigr)\Bigr].
\end{equation}
Minimizing \eqref{eq:rad-loss} over all measurable \(\rad\) yields \(\rad(x)=Q_{1-\alpha}(|Y-\ctr(X)|\mid X=x)\) under standard conditions.

\subsection{Center update: boundary-local soft coverage}
When \(\ctr\) is suboptimally centered, the endpoints \(\ctr(x)\pm\rad(x)\) typically sit at different density levels.
As established in Section~\ref{sec:geometry}, shifting \(\ctr\) toward the denser endpoint reduces the minimal feasible radius.
CoCP encourages this shift using a smooth surrogate whose gradients concentrate near the boundary \(|Y-\ctr(X)|\approx\rad(X)\).

Let \(\sigma(z)\coloneqq (1+\exp(-z))^{-1}\) be the sigmoid function and \(\beta>0\) be a temperature parameter.
Given a fixed radius \(\rad\), we define the soft-coverage objective:
\begin{equation}
\label{eq:ctr-loss}
\mathcal{L}_{M}(\ctr;\rad,\beta)
\coloneqq
-\mathbb{E}\left[\sigma\left(\frac{\rad(X)-|Y-\ctr(X)|}{\beta}\right)\right].
\end{equation}
Minimizing \(\mathcal{L}_{M}\) maximizes a smoothed version of the coverage event.
As \(\beta\to 0^+\), the derivative of \(\sigma((\rad-|R|)/\beta)\) concentrates around \(|R|=\rad\), producing boundary-local updates.

\subsection{Conformal calibration}
Given the trained \(\widehat\ctr\) and \(\widehat\rad\), we calibrate them on \(\mathcal{D}_{\mathrm{cal}}=\{(X_i,Y_i)\}_{i=1}^{n_{\mathrm{cal}}}\) using the normalized score:
\begin{equation}
    S_i\coloneqq \frac{|Y_i-\widehat\ctr(X_i)|}{\widehat\rad(X_i)}.
\end{equation}
Let \(\widehat{q}\) be the \(\lceil (n_{\mathrm{cal}}+1)(1-\alpha)\rceil\)-th smallest score in \(\{S_i\}_{i=1}^{n_{\mathrm{cal}}}\). 
We then output the calibrated interval:
\begin{equation}
\label{eq:final-int}
\widehat{C}(x)=\SymInt{\widehat\ctr(x)}{\widehat{q}\,\widehat\rad(x)}.
\end{equation}
This step guarantees exact \emph{marginal} validity in finite samples, regardless of how \(\ctr\) and \(\rad\) were trained.

\subsection{Cross-fitting and ensembling}
To use data more efficiently and reduce the variance of the learned geometry, we employ \(K\)-fold cross-fitting. Each fold produces a fitted pair \((\widehat\ctr^{(k)},\widehat\rad^{(k)})\) trained on the respective training folds. We then aggregate these predictors (e.g., by averaging the network outputs) to obtain the final \(\widehat{\ctr}\) and \(\widehat{\rad}\):
\begin{equation}
\label{eq:ensemble}
\widehat{\ctr}(x)\coloneqq \frac{1}{K}\sum_{k=1}^K \widehat\ctr^{(k)}(x),
\qquad
\widehat{\rad}(x)\coloneqq \frac{1}{K}\sum_{k=1}^K \widehat\rad^{(k)}(x),
\end{equation}
and calibrate them on the calibration set. Our complete approach is summarized in Algorithm~\ref{alg:cocp}.

\begin{remark}
Cross-fitting and ensembling do not alter the conformal validity argument: the only requirement is that the final functions used to compute the scores are measurable with respect to the training data and independent of the calibration sample.
Therefore, for theoretical statements, we focus on the split version described above; the same conclusions apply to cross-fitted and ensembled predictors provided that calibration remains independent.
\end{remark}

\begin{algorithm}[!ht]
\caption{CoCP: Co-optimization for Adaptive Conformal Prediction}
\label{alg:cocp}
\begin{algorithmic}[1]
\Require Training data \(\mathcal{D}_{\mathrm{train}}\), calibration data \(\mathcal{D}_{\mathrm{cal}}\), miscoverage \(\alpha\), folds \(K\), alternations \(T\), temperature parameter \(\beta>0\)
\Ensure Calibrated interval function \(\widehat{C}(\cdot)\)
\State \textbf{Partition} $\mathcal{D}_{\mathrm{train}}$ into $K$ disjoint folds $\{\mathcal{D}_1, \dots, \mathcal{D}_K\}$
\State Initialize ensemble \(\{(\widehat\ctr^{(k)},\widehat\rad^{(k)})\}_{k=1}^{K}\)
\For{\(k=1,\dots,K\)}
    \State Set validation set       \(\mathcal{D}_{\mathrm{val}}^{(k)} = \mathcal{D}_k\)
    \State \textbf{Split} \(\mathcal{D}_{\mathrm{train}}\setminus \mathcal{D}_k\) into two disjoint sets \(\mathcal{D}_{\ctr}^{(k)}\) and \(\mathcal{D}_{\rad}^{(k)}\)
  \State Warm-start \(\widehat\ctr^{(k)}\) by minimizing MSE on \(\mathcal{D}_{\ctr}^{(k)}\) while using \(\mathcal{D}_{\mathrm{val}}^{(k)}\) for validation
  \For{\(t=1,\dots,T\)}
    \State \textbf{Radius update:} fix \(\widehat\ctr^{(k)}\), train \(\widehat\rad^{(k)}\) on \(\mathcal{D}_{\rad}^{(k)}\) via pinball loss \eqref{eq:rad-loss}
    \State \textbf{Center update:} fix \(\widehat\rad^{(k)}\), train \(\widehat\ctr^{(k)}\) on \(\mathcal{D}_{\ctr}^{(k)}\) via soft coverage loss \eqref{eq:ctr-loss} using \(\beta\)
  \EndFor
  \State Fine-tune \(\widehat\rad^{(k)}\) with \(\widehat\ctr^{(k)}\) fixed
\EndFor
\State Form ensemble predictors \(\widehat{\ctr},\widehat{\rad}\) via \eqref{eq:ensemble}
\State Compute calibration scores \(S_i=\lvert Y_i-\widehat{\ctr}(X_i)\rvert / \widehat{\rad}(X_i)\) for \((X_i,Y_i)\in\mathcal{D}_{\mathrm{cal}}\)
\State Set \(\widehat{q}\) as the \(\lceil (n_{\mathrm{cal}}+1)(1-\alpha)\rceil\)-th smallest score in \(\{S_i\}_{i=1}^{n_{\mathrm{cal}}}\)
\State Output \(\widehat{C}(x) = [\widehat{\ctr}(x) - \widehat{q}\cdot\widehat{\rad}(x), \widehat{\ctr}(x) + \widehat{q}\cdot\widehat{\rad}(x)]\)
\end{algorithmic}
\end{algorithm}

\section{Theoretical Properties}\label{sec:theory}

This section establishes the theoretical foundations of CoCP. 
We structure our analysis around three progressive layers of theoretical properties: 
(i) \emph{finite-sample marginal validity}, which inherently follows from conformal calibration; 
(ii) the characterization of a temperature-dependent \emph{population target}, which is implicitly induced by the soft-coverage center update; 
and (iii) \emph{asymptotic efficiency and conditional coverage}, which are achieved as the learning errors vanish and the temperature parameter \(\beta\) approaches zero. 
All corresponding proofs of the main results are deferred to Appendix~\ref{app:theory}.

\subsection{Finite-sample marginal validity}

\begin{theorem}[Finite-sample marginal coverage]\label{thm:cocp_marginal}
If the calibration set and a new test sample \((X,Y)\) are exchangeable conditional on the training fit
(i.e., the standard split-conformal setting), then
\begin{equation}
\mathbb{P}\bigl(Y\in \widehat C(X)\bigr)\ \ge\ 1-\alpha .
\end{equation}
\end{theorem}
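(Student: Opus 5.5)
We argue via the standard split-conformal quantile argument, applied to the normalized score. Throughout, we condition on the training data $\mathcal{D}_{\mathrm{train}}$ (and any internal randomness of the optimizer, cross-fitting and ensembling). Under this conditioning, $\widehat\ctr$ and $\widehat\rad$ are fixed measurable functions with $\widehat\rad(x)>0$. The score map $s(x,y)=|y-\widehat\ctr(x)|/\widehat\rad(x)$ is therefore a fixed function.

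The first step is to rewrite the coverage event as a score inequality. Since $\widehat\rad(X)>0$, we have $Y\in\widehat C(X)$ if and only if $|Y-\widehat\ctr(X)|\le \widehat q\,\widehat\rad(X)$, that is, if and only if $S_{n+1}\le\widehat q$, where $S_{n+1}=s(X,Y)$. By the exchangeability hypothesis, conditional on the training fit, applying the fixed function $s$ coordinatewise preserves exchangeability. Hence $(S_1,\dots,S_{n_{\mathrm{cal}}},S_{n+1})$ is an exchangeable vector.

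The second step is the rank argument. Write $n=n_{\mathrm{cal}}$ and $k=\lceil (n+1)(1-\alpha)\rceil$. If $k>n$, we set $\widehat q=+\infty$ by convention; the interval is then all of $\mathbb{R}$ and the claim is trivial. Otherwise, $S_{n+1}>\widehat q=S_{(k)}$ implies that $S_{n+1}$ is strictly larger than at least $k$ of the other $n+1$ scores. In that case its rank among all $n+1$ scores, counted with ties resolved in its favor, exceeds $k$.

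By exchangeability, the probability that $S_{n+1}$ is strictly greater than the $k$-th smallest of the full collection is at most $(n+1-k)/(n+1)$. This holds with ties as well: break ties by an independent uniform random permutation, note that ties only make $\{S_{n+1}\le \widehat q\}$ larger, and observe that the randomized rank is uniform on $\{1,\dots,n+1\}$.

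Therefore $\mathbb{P}(S_{n+1}\le\widehat q\mid \mathcal{D}_{\mathrm{train}})\ge k/(n+1)\ge 1-\alpha$. Taking expectations over the training data yields the marginal statement.

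The only delicate points are ties among the scores and the case where $k$ exceeds $n$. Both are handled as described: the tie-breaking argument covers the first, and the $\widehat q=+\infty$ convention covers the second. The justification that cross-fitting and ensembling are harmless is exactly the Remark preceding this section: the aggregated $\widehat\ctr$ and $\widehat\rad$ depend only on $\mathcal{D}_{\mathrm{train}}$, so they are fixed once we condition on it.
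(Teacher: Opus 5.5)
Your proposal is correct and is the paper's proof. Both condition on $\mathcal{D}_{\mathrm{train}}$ so that the normalized score is a fixed function, use exchangeability of the $n_{\mathrm{cal}}+1$ scores and the rank argument to get $\mathbb{P}(S\le\widehat q\mid\mathcal{D}_{\mathrm{train}})\ge 1-\alpha$, rewrite $S\le\widehat q$ as $Y\in\widehat C(X)$ using $\widehat\rad>0$, and average over the training data. You additionally spell out tie-breaking and the $\widehat q=+\infty$ convention when $\lceil (n_{\mathrm{cal}}+1)(1-\alpha)\rceil>n_{\mathrm{cal}}$, which the paper leaves as ``standard''; the only slip is ``at least $k$ of the other $n+1$ scores,'' which should read ``at least $k$ of the $n$ calibration scores.''
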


Theorem~\ref{thm:cocp_marginal} is a standard distribution-free result and holds regardless of how \(\widehat\ctr\) and \(\widehat\rad\) are trained.
The remaining results concern \emph{efficiency} and \emph{conditional reliability}, which do depend on learning and on the temperature \(\beta\).

\subsection{Temperature-dependent population target}\label{sec:beta_oracle}

Recall the conditional soft-coverage functional
\begin{equation}
\Phi_{x,\beta}(c,r)
\coloneqq
\mathbb{E}\left[\sigma\left(\frac{r-|Y-c|}{\beta}\right)\Bigm|X=x\right],
\qquad
\sigma(z)=\frac{1}{1+e^{-z}}.
\end{equation}
The derivative of \(\sigma\) induces a symmetric kernel
\begin{equation}
K_\beta(u)\coloneqq \frac{1}{\beta}\sigma'\left(\frac{u}{\beta}\right),
\qquad
\int_{\mathbb{R}}K_\beta(u)\,du=1,
\end{equation}
and the associated \(\beta\)-smoothed (convolved) density
\begin{equation}
f_{x,\beta}(z)\coloneqq (f_x*K_\beta)(z)=\int_{\mathbb{R}} f_x(y)\,K_\beta(z-y)\,dy .
\end{equation}

\begin{lemma}[Soft-gradient endpoint imbalance]\label{lem:softgrad}
Assume \(f_x\) is bounded by \(f_{\max}\).
For any \(c\in\mathbb{R}\), \(r>0\), and \(\beta>0\),
\begin{equation}
 \frac{\partial}{\partial c}\Phi_{x,\beta}(c,r)
=
f_{x,\beta}(c+r)-f_{x,\beta}(c-r)
+\varepsilon_{x,\beta}(c,r),
\end{equation}
where
\begin{equation}
 |\varepsilon_{x,\beta}(c,r)|
\le 2 f_{\max}\bigl(1-\sigma(r/\beta)\bigr).
\end{equation}
\end{lemma}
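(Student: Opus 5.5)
The plan is to compute the derivative explicitly as an integral against \(f_x\), and then compare each one-sided piece with the full convolution \(f_{x,\beta}(c\pm r)\). The discrepancy will be a kernel tail that depends only on \(r/\beta\).

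\textbf{Step 1 (differentiate under the integral).} Write \(\Phi_{x,\beta}(c,r)=\int f_x(y)\,\sigma\bigl((r-|y-c|)/\beta\bigr)\,dy\). For every \(y\neq c\), the map \(c\mapsto\sigma((r-|y-c|)/\beta)\) is differentiable with derivative \(\beta^{-1}\sigma'\bigl((r-|y-c|)/\beta\bigr)\,\mathrm{sgn}(y-c)\). It is also globally Lipschitz in \(c\) with constant \(\sup\sigma'/\beta=1/(4\beta)\). Since \(f_x\) is a probability density and the single point \(y=c\) has measure zero, dominated convergence applied to the difference quotients justifies exchanging derivative and integral. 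This gives
\begin{equation*}
\frac{\partial}{\partial c}\Phi_{x,\beta}(c,r)=\int_{y>c} f_x(y)\,K_\beta\bigl(r-(y-c)\bigr)\,dy-\int_{y<c} f_x(y)\,K_\beta\bigl(r-(c-y)\bigr)\,dy .
\end{equation*}
This Step 1 justification is the only delicate point: the non-smoothness of \(|y-c|\) at \(y=c\) must be handled via the Lipschitz bound rather than pointwise.

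\textbf{Step 2 (match to the smoothed density).} Because \(\sigma'\) is even, \(K_\beta\) is symmetric, so each integrand rewrites as a convolution term:
\begin{itemize}
\item the first integrand is \(f_x(y)K_\beta\bigl((c+r)-y\bigr)\);
\item the second is \(f_x(y)K_\beta\bigl(y-(c-r)\bigr)=f_x(y)K_\beta\bigl((c-r)-y\bigr)\).
\end{itemize}
Hence the first term equals \(f_{x,\beta}(c+r)\) minus the missing piece \(\int_{y<c}f_x(y)K_\beta(c+r-y)\,dy\). The second equals \(f_{x,\beta}(c-r)\) minus the missing piece \(\int_{y>c}f_x(y)K_\beta(y-c+r)\,dy\). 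So \(\varepsilon_{x,\beta}(c,r)\) is the difference of these two nonnegative remainders.

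\textbf{Step 3 (tail bound).} Bound each remainder using \(f_x\le f_{\max}\). For \(y<c\), substitute \(u=c+r-y>r\); for \(y>c\), substitute \(u=y-c+r>r\). Each remainder is then at most
\begin{equation*}
f_{\max}\int_r^\infty K_\beta(u)\,du=f_{\max}\bigl(1-\sigma(r/\beta)\bigr),
\end{equation*}
since \(\int_{-\infty}^{r}K_\beta=\sigma(r/\beta)\). The triangle inequality then gives \(|\varepsilon_{x,\beta}(c,r)|\le 2f_{\max}(1-\sigma(r/\beta))\). In fact each remainder is nonnegative, so the bound \(f_{\max}(1-\sigma(r/\beta))\) would also hold, but the stated constant suffices.
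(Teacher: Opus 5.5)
Your proposal is correct and follows essentially the same route as the paper's proof: differentiate under the integral, split at $y=c$, add and subtract the missing half-line pieces to obtain $f_{x,\beta}(c\pm r)$, and bound each kernel tail by $f_{\max}\bigl(1-\sigma(r/\beta)\bigr)$. Two small differences are in your favour. Dominating the difference quotients via the $1/(4\beta)$-Lipschitz bound justifies the interchange more cleanly than the paper's fixed-$c$ domination. Your remark that the two remainders are nonnegative, so that $|\varepsilon_{x,\beta}(c,r)|\le f_{\max}\bigl(1-\sigma(r/\beta)\bigr)$, is a valid sharpening of the stated constant.
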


Lemma~\ref{lem:softgrad} shows that the center update is driven by an endpoint imbalance,
but computed on the \(\beta\)-smoothed density \(f_{x,\beta}\) (up to an exponentially small remainder when \(r\gg \beta\)).
This motivates a \(\beta\)-dependent population target.
Next recall the folded best-response radius (minimal symmetric radius at center \(c\)) \(\psi_x(c)\) in (\ref{eq:minimal_feasible_radius}). Under continuity, this is the \((1-\alpha)\)-quantile of \(|Y-c|\mid X=x\).

\begin{definition}[\(\beta\)-soft oracle]\label{def:beta_oracle}
Fix \(x\) and \(\beta>0\).
A pair \((\ctr_\beta(x),\rad_\beta(x))\) is a \(\beta\)-soft oracle pair if
\begin{equation}
\rad_\beta(x)=\psi_x\bigl(\ctr_\beta(x)\bigr),
\qquad
\frac{\partial}{\partial c}\Phi_{x,\beta}\bigl(\ctr_\beta(x),\rad_\beta(x)\bigr)=0.
\end{equation}
\end{definition}

By Lemma~\ref{lem:softgrad}, stationarity approximately enforces
\(f_{x,\beta}(\ctr_\beta+\rad_\beta)\approx f_{x,\beta}(\ctr_\beta-\rad_\beta)\),
i.e., the classical HDI boundary-balance condition but applied to the \(\beta\)-smoothed density \(f_{x,\beta}\).
We would expect this \(\beta\)-soft target to approach the oracle HDI, which will be described by Lemma~\ref{lem:beta_consistency} in the next section.

\subsection{Asymptotic efficiency and conditional coverage}\label{sec:asymp}

Let \(n\) and \(n_{\mathrm{cal}}\) denote the sizes of the training and calibration datasets, respectively. 
Building upon the population-level properties established above, we now present the asymptotic properties for the CoCP framework. 
These results characterize the behavior of the prediction intervals as the sample size grows, the learning errors vanish, and the temperature parameter \(\beta \to 0^+\).

\begin{assumption}[i.i.d.\ data]\label{as:iid_main}
The samples are i.i.d. from some unknown joint distribution.
\end{assumption}

\begin{assumption}[density regularity]\label{as:reg_main}
For every \(x\), the conditional distribution \(Y\mid X=x\) admits a Lebesgue density \(f_x\). For every \(x\), \(f_x(y)\) is continuous. Moreover, \(\sup_{x,y} f_x(y)\le f_{\max}<\infty\).
\end{assumption}

\begin{assumption}[unimodality and unique interior HDI]\label{as:uni_main}
For every \(x\), \(f_x\) is unimodal and the unique $(1-\alpha)$-HDI has endpoints
$\ell^\star(x),u^\star(x)$ that lie in the interior of the support of $f_x$.
\end{assumption}

\begin{assumption}[non-degenerate learned functions]\label{as:bounded_main}
There exist constants \(0<h_{\min}\le h_{\max}<\infty\) and \(M<\infty\) such that,
with probability tending to one (over \(\mathcal{D}_{\mathrm{train}}\)),
for a fresh \(X\) independent of \(\mathcal{D}_{\mathrm{train}}\),
\(\widehat\rad(X)\in[h_{\min},h_{\max}],\)
and
\(|\widehat\ctr(X)|\le M\).
\end{assumption}

\begin{assumption}[local regularity of the score]\label{as:score_main}
Let \(S \coloneqq |Y-\widehat\ctr(X)|/\widehat\rad(X)\) for a fresh \((X,Y)\) independent of
\(\mathcal{D}_{\mathrm{cal}}\), conditional on \(\mathcal{D}_{\mathrm{train}}\).
There exist constants \(\delta_0>0\) and \(s_0>0\) such that, with probability tending to one,
the conditional density \(f_{S\mid \mathcal{D}_{\mathrm{train}}}\) exists and satisfies
\(f_{S\mid \mathcal{D}_{\mathrm{train}}}(t)\ge s_0\) for all \(t\in(1-\delta_0,1+\delta_0)\).
\end{assumption}

\begin{assumption}[consistency]\label{as:cons_main}
For any \(\beta>0\), there exist sequences \(\eta_{\ctr,n}\to 0\), \(\rho_{\ctr,n}\to 0\) and \(\eta_{\rad,n}\to 0\), \(\rho_{\rad,n}\to 0\)
such that, for a fresh \(X\),
\begin{equation}
\mathbb{P}\left(
\mathbb{E}\bigl[(\widehat\ctr(X)-\ctr_\beta(X))^2\mid \mathcal{D}_{\mathrm{train}}\bigr]\le \eta_{\ctr,n}
\right)\ge 1-\rho_{\ctr,n},
\end{equation}
\begin{equation}
\mathbb{P}\left(
\mathbb{E}\bigl[(\widehat\rad(X)-\psi_X(\widehat\ctr(X)))^2\mid \mathcal{D}_{\mathrm{train}}\bigr]\le \eta_{\rad,n}
\right)\ge 1-\rho_{\rad,n}.
\end{equation}
\end{assumption}

\begin{lemma}[Vanishing \(\beta\)-bias of the \(\beta\)-soft oracle]\label{lem:beta_consistency}
Fix \(x\in\mathcal{X}\) and let \(\ctr^\star(x)\) denote the oracle \((1-\alpha)\)-HDI center,
with oracle radius \(\rad^\star(x)\coloneqq \psi_x(\ctr^\star(x))\).
Under Assumptions~\ref{as:reg_main} and~\ref{as:uni_main}, any \(\beta\)-soft oracle pair
\((\ctr_\beta(x),\rad_\beta(x))\) in Definition~\ref{def:beta_oracle} satisfies
\begin{equation}
\ctr_\beta(x)\to \ctr^\star(x),
\qquad
\rad_\beta(x)\to \rad^\star(x),
\qquad
\text{as }\beta\to 0^+.
\end{equation}
\end{lemma}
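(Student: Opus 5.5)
We argue by compactness and uniqueness: every limit point of the \(\beta\)-soft oracle pairs must satisfy the exact HDI balance condition, and Assumption~\ref{as:uni_main} says only the oracle pair does. Fix \(x\) and a sequence \(\beta_k\to 0^+\) with soft oracle pairs \((c_k,r_k)\coloneqq(\ctr_{\beta_k}(x),\rad_{\beta_k}(x))\).

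\textbf{Step 1 (tightness).} Since \(r_k=\psi_x(c_k)\), we have \(\psi_x(c)\ge |c-\ctr^\star(x)|-C\) for a constant \(C\). Indeed, for \(|c|\) large the interval \([c-r,c+r]\) must reach back to the bulk of the mass. So it suffices to bound \(c_k\). This is the delicate point. The stationarity equation could in principle be satisfied far out in the tails, where both smoothed densities are nearly zero and so approximately balanced. I would try to exclude this using \(r_k=\psi_x(c_k)\). If \(c_k\to+\infty\), the interval \([c_k-r_k,c_k+r_k]\) contains mass \(1-\alpha\), so \(c_k-r_k\) stays bounded (near the \(\alpha\)-quantile level). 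The right endpoint \(c_k+r_k\to\infty\) then sits in the tail, where \(f_{x,\beta}\to0\).

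\begin{itemize}
\item If \(c_k-r_k\) converges to a point \(\ell\) with \(f_x(\ell)>0\), the stationarity relation gives \(f_{x,\beta_k}(c_k-r_k)\to 0\), a contradiction.
\item If instead \(c_k-r_k\) approaches a region of zero density (outside the support), the mass constraint fails.
\end{itemize}

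So the main obstacle is handling this boundary case cleanly. Assumption~\ref{as:uni_main} (unimodality, interior endpoints) should make the left endpoint land in the interior of the support, where \(f_x>0\) by continuity and unimodality.

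\textbf{Step 2 (remainder vanishes).} With \(c_k\) bounded, \(r_k\) is bounded. Also \(r_k\ge \psi_x\)'s global minimum \(\rad^\star(x)>0\), since an interval of zero length has zero mass. Lemma~\ref{lem:softgrad} then gives
\[
0=f_{x,\beta_k}(c_k+r_k)-f_{x,\beta_k}(c_k-r_k)+\varepsilon_k,\qquad |\varepsilon_k|\le 2f_{\max}\bigl(1-\sigma(\rad^\star(x)/\beta_k)\bigr)\to0.
\]

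\textbf{Step 3 (passing to the limit).} Take a convergent subsequence \((c_k,r_k)\to(\bar c,\bar r)\). Continuity of \(F_x\) makes \(\psi_x\) continuous, so \(\bar r=\psi_x(\bar c)\). Next, \(f_x\) is continuous and bounded and \(K_\beta\) is an approximate identity, so \(f_{x,\beta}\to f_x\) locally uniformly. Hence \(f_x(\bar c+\bar r)=f_x(\bar c-\bar r)\), with mass exactly \(1-\alpha\) on \([\bar c-\bar r,\bar c+\bar r]\).

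\textbf{Step 4 (uniqueness).} For unimodal \(f_x\), an interval with equal endpoint densities and mass \(1-\alpha\) is the HDI. The reason is that, by unimodality, the endpoints sit on opposite sides of the mode at a common level \(\lambda\), so \([\bar c-\bar r,\bar c+\bar r]=\{f_x\ge\lambda\}\) up to flat pieces. By uniqueness in Assumption~\ref{as:uni_main}, this interval equals \([\ell^\star,u^\star]\). If \(f_x\) is flat at level \(\lambda\), I would appeal to the uniqueness assumption to rule out ambiguity.

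Thus every subsequential limit equals \((\ctr^\star(x),\rad^\star(x))\). Combined with boundedness, this yields convergence of the full family.
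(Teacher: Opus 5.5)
Your argument is essentially the paper's, reorganized as a subsequence argument. The paper proves $\sup_c|D_{x,\beta}(c)-G_x(c)|\to0$, where $D_{x,\beta}(c)=\partial_c\Phi_{x,\beta}(c,\psi_x(c))$ and $G_x(c)=f_x(c+\psi_x(c))-f_x(c-\psi_x(c))$. It gets this from global uniform convergence $f_{x,\beta}\to f_x$ (via uniform continuity of a continuous unimodal density) and the remainder bound with $r_{\min}=(1-\alpha)/(2f_{\max})$. It then shows the zero of $G_x$ is well separated, using the same tail argument as your Step~1 ($\limsup_{c\to\infty}G_x(c)<0$, and symmetrically) plus compactness and uniqueness. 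Your route needs only local uniform convergence and the decay of $f_x$ at $\pm\infty$, with $\rad^\star(x)>0$ as the radius floor, which is a fine trade.

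Two small fixes in Step~1:
\begin{itemize}
\item Bounded $c_k$ gives bounded $r_k$ because $\psi_x$ is $1$-Lipschitz. Your bound $\psi_x(c)\ge|c-\ctr^\star(x)|-C$ points the wrong way for this.
\item The ``boundary case'' never occurs. Since $F_x(c_k-r_k)=F_x(c_k+r_k)-(1-\alpha)\to\alpha$, you get $c_k-r_k\to q_\alpha$. The set $\{f_x>0\}$ is an open interval off which $F_x$ is constant, so $F_x(q_\alpha)=\alpha\in(0,1)$ gives $f_x(q_\alpha)>0$.
\end{itemize}

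The genuine gap is Step~4's flat case, and appealing to uniqueness of the HDI does not fix it. Let $f_x$ rise to a sharp peak, drop to a plateau at level $\lambda<\max f_x$ that alone carries mass $1-\alpha$, then decay. The HDI is unique and interior: it is the part above $\lambda$ plus an adjacent piece of the plateau. Yet the plateau interval $[\ell_p,u_p]$ has midpoint $c_p$ with $\psi_x(c_p)=(u_p-\ell_p)/2$ and $f_x(\ell_p)=f_x(u_p)=\lambda$. So $G_x(c_p)=0$ with $c_p\neq\ctr^\star(x)$, and knowing only that limit points balance the endpoint densities cannot exclude $c_p$.

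The paper's uniqueness-of-balance-point step makes the same leap (``$f_x\ge t$ on $[\ell,u]$, hence a level-set interval''). So you are no worse off than the paper, but neither argument covers this case. You can close it in either of two ways:
\begin{itemize}
\item Assume strict monotonicity on each side of the mode. Then $f_x(\ell)=f_x(u)$ forces $\ell<M<u$, and your level-set argument is correct.
\item Rule out such limits using the soft gradient itself. Write
\[
\partial_c\Phi_{x,\beta}(c,r)=\int_0^\infty\bigl(f_x(c+z)-f_x(c-z)\bigr)\,\beta^{-1}\sigma'\bigl((r-z)/\beta\bigr)\,dz .
\]
Suppose $c-r$ lies strictly to the right of the modal set. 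Then the integrand is $\le0$ until $c-z$ has crossed the mode and fallen below $f_x(c-r)$, and it is strictly negative near the mode. Positive contributions occur only at strictly larger $z$, where the kernel is smaller by a factor of order $e^{-\mathrm{gap}/\beta}$. Hence the soft derivative is strictly negative for small $\beta$ near such a configuration (symmetrically on the left), contradicting stationarity.
\end{itemize}
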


\begin{theorem}[Asymptotic optimal length and conditional coverage]\label{thm:cocp_asymp}
Under Assumptions~\ref{as:iid_main}--\ref{as:cons_main}, and along any deterministic sequence
\(\beta=\beta_n\to 0^+\), the split-calibrated CoCP interval
\(\widehat C(x)\) satisfies:

\noindent(i) \emph{Calibration factor vanishes asymptotically:}
\begin{equation}
|\widehat q-1|
=
O_{\mathbb{P}}\left(\sqrt{\eta_{\rad,n}}+\sqrt{\frac{\log n_{\mathrm{cal}}}{n_{\mathrm{cal}}}}\right),
\qquad\text{hence}\quad \widehat q\xrightarrow{\mathbb{P}} 1.
\end{equation}

\noindent(ii) \emph{Asymptotic conditional coverage:}
there exist random sets \(\Lambda_n\subseteq\mathcal{X}\) with \(\mathbb{P}(X\in\Lambda_n)\to 1\) such that
\begin{equation}
\sup_{x\in\Lambda_n}
\left|\mathbb{P}\bigl(Y\in\widehat C(x)\mid X=x\bigr)-(1-\alpha)\right|
\ \xrightarrow{\mathbb{P}}\ 0.
\end{equation}

\noindent(iii) \emph{Oracle-efficient length:} for a fresh \(X\),
\begin{equation}
|\widehat C(X)|-|C^\star(X)|\ \xrightarrow{\mathbb{P}}\ 0.
\end{equation}
Moreover, the gap to the oracle HDI admits the explicit pointwise decomposition
\begin{equation}\label{eq:length_gap_decomp}
\bigl||\widehat C(x)|-|C^\star(x)|\bigr|
\le
2|\widehat q-1|\,\widehat\rad(x)
+2|\widehat\rad(x)-\psi_x(\widehat\ctr(x))|
+2|\widehat\ctr(x)-\ctr_{\beta_n}(x)|
+2|\ctr_{\beta_n}(x)-\ctr^\star(x)|.
\end{equation}
\end{theorem}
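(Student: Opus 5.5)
I will prove the three parts in order. Each later part uses the earlier ones together with the pointwise decomposition \eqref{eq:length_gap_decomp}.

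\emph{Step 1: the calibration factor.} Condition on $\mathcal{D}_{\mathrm{train}}$ and let $G$ denote the conditional CDF of the fresh score $S=|Y-\widehat\ctr(X)|/\widehat\rad(X)$. By Assumption~\ref{as:reg_main}, $Y\mid X=x$ has a continuous density. Hence the event $\{S\le 1\}$ has conditional probability $F_x(\widehat\ctr+\widehat\rad)-F_x(\widehat\ctr-\widehat\rad)$. This differs from $1-\alpha=F_x(\widehat\ctr+\psi_x(\widehat\ctr))-F_x(\widehat\ctr-\psi_x(\widehat\ctr))$ by at most $2f_{\max}|\widehat\rad(x)-\psi_x(\widehat\ctr(x))|$. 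Integrating over $X$ and applying Cauchy--Schwarz with Assumption~\ref{as:cons_main} gives
\[
|G(1)-(1-\alpha)|\le 2f_{\max}\sqrt{\eta_{\rad,n}}
\]
with probability at least $1-\rho_{\rad,n}$. The DKW inequality gives $\sup_t|\widehat G_{n_{\mathrm{cal}}}(t)-G(t)|=O_{\mathbb{P}}(\sqrt{\log n_{\mathrm{cal}}/n_{\mathrm{cal}}})$, where $\widehat G_{n_{\mathrm{cal}}}$ is the empirical CDF of the calibration scores. The order statistic $\widehat q$ satisfies $\widehat G_{n_{\mathrm{cal}}}(\widehat q)\approx 1-\alpha+O(1/n_{\mathrm{cal}})$. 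Therefore $G(\widehat q)-G(1)=O_{\mathbb{P}}(\sqrt{\eta_{\rad,n}}+\sqrt{\log n_{\mathrm{cal}}/n_{\mathrm{cal}}})$. The density lower bound $s_0$ on $(1-\delta_0,1+\delta_0)$ from Assumption~\ref{as:score_main} inverts this bound and gives $|\widehat q-1|\le s_0^{-1}|G(\widehat q)-G(1)|$. Before inverting, I localize $\widehat q$ to that window by a monotonicity argument: the right-hand side eventually falls below $s_0\delta_0$.

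\emph{Step 2: conditional coverage.} For fixed $x$, the conditional coverage equals $F_x(\widehat\ctr+\widehat q\widehat\rad)-F_x(\widehat\ctr-\widehat q\widehat\rad)$. Comparing with $\psi_x(\widehat\ctr(x))$ as in Step 1 bounds its deviation from $1-\alpha$ by
\[
2f_{\max}\bigl(|\widehat q-1|\,h_{\max}+|\widehat\rad(x)-\psi_x(\widehat\ctr(x))|\bigr).
\]
Define
\[
\Lambda_n=\bigl\{x:\ \widehat\rad(x)\in[h_{\min},h_{\max}],\ |\widehat\rad(x)-\psi_x(\widehat\ctr(x))|\le \eta_{\rad,n}^{1/4}\bigr\}.
\]
By Markov's inequality on the $L^2$ bound, $\mathbb{P}(X\notin\Lambda_n\mid\mathcal{D}_{\mathrm{train}})\le \eta_{\rad,n}^{1/2}$ plus the failure probabilities of Assumptions~\ref{as:bounded_main} and~\ref{as:cons_main}. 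On $\Lambda_n$ the supremum is $O(|\widehat q-1|+\eta_{\rad,n}^{1/4})\to0$ in probability by Step 1.

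\emph{Step 3: length.} By construction $|\widehat C(x)|=2\widehat q\widehat\rad(x)$ and $|C^\star(x)|=2\psi_x(\ctr^\star(x))$. I telescope
\[
\widehat q\widehat\rad-\psi_x(\ctr^\star)=(\widehat q-1)\widehat\rad+(\widehat\rad-\psi_x(\widehat\ctr))+(\psi_x(\widehat\ctr)-\psi_x(\ctr_{\beta_n}))+(\psi_x(\ctr_{\beta_n})-\psi_x(\ctr^\star)).
\]
This yields \eqref{eq:length_gap_decomp} once $\psi_x$ is shown to be $1$-Lipschitz. That follows from the push--pull formula, since $|\psi_x'(c)|\le1$ whenever the endpoint densities are not both zero. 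Directly, shifting the center by $\delta$ changes the minimal covering radius by at most $|\delta|$, because $[c-r,c+r]\subseteq[c'-r-|\delta|,c'+r+|\delta|]$. That containment argument needs no smoothness.

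\emph{Convergence of the terms.} The first two terms vanish in probability by Step 1, Assumption~\ref{as:bounded_main} and Assumption~\ref{as:cons_main}. The last term vanishes pointwise by Lemma~\ref{lem:beta_consistency}, hence in probability for fresh $X$ by dominated convergence of $\min(1,|\ctr_{\beta_n}(X)-\ctr^\star(X)|)$.

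\emph{Main obstacle.} The hard term is the third, $|\widehat\ctr(X)-\ctr_{\beta_n}(X)|$, because Assumption~\ref{as:cons_main} is stated for each fixed $\beta$ while $\beta_n$ varies with $n$. I would read the assumption as holding along the sequence $\beta_n$, i.e.\ with rates chosen uniformly for the target $\ctr_{\beta_n}$. The term then vanishes by Chebyshev/Markov on the conditional $L^2$ bound. If that reading is not permitted, a diagonal argument selecting $\beta_n\to0$ slowly enough suffices. Step 1's localization of $\widehat q$ into the density window is the other delicate point.
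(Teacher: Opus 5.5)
Your proposal is correct and follows essentially the same route as the paper. Both arguments use the coverage-sensitivity bound (Lemma~\ref{lem:cov_sens_app}) with Cauchy--Schwarz to get $|G(1)-(1-\alpha)|\le 2f_{\max}\sqrt{\eta_{\rad,n}}$, then DKW plus the density floor $s_0$ to invert for $\widehat q$, a set $\Lambda_n$ built from the $L^2$ radius error, and the same four-term telescoping through the $1$-Lipschitz map $\psi_x$. Your containment argument for Lipschitzness is equivalent to the reverse-triangle-inequality proof of Lemma~\ref{lem:rad_lip_app}. Several of your steps make explicit what the paper's proof only uses implicitly: localizing $\widehat q$ into the density window by monotonicity, using dominated convergence for the $\beta_n$-bias term, and reading Assumption~\ref{as:cons_main} as holding along $\beta_n$. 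Note that your diagonal fallback would only cover sufficiently slowly decaying $\beta_n$, not \emph{any} sequence as the theorem states.
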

The decomposition in \eqref{eq:length_gap_decomp} elegantly decouples the sources of inefficiency:
\begin{itemize}[leftmargin=*]
\item The \textbf{radius estimation error} \(\lvert\widehat\rad(x)-\psi_x(\widehat\ctr(x))\rvert\), together with the calibration error \(\lvert\widehat q-1\rvert\),
primarily dictates \emph{conditional coverage} reliability.
\item The \textbf{center estimation error} \(\lvert\widehat\ctr(x)-\ctr_{\beta_n}(x)\rvert\) dictates how well learning matches the population target induced by the \(\beta_n\)-soft objective.
\item The final term \(\lvert\ctr_{\beta_n}(x)-\ctr^\star(x)\rvert\) is a \textbf{vanishing \(\beta_n\)-bias}:
it quantifies the discrepancy between the \(\beta_n\)-soft boundary-balancing target and the true HDI center.
When \(\beta_n\to 0^+\), this bias disappears.
\end{itemize}

\begin{remark}\label{rem:beta_rates_boundary}
Lemma~\ref{lem:beta_consistency} establishes that the $\beta$-soft oracle target converges to the oracle
$(1-\alpha)$-HDI as $\beta\to 0^+$, which is the only population-level ingredient needed to make the last
term in the length decomposition \eqref{eq:length_gap_decomp} vanish along any $\beta_n\to 0^+$.
In addition, one can sharpen the \emph{rate} of this $\beta$-bias under extra local smoothness assumptions.
Specifically, if $f_x$ is locally twice continuously differentiable in neighborhoods of the oracle HDI
endpoints and the endpoints are non-flat (e.g., $f_x'(\ell^\star(x))>0$ and $f_x'(u^\star(x))<0$), then
kernel symmetry implies a \emph{second-order} interior bias:
\begin{equation}
|\ctr_\beta(x)-\ctr^\star(x)| = O(\beta^2),
\qquad
|\rad_\beta(x)-\rad^\star(x)| = O(\beta^2).
\end{equation}
By contrast, when the oracle HDI is pinned at an active support boundary, the $\beta$-soft oracle remains
consistent but exhibits an explicit \emph{first-order} $O(\beta)$ boundary shift.
In both regimes, the bias still vanishes as long as $\beta_n\to 0^+$, hence the asymptotic optimality and
conditional coverage conclusions of Theorem~\ref{thm:cocp_asymp} remain unchanged.
We defer the precise statements and proofs to Appendix~\ref{app:beta_rates_boundary}.
\end{remark}

\section{Experiments}\label{sec:experiments}

\paragraph{Protocol and implementation.}
We evaluate CoCP on (i) controlled synthetic benchmarks with known conditional distributions and (ii) real-world regression datasets.
Unless otherwise stated, we target nominal coverage \(1-\alpha=0.9\) (i.e., \(\alpha=0.1\)) and report mean (standard deviation) over 10 independent random splits.
Each split follows a partition of the available data into train/calibration/test with ratios 60\%/20\%/20\%. And 16.7\% of the training set is used for validation and early stopping, while the calibration set is used for conformal calibration.
To ensure a fair comparison, \emph{all} learning-based components across methods use the same backbone architecture: a two-hidden-layer MLP with 64 ReLU units per layer.
All models are trained with early stopping (patience 100, max 1000 epochs) and mini-batches of size 512.
More implementation details are provided in Appendix~\ref{app:exp-details}.

\paragraph{Baselines.}
We compare against standard residual-based split conformal prediction (Split \cite{papadopoulos2002inductive}), CQR \cite{romano2019conformalized}, distributional-set baselines (C-HDR \cite{izbicki2022cd, dheur2025aunified}, CHR \cite{sesia2021conformal}), and conditional-coverage-oriented methods (CPL \cite{kiyani2024length}, CPCP \cite{chen2026colorful}, RCP \cite{plassier2025rectifying}). Due to the prohibitive computational cost of RKHS-based conditional conformal method \cite{gibbs2025conformal}, we include a simplified version without test imputation, denoted as Naive CC, exclusively on synthetic data.

\paragraph{Metrics.}
We always report marginal coverage and mean interval width (Length).
To quantify conditional reliability, on synthetic data we exploit the known conditional CDF to compute the exact conditional coverage at each test covariate and report the mean absolute deviation from the target \(1-\alpha\) (ConMAE).
On real data, where the conditional distribution is unknown, we report diagnostic surrogates: Mean Squared Conditional Error (MSCE \cite{kiyani2024conformal}), Worst-Slice Coverage (WSC \cite{cauchois2021knowing, romano2020classification}), and excess risk of the target coverage (ERT \cite{braun2025conditional}), including \(\ell_1\)-ERT and \(\ell_2\)-ERT.
Formal definitions and estimator details are provided in Appendix~\ref{app:metrics}.

\subsection{Synthetic data}\label{subsec:exp_synthetic}

\paragraph{Setup.} 
We define a shared covariate distribution $X \sim \mathrm{Unif}([-2, 2])$ and two structural functions: a location function $\theta(x) = \frac{1}{2} \sin(1.5x)$ and a heteroscedastic scale function $s(x) = 0.15 + 0.25x^2$. The response variable is generated as $Y = \theta(X) + \eta(X, \epsilon)$, where the noise term $\eta$ instantiates three distinct families to cover symmetric, moderately skewed, and highly skewed conditional distributions:
\begin{itemize}
    \item \textbf{Normal}: $\eta = s(X) \cdot \epsilon$ with $\epsilon \sim \mathcal{N}(0, 1)$.
    \item \textbf{Exponential}: $\eta \sim \mathrm{Exp}(\lambda = 1/s(X))$, shifting the support to $[\theta(X), \infty)$.
    \item \textbf{Log-Normal}: $\eta = s(X)(\epsilon - e^{-\sigma^2})$ with $\epsilon \sim \mathrm{LogNormal}(0, \sigma^2)$, where the mode is at $\theta(X)$.
\end{itemize}
We set $n=20,000$ and $\sigma=0.6$ for all experiments.
For reference, we include the \textbf{Oracle} interval, defined as the true \((1-\alpha)\)-HDI under the known conditional distribution.

\paragraph{Main results.}
Table~\ref{tab:synthetic_results} summarizes the results.
Across all three synthetic benchmarks, CoCP attains the \emph{smallest} conditional-coverage error (ConMAE), indicating the most reliable conditional behavior among practical methods while maintaining nominal marginal coverage.

The gains of co-optimizing \emph{translation} and \emph{scaling} are most pronounced under skewness.
On LogNormal noise, CoCP substantially reduces average length relative to equal-tailed baselines (e.g., a \(\approx 13\%\) reduction vs.\ \textsc{CQR}) while also nearly halving ConMAE.
On the highly skewed Exponential benchmark, CoCP yields the tightest non-oracle intervals, reducing length over \textsc{CQR} by \(\approx 20\%\) and reducing ConMAE by \(\approx 60\%\).
These improvements align with our geometric motivation: when endpoint densities are imbalanced, \emph{re-centering} the interval is essential for approaching HDI-like efficiency.

\paragraph{Qualitative geometry.}
Figures~\ref{fig:visual_normal}--\ref{fig:visual_exponential} visualize learned intervals against the oracle HDI.
Under symmetric Gaussian noise (Figure~\ref{fig:visual_normal}), most adaptive methods achieve comparable geometry, and CoCP remains competitive.
Under skewed LogNormal and Exponential noise (Figures~\ref{fig:visual_lognormal} and~\ref{fig:visual_exponential}), center-invariant constructions exhibit systematic mis-centering, whereas CoCP explicitly shifts the center toward higher-density regions and contracts the radius accordingly, yielding a closer match to the oracle bounds.

\begin{table}[htbp]
\centering
\caption{Experimental results on synthetic datasets (mean \(\pm\) std over 10 splits, \(\alpha=0.1\)).
Lower \textbf{Length} and \textbf{ConMAE} are better.
The Oracle row reports the true \((1-\alpha)\)-HDI for reference.}
\label{tab:synthetic_results}
\vspace{2mm}
\begin{tabular}{llccc}
\toprule
\textbf{Dataset} & \textbf{Method} & \textbf{Coverage} & \textbf{Length (\(\downarrow\))} & \textbf{ConMAE (\(\downarrow\))} \\
\midrule

\multirow{10}{*}{\textbf{Normal}} 
 & Oracle   & 0.9000 (0.0000) & 1.5922 (0.0101) & 0.0000 (0.0000) \\
 & Split    & 0.8995 (0.0050) & 1.8735 (0.0404) & 0.1119 (0.0030) \\
 & CPL      & 0.9004 (0.0103) & \textbf{1.5055 (0.0456)} & 0.0768 (0.0071) \\
 & Naive CC & 0.9001 (0.0033) & 1.5902 (0.0159) & 0.0093 (0.0021) \\
 & CQR      & 0.9016 (0.0041) & 1.5989 (0.0165) & 0.0095 (0.0026) \\
 & CPCP     & 0.9026 (0.0120) & 1.6050 (0.0335) & 0.0180 (0.0051) \\
 & RCP      & 0.9007 (0.0048) & 1.5973 (0.0367) & 0.0142 (0.0029) \\
 & C-HDR    & 0.9014 (0.0038) & 1.6012 (0.0280) & 0.0084 (0.0014) \\
 & CHR      & 0.9029 (0.0024) & 1.6228 (0.0150) & 0.0107 (0.0048) \\
 & CoCP     & 0.9002 (0.0039) & 1.5962 (0.0180) & \textbf{0.0066 (0.0015)} \\

\midrule
\multirow{10}{*}{\textbf{LogNormal}} 
 & Oracle   & 0.9000 (0.0000) & 0.9591 (0.0061) & 0.0000 (0.0000) \\
 & Split    & 0.9000 (0.0051) & 1.2377 (0.0301) & 0.0964 (0.0036) \\
 & CPL      & 0.9009 (0.0046) & 0.9739 (0.0227) & 0.0510 (0.0037) \\
 & Naive CC & 0.9001 (0.0028) & 0.9694 (0.0171) & 0.0078 (0.0025) \\
 & CQR      & 0.9010 (0.0035) & 1.1110 (0.0352) & 0.0122 (0.0050) \\
 & CPCP     & 0.9018 (0.0166) & 0.9955 (0.0461) & 0.0190 (0.0080) \\
 & RCP      & 0.9010 (0.0048) & 0.9723 (0.0273) & 0.0123 (0.0042) \\
 & C-HDR    & 0.8999 (0.0021) & 0.9752 (0.0201) & 0.0103 (0.0036) \\
 & CHR      & 0.9039 (0.0059) & 0.9983 (0.0213) & 0.0114 (0.0023) \\
 & CoCP     & 0.9007 (0.0032) & \textbf{0.9621 (0.0138)} & \textbf{0.0064 (0.0012)} \\

\midrule
\multirow{10}{*}{\textbf{Exponential}} 
 & Oracle   & 0.9000 (0.0000) & 1.1145 (0.0070) & 0.0000 (0.0000) \\
 & Split    & 0.8989 (0.0031) & 1.5679 (0.0217) & 0.0937 (0.0036) \\
 & CPL      & 0.8969 (0.0076) & 1.1530 (0.0409) & 0.0345 (0.0061) \\
 & Naive CC & 0.8962 (0.0029) & 1.2270 (0.0431) & 0.0092 (0.0026) \\
 & CQR      & 0.8975 (0.0031) & 1.4222 (0.0171) & 0.0172 (0.0042) \\
 & CPCP     & 0.8926 (0.0108) & 1.2215 (0.0622) & 0.0160 (0.0068) \\
 & RCP      & 0.9000 (0.0048) & 1.2468 (0.0362) & 0.0111 (0.0035) \\
 & C-HDR    & 0.8964 (0.0028) & 1.2359 (0.0420) & 0.0074 (0.0022) \\
 & CHR      & 0.9019 (0.0025) & 1.1662 (0.0157) & 0.0113 (0.0023) \\
 & CoCP     & 0.8973 (0.0029) & \textbf{1.1351 (0.0142)} & \textbf{0.0069 (0.0015)} \\

\bottomrule
\end{tabular}
\end{table}

\begin{figure}[htbp]
    \centering
    \includegraphics[width=\textwidth]{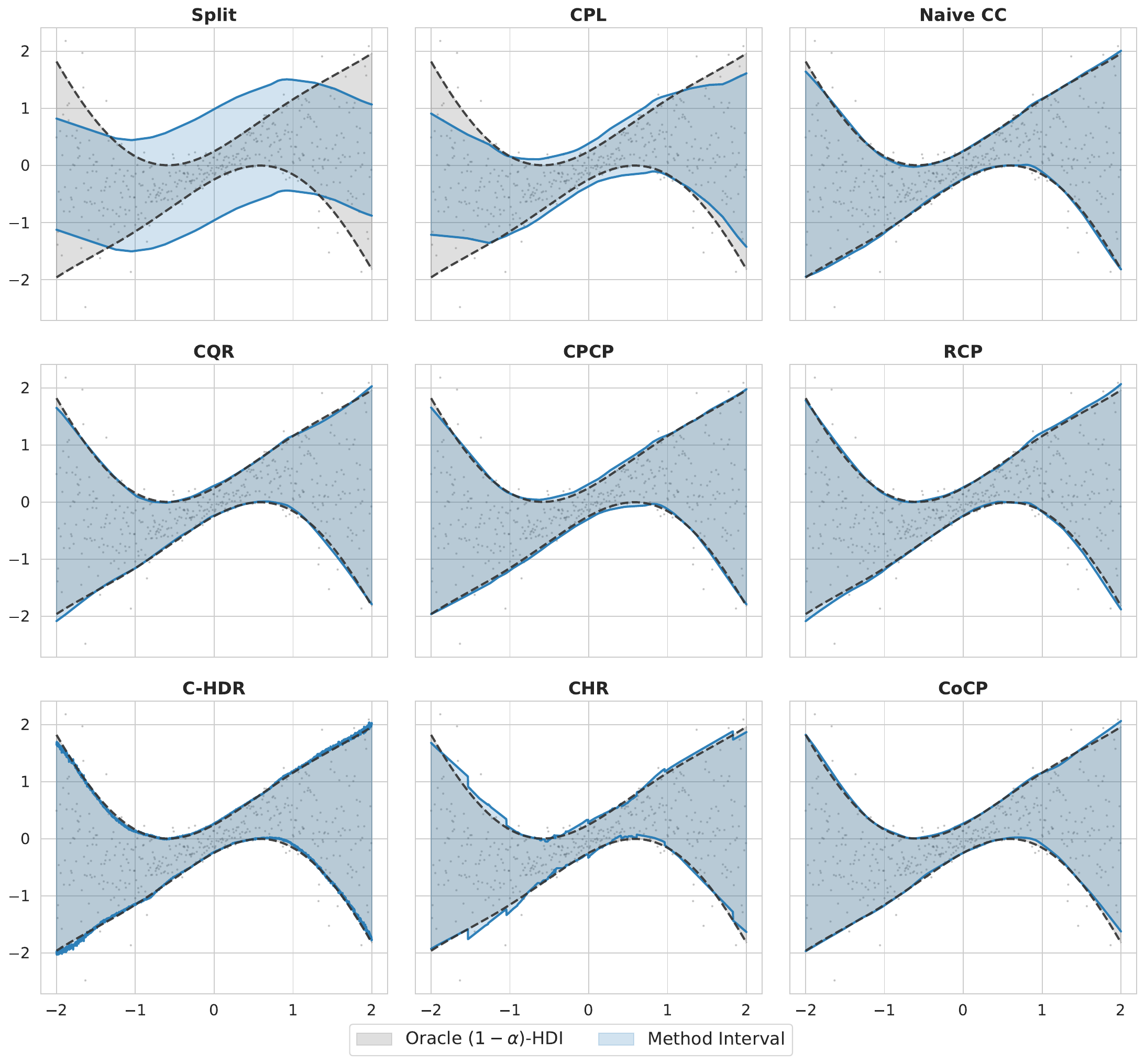}
    \caption{\textbf{Prediction intervals on the synthetic Normal dataset.} This visualization compares nine conformal methods under symmetric, heteroscedastic Gaussian noise. The gray area denotes the Oracle \((1-\alpha)\)-HDI. Most adaptive methods, including CoCP, successfully capture the symmetric uncertainty, while the Split baseline (top-left) produces overly conservative, non-adaptive intervals.}
    \label{fig:visual_normal}
\end{figure}

\begin{figure}[htbp]
    \centering
    \includegraphics[width=\textwidth]{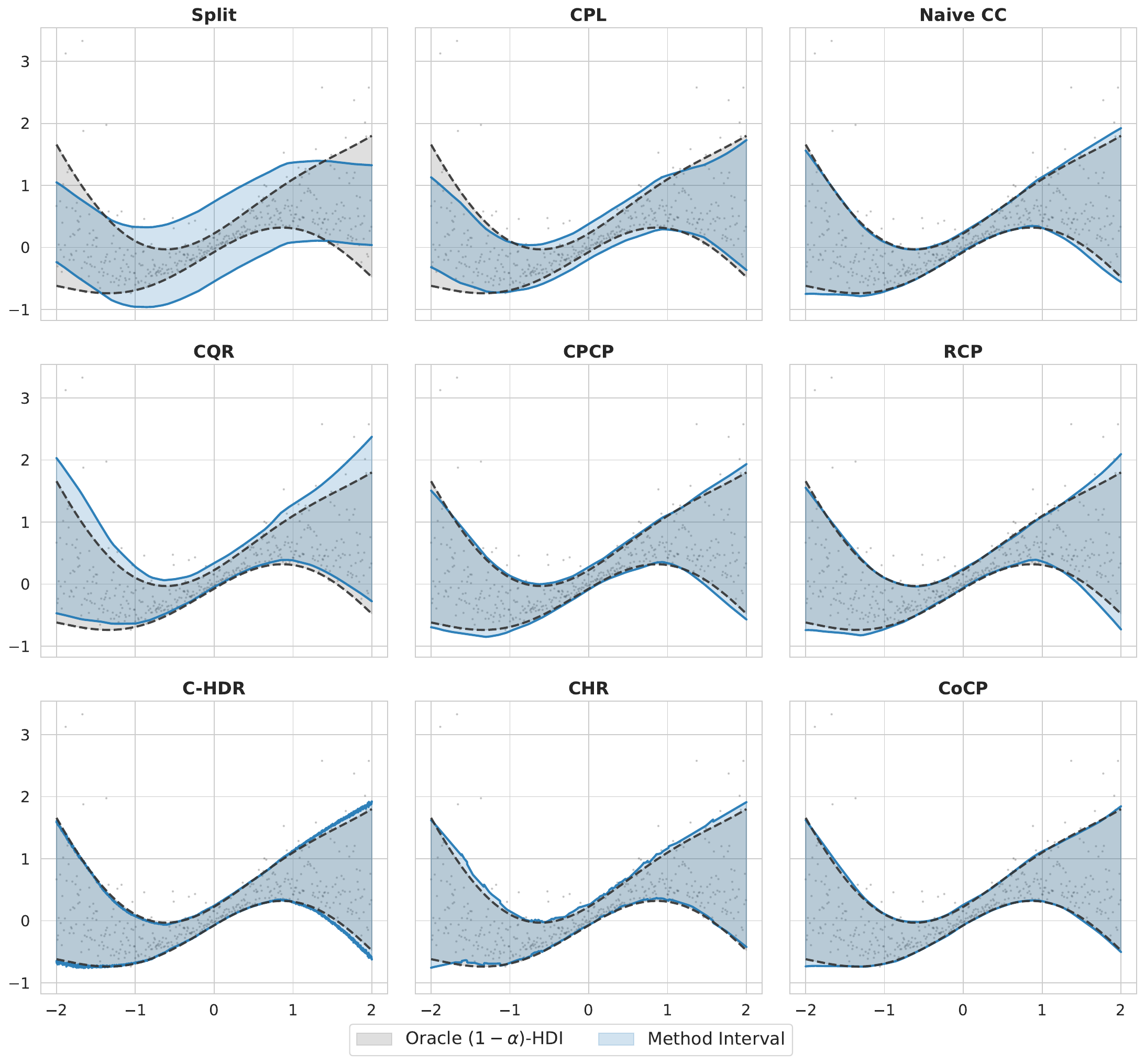}
    \caption{\textbf{Prediction intervals on the synthetic LogNormal dataset.} Under skewed conditional noise, the discrepancy between center-invariant methods (e.g., CQR) and the Oracle HDI (gray) becomes prominent. CoCP (bottom-right) demonstrates its ability to shift the interval center and scale the radius simultaneously, resulting in a geometry that closely aligns with the Oracle's bounds.}
    \label{fig:visual_lognormal}
\end{figure}

\begin{figure}[htbp]
    \centering
    \includegraphics[width=\textwidth]{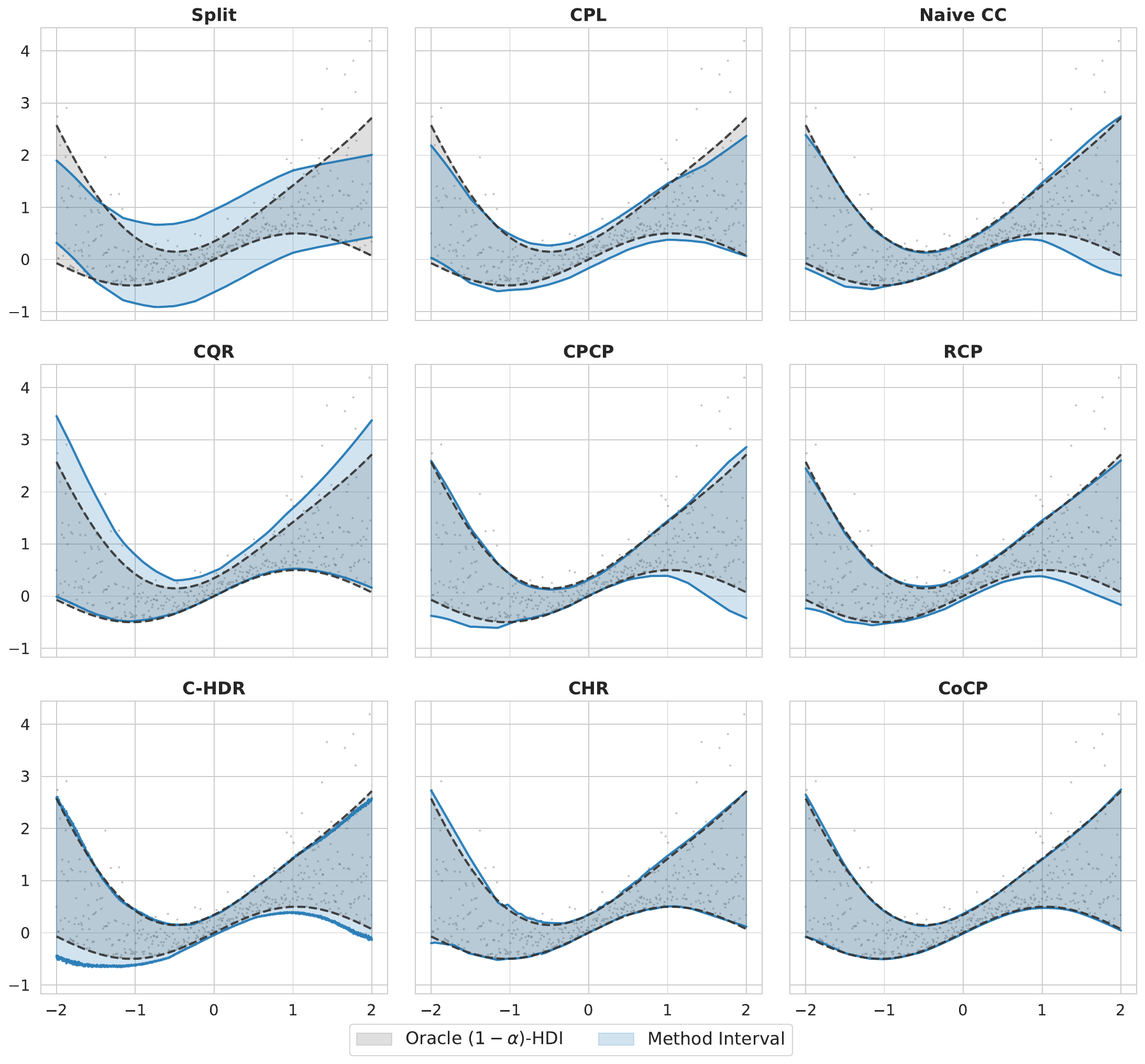}
    \caption{\textbf{Prediction intervals on the synthetic Exponential dataset.} This case represents extreme skewness. While methods like RCP remain centered around the conditional mean, CoCP effectively utilizes the boundary-balancing mechanism to push the interval toward the high-density region near the lower bound, achieving significantly shorter lengths while maintaining valid coverage.}
    \label{fig:visual_exponential}
\end{figure}

\subsection{Real data}\label{sec:exp-real}

\paragraph{Setup.}
We evaluate on seven commonly used real-world regression datasets spanning different feature dimensions and noise characteristics:
\textit{bike}, \textit{bio}, \textit{blog}, \textit{facebook-1}, \textit{facebook-2}, \textit{homes}, and \textit{supercon.} (summarized in Table~\ref{tab:dataset_summary}).
We follow the same split protocol and neural architecture standardization as above.
Because true conditional coverage is unavailable, we report both efficiency measure (length) and conditional-reliability diagnostics (MSCE/WSC/ERT).

\begin{table}[htbp]
\centering
\caption{Overview of the real-world datasets used in our study. $n$ and $d$ represent the number of samples and features, respectively.}
\label{tab:dataset_summary}
\vspace{2mm}
\begin{tabular}{l p{6.5cm} r r c}
\toprule
\textbf{Dataset} & \textbf{Description} & \textbf{Samples ($n$)} & \textbf{Dim ($d$)} & \textbf{Ref.} \\
\midrule
\textit{bike} & Hourly bike sharing demand & 10,886 & 18 & \cite{bike} \\
\textit{bio} & Protein tertiary structure properties & 45,730 & 9 & \cite{bio} \\
\textit{blog} & Blog feedback volume prediction & 52,397 & 280 & \cite{blog} \\
\textit{facebook-1} & Facebook comment volume (Variant 1) & 40,948 & 53 & \cite{facebook} \\
\textit{facebook-2} & Facebook comment volume (Variant 2) & 81,311 & 53 & \cite{facebook} \\
\textit{homes} & King County house sales prices & 21,613 & 19 & \cite{homes} \\
\textit{supercon.} & Superconductivity critical temperature & 21,263 & 81 & \cite{superconductivty} \\
\bottomrule
\end{tabular}
\end{table}

\paragraph{Main results.}
Table~\ref{tab:real_world_results} reports the results.
CoCP achieves the shortest intervals on 5 out of 7 datasets while maintaining coverage close to the nominal level.
On the remaining two datasets (\textit{blog} and \textit{facebook-2}), CHR attains slightly shorter intervals but CoCP demonstrates markedly superior conditional reliability (substantially lower MSCE and ERT, and higher WSC), suggesting that CoCP refines the spatial distribution of coverage, achieving a better efficiency-reliability trade-off than methods that focus solely on minimizing average width.

Beyond length, CoCP consistently improves conditional diagnostics:
it attains the lowest \(\ell_1\)-ERT on 6 out of 7 datasets (and is competitive on the remaining one),
and the lowest \(\ell_2\)-ERT on all datasets up to small-sample estimation noise.
This pattern supports the central claim of CoCP that coupling center translation with radius learning reduces systematic local miscoverage that is otherwise detectable by conditional-coverage auditors, while preserving the marginal conformal guarantee.

\begin{table}[htbp]
\centering
\caption{Experimental results on real-world datasets (mean \(\pm\) std over 10 splits, \(\alpha=0.1\)).
We report coverage and average length, plus conditional-coverage diagnostics (MSCE, WSC at \(\delta=0.1\), and \(\ell_1/\ell_2\)-ERT).
Best values per dataset/metric (excluding Coverage) are highlighted in bold.
}
\label{tab:real_world_results}
\vspace{2mm}
\resizebox{\textwidth}{!}{
\begin{tabular}{llcccccc}
\toprule
\textbf{Dataset} & \textbf{Method} & \textbf{Coverage} & \textbf{Length (\(\downarrow\))} & \textbf{MSCE (\(\downarrow\))} & \textbf{WSC (\(\uparrow\))} & \textbf{\(\ell_1\)-ERT (\(\downarrow\))} & \textbf{\(\ell_2\)-ERT (\(\downarrow\))} \\
\midrule
\multirow{8}{*}{\textit{bike}}
 & Split & 0.8980 (0.0111) & 0.7700 (0.0360) & 0.0041 (0.0016) & 0.6974 (0.0259) & 0.0663 (0.0063) & 0.0080 (0.0022) \\
 & CPL   & 0.9146 (0.0100) & 0.7938 (0.0373) & 0.0021 (0.0015) & 0.7800 (0.0515) & 0.0369 (0.0087) & 0.0022 (0.0023) \\
 & CQR   & 0.8993 (0.0040) & 0.7856 (0.1255) & 0.0018 (0.0006) & 0.7573 (0.0180) & 0.0350 (0.0127) & 0.0020 (0.0015) \\
 & CPCP  & 0.9030 (0.0162) & 0.7475 (0.0515) & 0.0010 (0.0003) & \textbf{0.7933 (0.0251)} & 0.0206 (0.0090) & 0.0003 (0.0007) \\
 & RCP   & 0.9014 (0.0096) & 0.7497 (0.0474) & 0.0013 (0.0006) & 0.7787 (0.0158) & 0.0250 (0.0125) & 0.0008 (0.0009) \\
 & C-HDR & 0.8994 (0.0091) & 0.7610 (0.0325) & 0.0015 (0.0005) & 0.7752 (0.0114) & 0.0258 (0.0136) & 0.0013 (0.0014) \\
 & CHR   & 0.9067 (0.0122) & 0.7018 (0.0349) & 0.0013 (0.0004) & 0.7919 (0.0218) & 0.0279 (0.0095) & 0.0011 (0.0007) \\
 & CoCP  & 0.8996 (0.0061) & \textbf{0.6606 (0.0201)} & \textbf{0.0008 (0.0004)} & 0.7917 (0.0110) & \textbf{0.0181 (0.0080)} & \textbf{0.0001 (0.0007)} \\
\midrule
\multirow{8}{*}{\textit{bio}}
 & Split & 0.9032 (0.0049) & 1.7584 (0.0301) & 0.0006 (0.0002) & 0.8258 (0.0070) & 0.0241 (0.0044) & 0.0009 (0.0003) \\
 & CPL   & 0.9057 (0.0081) & 1.6358 (0.0449) & 0.0005 (0.0002) & 0.8438 (0.0130) & 0.0161 (0.0042) & 0.0004 (0.0003) \\
 & CQR   & 0.8986 (0.0052) & 1.5251 (0.0164) & 0.0003 (0.0001) & 0.8455 (0.0079) & 0.0088 (0.0028) & 0.0001 (0.0001) \\
 & CPCP  & 0.9048 (0.0072) & 1.6452 (0.0413) & 0.0004 (0.0002) & \textbf{0.8463 (0.0119)} & 0.0134 (0.0049) & 0.0002 (0.0002) \\
 & RCP   & 0.9013 (0.0042) & 1.6355 (0.0265) & 0.0003 (0.0003) & 0.8403 (0.0090) & \textbf{0.0074 (0.0063)} & 0.0003 (0.0004) \\
 & C-HDR & 0.9019 (0.0044) & 1.5598 (0.0292) & 0.0006 (0.0004) & 0.8330 (0.0148) & 0.0177 (0.0069) & 0.0006 (0.0004) \\
 & CHR   & 0.9017 (0.0072) & 1.5145 (0.0350) & 0.0003 (0.0001) & 0.8460 (0.0097) & 0.0106 (0.0044) & 0.0001 (0.0001) \\
 & CoCP  & 0.9017 (0.0038) & \textbf{1.4140 (0.0133)} & \textbf{0.0002 (0.0001)} & 0.8446 (0.0098) & 0.0086 (0.0056) & \textbf{0.0001 (0.0001)} \\
\midrule
\multirow{8}{*}{\textit{blog}}
 & Split & 0.8998 (0.0040) & 3.7416 (0.0604) & 0.0048 (0.0007) & 0.7349 (0.0140) & 0.0726 (0.0043) & 0.0068 (0.0015) \\
 & CPL   & 0.9179 (0.0069) & 3.7549 (0.1810) & 0.0012 (0.0004) & \textbf{0.8503 (0.0093)} & 0.0377 (0.0060) & -0.0000 (0.0008) \\
 & CQR   & 0.8973 (0.0027) & 2.7947 (0.1018) & 0.0014 (0.0004) & 0.8087 (0.0101) & 0.0300 (0.0048) & -0.0007 (0.0007) \\
 & CPCP  & 0.8992 (0.0061) & 3.3285 (0.0708) & 0.0005 (0.0003) & 0.8328 (0.0137) & 0.0353 (0.0075) & 0.0003 (0.0013) \\
 & RCP   & 0.8980 (0.0043) & 3.3837 (0.0917) & 0.0008 (0.0004) & 0.8154 (0.0130) & 0.0359 (0.0037) & 0.0009 (0.0009) \\
 & C-HDR & 0.8977 (0.0025) & 3.1149 (0.2137) & 0.0011 (0.0008) & 0.8223 (0.0220) & 0.0271 (0.0070) & -0.0005 (0.0012) \\
 & CHR   & 0.9093 (0.0049) & \textbf{2.5531 (0.0744)} & 0.0006 (0.0003) & 0.8479 (0.0117) & 0.0208 (0.0047) & -0.0018 (0.0006) \\
 & CoCP  & 0.8989 (0.0044) & 2.7080 (0.1242) & \textbf{0.0004 (0.0001)} & 0.8393 (0.0071) & \textbf{0.0177 (0.0029)} & \textbf{-0.0019 (0.0005)} \\
\midrule
\multirow{8}{*}{\textit{facebook-1}}
 & Split & 0.9012 (0.0053) & 2.1838 (0.0341) & 0.0021 (0.0004) & 0.7814 (0.0154) & 0.0547 (0.0051) & 0.0037 (0.0009) \\
 & CPL   & 0.8993 (0.0069) & 1.8935 (0.0756) & 0.0004 (0.0002) & 0.8425 (0.0090) & 0.0172 (0.0061) & 0.0002 (0.0005) \\
 & CQR   & 0.8984 (0.0060) & 1.7500 (0.0462) & 0.0008 (0.0003) & 0.8144 (0.0099) & 0.0249 (0.0074) & 0.0006 (0.0007) \\
 & CPCP  & 0.8991 (0.0053) & 1.9626 (0.0567) & 0.0005 (0.0003) & 0.8252 (0.0197) & 0.0235 (0.0108) & 0.0005 (0.0011) \\
 & RCP   & 0.8998 (0.0049) & 1.9151 (0.0394) & 0.0003 (0.0001) & 0.8395 (0.0117) & 0.0146 (0.0049) & 0.0000 (0.0004) \\
 & C-HDR & 0.9006 (0.0027) & 1.8299 (0.0381) & 0.0007 (0.0005) & 0.8287 (0.0148) & 0.0158 (0.0074) & 0.0002 (0.0008) \\
 & CHR   & 0.9059 (0.0052) & 1.6920 (0.0344) & 0.0006 (0.0002) & 0.8297 (0.0074) & 0.0188 (0.0080) & -0.0001 (0.0003) \\
 & CoCP  & 0.8993 (0.0046) & \textbf{1.6269 (0.0284)} & \textbf{0.0002 (0.0001)} & \textbf{0.8472 (0.0065)} & \textbf{0.0087 (0.0045)} & \textbf{-0.0004 (0.0003)} \\
\midrule
\multirow{8}{*}{\textit{facebook-2}}
 & Split & 0.8991 (0.0026) & 2.0821 (0.0192) & 0.0015 (0.0004) & 0.7964 (0.0080) & 0.0560 (0.0026) & 0.0035 (0.0006) \\
 & CPL   & 0.8973 (0.0034) & 1.7975 (0.0381) & 0.0002 (0.0001) & 0.8482 (0.0093) & 0.0155 (0.0037) & 0.0003 (0.0003) \\
 & CQR   & 0.8977 (0.0036) & 1.6684 (0.0315) & 0.0006 (0.0004) & 0.8269 (0.0124) & 0.0246 (0.0057) & 0.0007 (0.0005) \\
 & CPCP  & 0.8994 (0.0026) & 1.8508 (0.0430) & 0.0003 (0.0002) & 0.8433 (0.0148) & 0.0208 (0.0098) & 0.0006 (0.0008) \\
 & RCP   & 0.8996 (0.0031) & 1.8083 (0.0380) & 0.0003 (0.0001) & 0.8477 (0.0089) & 0.0160 (0.0028) & 0.0004 (0.0002) \\
 & C-HDR & 0.8986 (0.0017) & 1.7364 (0.0254) & 0.0004 (0.0002) & 0.8402 (0.0096) & 0.0206 (0.0045) & 0.0005 (0.0003) \\
 & CHR   & 0.9020 (0.0032) & \textbf{1.4582 (0.0400)} & 0.0005 (0.0002) & 0.8361 (0.0089) & 0.0230 (0.0033) & 0.0004 (0.0002) \\
 & CoCP  & 0.8987 (0.0021) & 1.5156 (0.0188) & \textbf{0.0001 (0.0001)} & \textbf{0.8554 (0.0042)} & \textbf{0.0113 (0.0024)} & \textbf{-0.0001 (0.0001)} \\
\midrule
\multirow{8}{*}{\textit{homes}}
 & Split & 0.9000 (0.0087) & 0.5964 (0.0287) & 0.0112 (0.0016) & 0.5788 (0.0302) & 0.0931 (0.0044) & 0.0191 (0.0022) \\
 & CPL   & 0.9087 (0.0119) & 0.6331 (0.0377) & 0.0009 (0.0004) & 0.7909 (0.0335) & 0.0357 (0.0093) & 0.0021 (0.0015) \\
 & CQR   & 0.9034 (0.0059) & 0.5664 (0.0189) & 0.0007 (0.0005) & 0.8192 (0.0158) & 0.0118 (0.0078) & 0.0003 (0.0006) \\
 & CPCP  & 0.8978 (0.0094) & 0.5888 (0.0212) & 0.0009 (0.0007) & 0.7984 (0.0413) & 0.0201 (0.0117) & 0.0007 (0.0012) \\
 & RCP   & 0.8992 (0.0047) & 0.5845 (0.0204) & 0.0006 (0.0004) & 0.8020 (0.0207) & 0.0169 (0.0082) & 0.0008 (0.0009) \\
 & C-HDR & 0.9003 (0.0043) & 0.5372 (0.0155) & 0.0007 (0.0003) & 0.8211 (0.0096) & 0.0086 (0.0061) & -0.0001 (0.0004) \\
 & CHR   & 0.9013 (0.0091) & 0.5428 (0.0165) & 0.0007 (0.0006) & 0.8199 (0.0152) & 0.0149 (0.0085) & 0.0002 (0.0007) \\
 & CoCP  & 0.9024 (0.0070) & \textbf{0.5257 (0.0093)} & \textbf{0.0003 (0.0002)} & \textbf{0.8270 (0.0127)} & \textbf{0.0070 (0.0096)} & \textbf{-0.0003 (0.0004)} \\
\midrule
\multirow{8}{*}{\textit{supercon.}}
 & Split & 0.9036 (0.0065) & 1.0186 (0.0442) & 0.0066 (0.0010) & 0.7061 (0.0246) & 0.0875 (0.0035) & 0.0088 (0.0009) \\
 & CPL   & 0.9068 (0.0134) & 0.9371 (0.0325) & 0.0012 (0.0005) & 0.8074 (0.0298) & 0.0333 (0.0093) & 0.0015 (0.0014) \\
 & CQR   & 0.9013 (0.0072) & 0.9059 (0.0345) & 0.0006 (0.0002) & 0.8094 (0.0104) & 0.0211 (0.0069) & 0.0000 (0.0007) \\
 & CPCP  & 0.8971 (0.0124) & 0.9186 (0.0426) & 0.0007 (0.0007) & 0.7998 (0.0371) & 0.0269 (0.0084) & 0.0007 (0.0015) \\
 & RCP   & 0.9049 (0.0069) & 0.9695 (0.0393) & 0.0007 (0.0004) & 0.8125 (0.0187) & 0.0236 (0.0113) & 0.0004 (0.0014) \\
 & C-HDR & 0.9048 (0.0063) & 1.0638 (0.0457) & \textbf{0.0003 (0.0002)} & 0.8202 (0.0129) & 0.0211 (0.0071) & -0.0000 (0.0005) \\
 & CHR   & 0.9131 (0.0063) & 0.9875 (0.0396) & 0.0013 (0.0004) & 0.8111 (0.0140) & 0.0287 (0.0053) & 0.0003 (0.0007) \\
 & CoCP  & 0.9033 (0.0054) & \textbf{0.7848 (0.0136)} & 0.0004 (0.0001) & \textbf{0.8244 (0.0046)} & \textbf{0.0142 (0.0050)} & \textbf{-0.0005 (0.0005)} \\
\bottomrule
\end{tabular}
}
\end{table}

\section{Conclusions}

This work presented CoCP, a conformal regression method that explicitly couples translation and scaling. The method is motivated by a folded-residual geometry: for any chosen center, the shortest interval achieving the nominal conditional mass is determined by a single boundary quantile of the absolute residual, and shifting the center toward the higher-density boundary reduces the required radius. CoCP turns this principle into a practical alternating procedure through quantile regression for the radius and boundary-local soft-coverage updates for the center, followed by a standard split-conformal calibration that preserves finite-sample marginal coverage. Crucially, this geometric alignment allows CoCP to theoretically approach the optimal highest-density interval. Empirically, this translates to a superior efficiency-reliability trade-off, as CoCP yields significantly tighter prediction bands and state-of-the-art conditional coverage diagnostics compared to existing baselines.

A natural direction is extending the co-optimization viewpoint beyond one-dimensional responses. In higher-dimensional outputs, one still would like to parameterize a center and a nonconformity-guided scale. However, many efficient multi-output conformal regression pipelines rely on generative or representation-based constructions \cite{dheur2025aunified}. For example, mapping data to a latent Gaussian space via conditional flows, or using auto-encoding representations. In such settings, the latent center may not correspond to an interpretable region in the original space, and translating the latent center need not be size-invariant after decoding because local geometric distortion (e.g., Jacobian variation or non-invertibility) can change the induced set volume and shape. This suggests two complementary paths: (1) direct output-space parameterizations (e.g., ellipsoids or other translation-invariant families) where center shifts do not alter size by construction, and (2) invertible generative parameterizations paired with objectives or regularizers that control distortion if efficiency is evaluated in the original space. Developing co-optimization schemes with principled geometry and efficiency guarantees for multivariate and structured outputs remains an important open problem.

\appendix
\numberwithin{equation}{section}
\section{Main Theoretical Proofs}\label{app:theory}
\subsection{HDI endpoint density equality}\label{app:hdi_endpoints}

\begin{proposition}\label{prop:hdi_endpoints}
Fix $x$ and assume $Y\mid X=x$ has a continuous density $f_x$.
Let $(\ell^\star(x),u^\star(x))$ be any solution to
\[
(\ell^\star(x),u^\star(x))
\in
\arg\min_{\ell<u}
\left\{u-\ell:\ \int_{\ell}^{u} f_x(y)\,dy \ge 1-\alpha\right\}.
\]
If the optimum is an interior solution (i.e., it does not occur at the boundary of the support), then the optimal interval has exact mass \(1-\alpha\), and its endpoints satisfy the density balance condition:
\[
f_x\bigl(\ell^\star(x)\bigr)=f_x\bigl(u^\star(x)\bigr).
\]
\end{proposition}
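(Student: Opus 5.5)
We would argue directly from the optimality of $(\ell^\star,u^\star)$ by first-order perturbation arguments, using only continuity of $f_x$ and the fundamental theorem of calculus. Write $G(\ell,u)\coloneqq\int_\ell^u f_x(y)\,dy = F_x(u)-F_x(\ell)$. Since $f_x$ is continuous, $G$ is continuously differentiable with $\partial_u G=f_x(u)$ and $\partial_\ell G=-f_x(\ell)$. The interior assumption will be read as saying that $f_x$ is positive on a neighbourhood of each endpoint, so both $f_x(\ell^\star)>0$ and $f_x(u^\star)>0$. This is precisely what the interior hypothesis buys us, and we would state it explicitly.

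\textbf{Step 1 (exact mass).} Suppose $G(\ell^\star,u^\star)>1-\alpha$. Because $G$ is continuous, a small $\epsilon>0$ gives $G(\ell^\star,u^\star-\epsilon)\ge 1-\alpha$. This interval is feasible and strictly shorter than $[\ell^\star,u^\star]$, which contradicts minimality. Hence $G(\ell^\star,u^\star)=1-\alpha$.

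\textbf{Step 2 (density balance).} Suppose, say, $f_x(u^\star)>f_x(\ell^\star)$; the other case is symmetric. Translate the interval to the right while letting its length vary: for small $t>0$ consider $[\ell^\star+t,\,u^\star+s(t)]$. First take $s=t$, which is a pure translation. Then
\[
G(\ell^\star+t,u^\star+t)-(1-\alpha)=\int_{u^\star}^{u^\star+t}f_x-\int_{\ell^\star}^{\ell^\star+t}f_x=t\bigl(f_x(u^\star)-f_x(\ell^\star)\bigr)+o(t)>0
\]
for small $t$, by continuity of $f_x$ at both endpoints. The translated interval therefore has mass strictly above $1-\alpha$ and the same length. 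Applying Step 1 to it, we can shave its right end to obtain a feasible interval that is strictly shorter, contradicting optimality. The symmetric case $f_x(\ell^\star)>f_x(u^\star)$ uses a left translation and shaves the left end. Hence $f_x(\ell^\star)=f_x(u^\star)$.

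\textbf{Main obstacle.} The delicate point is Step 2's reliance on the endpoints being interior. Translating must not push an endpoint outside the support, where the density is zero and the $o(t)$ expansion could be misleading. We would handle this by using the interior hypothesis to ensure $f_x$ is continuous and defined on a two-sided neighbourhood of each endpoint, so the one-sided expansions above are valid for small $t$. With that, the argument is an elementary contradiction and needs neither Lagrange multipliers nor unimodality.
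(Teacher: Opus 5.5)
Your proof is correct. Step 1 is the same shrinking argument the paper uses; the paper shaves the left end and you shave the right.

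Step 2 takes a different route. The paper forms the Lagrangian $(u-\ell)+\lambda\bigl(\int_\ell^u f_x-(1-\alpha)\bigr)$ and reads off $1+\lambda f_x(u^\star)=0=-1-\lambda f_x(\ell^\star)$. You instead argue directly that translating toward the denser endpoint raises the mass by $t\,(f_x(u^\star)-f_x(\ell^\star))+o(t)$ at fixed length, and then let Step 1 remove length. This makes your proof a literal instance of the push--pull mechanism of Section~\ref{sec:geometry}.

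Your version is fully elementary and slightly more careful. The Lagrange-multiplier step tacitly needs the constraint gradient $(-f_x(\ell^\star),f_x(u^\star))$ to be nonzero, which the paper never checks. Your argument needs only continuity of $f_x$ at the two endpoints. It goes through verbatim when one endpoint density is zero, and the claim is trivial when both are.

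In particular, you never use the positivity $f_x(\ell^\star),f_x(u^\star)>0$ that you attribute to the interior hypothesis. That is fortunate, because an interior point of the support can still be a zero of a continuous density. What the hypothesis really supplies is two-sided continuity at the endpoints, as you correctly say in your last paragraph. It matters only for densities that jump at the edge of their support, such as the exponential case of Appendix~\ref{app:beta_rates_boundary}, where the $o(t)$ expansion fails.

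The Lagrangian route, in exchange, is shorter. It also exposes the multiplier $\lambda=-1/f_x(u^\star)$, identifying the common endpoint density as the threshold $t$ of the level set $\{f_x\ge t\}$. That is the Neyman--Pearson picture the paper invokes for HDI optimality.
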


\begin{proof}
Suppose for contradiction that the optimal interval satisfies \(\int_{\ell^\star}^{u^\star} f_x(y)\,dy > 1-\alpha\). By the continuity of \(f_x\), we could slightly shrink the interval (e.g., increase \(\ell^\star\)) to strictly decrease its length while maintaining the \(1-\alpha\) mass constraint, which contradicts optimality. Thus, the constraint must bind.

To find the optimal interior endpoints, we minimize the length \(u-\ell\) subject to this binding constraint. The Lagrangian is:
\[
\mathcal{L}(\ell,u,\lambda) = (u-\ell) + \lambda\left(\int_{\ell}^{u} f_x(y)\,dy - (1-\alpha)\right).
\]
Setting the partial derivatives to zero yields:
\[
\frac{\partial \mathcal{L}}{\partial u} = 1 + \lambda f_x(u^\star) = 0, \qquad \frac{\partial \mathcal{L}}{\partial \ell} = -1 - \lambda f_x(\ell^\star) = 0.
\]
Eliminating \(\lambda\) immediately gives \(f_x(\ell^\star) = f_x(u^\star)\).
\end{proof}

\subsection{Proof of Lemma~\ref{lem:softgrad}}\label{app:proof_softgrad}

\begin{proof}
Fix \(x\). Write
\[
\Phi_{x,\beta}(c,r)=\int \sigma\left(\frac{r-|y-c|}{\beta}\right) f_x(y)\,dy.
\]
Since \(f_x\) is bounded and \(\sigma'\) is integrable, we may differentiate under the integral sign by dominated convergence.
Indeed, for all \(y\neq c\),
\[
\left|\frac{\partial}{\partial c}\sigma\left(\frac{r-|y-c|}{\beta}\right)\right|
\le
\frac{1}{\beta}\sigma'\left(\frac{c+r-y}{\beta}\right)
+
\frac{1}{\beta}\sigma'\left(\frac{c-r-y}{\beta}\right),
\]
and therefore the derivative integrand is dominated by
\(f_{\max}\bigl(K_\beta(c+r-y)+K_\beta(c-r-y)\bigr)\),
whose integral over \(y\) equals \(2f_{\max}\). Then
\begin{equation*}
\frac{\partial}{\partial c}\Phi_{x,\beta}(c,r)
=
\int \frac{1}{\beta}\sigma'\left(\frac{r-|y-c|}{\beta}\right)\,\mathrm{sign}(y-c)\, f_x(y)\,dy.
\end{equation*}
Split the integral into \(y\ge c\) and \(y<c\).

For \(y\ge c\): \(|y-c|=y-c\), \(\mathrm{sign}(y-c)=+1\). Thus
\begin{align*}
I_+(c)
&\coloneqq
\int_{c}^{\infty}\frac{1}{\beta}\sigma'\left(\frac{c+r-y}{\beta}\right) f_x(y)\,dy\\
&=
f_{x,\beta}(c+r)
-\int_{-\infty}^{c}\frac{1}{\beta}\sigma'\left(\frac{c+r-y}{\beta}\right) f_x(y)\,dy\\
&=
f_{x,\beta}(c+r)-\int_{r/\beta}^{\infty}\sigma'(t)\, f_x(c+r-\beta t)\,dt
\quad
\left(
\text{with }t=\frac{c+r-y}{\beta}
\right).
\end{align*}
For \(y<c\): \(|y-c|=c-y\), \(\mathrm{sign}(y-c)=-1\). Thus
\begin{align*}
I_-(c)
&\coloneqq
-\int_{-\infty}^{c}\frac{1}{\beta}\sigma'\left(\frac{r-(c-y)}{\beta}\right) f_x(y)\,dy\\
&=
-\int_{-\infty}^{c}\frac{1}{\beta}\sigma'\left(\frac{c-r-y}{\beta}\right) f_x(y)\,dy\\
&=
-f_{x,\beta}(c-r)
+\int_{c}^{\infty}\frac{1}{\beta}\sigma'\left(\frac{c-r-y}{\beta}\right) f_x(y)\,dy\\
&=
-f_{x,\beta}(c-r)
+\int_{r/\beta}^{\infty}\sigma'(t)\, f_x(c-r+\beta t)\,dt
\quad
\left(
\text{with }t=\frac{y-(c-r)}{\beta}
\right).
\end{align*}
Summing yields
\[
\frac{\partial}{\partial c}\Phi_{x,\beta}(c,r)
=
f_{x,\beta}(c+r)-f_{x,\beta}(c-r)
+\int_{r/\beta}^{\infty}\sigma'(t)\left(f_x(c-r+\beta t)-f_x(c+r-\beta t)\right) \,dt.
\]
Using \(|f_x|\le f_{\max}\) and \(\int_{r/\beta}^{\infty}\sigma'(t)\,dt = 1-\sigma(r/\beta)\) yields the remainder bound.
\end{proof}

\subsection{Proof of Theorem~\ref{thm:cocp_marginal}}\label{app:proof_marginal}

\begin{proof}
Condition on \(\mathcal{D}_{\mathrm{train}}\); then \(\widehat\ctr,\widehat\rad\) are fixed measurable functions.
Define the test score
\[
S\coloneqq \frac{|Y-\widehat\ctr(X)|}{\widehat\rad(X)}.
\]
By exchangeability and the split nature of calibration, the scores
\(\{S_i\}_{(X_i,Y_i)\in\mathcal{D}_{\mathrm{cal}}}\) together with \(S\) are exchangeable conditional on
\(\mathcal{D}_{\mathrm{train}}\).
With \(\widehat q\) chosen as the \(\lceil (n_{\mathrm{cal}}+1)(1-\alpha)\rceil\)-th smallest calibration score,
the standard split-conformal rank argument yields
\[
\mathbb{P}\bigl(S\le \widehat q \mid \mathcal{D}_{\mathrm{train}}\bigr)\ \ge\ 1-\alpha.
\]
Finally, \(S\le \widehat q\) is equivalent to
\(|Y-\widehat\ctr(X)|\le \widehat q\,\widehat\rad(X)\), i.e.,
\(Y\in \widehat C(X)\).
Taking expectation over \(\mathcal{D}_{\mathrm{train}}\) completes the proof.
\end{proof}

\subsection{Auxiliary lemmas}\label{app:aux_lemmas}

\begin{lemma}[Lipschitzness of  \(\psi_x\)]\label{lem:rad_lip_app}
Fix \(x\). For any \(c,c'\in\mathbb{R}\),
\[
|\psi_x(c)-\psi_x(c')|\ \le\ |c-c'|.
\]
\end{lemma}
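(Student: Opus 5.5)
The plan is a direct interval-inclusion argument on the definition \eqref{eq:minimal_feasible_radius} of \(\psi_x\), with no smoothness or density assumptions needed. Fix \(c,c'\) and write \(\delta\coloneqq|c-c'|\).

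The key observation is the containment
\[
[c-r,\,c+r]\subseteq[c'-(r+\delta),\,c'+(r+\delta)]
\qquad\text{for every } r\ge 0,
\]
which is immediate from the triangle inequality \(|y-c'|\le|y-c|+\delta\). Monotonicity of probability then gives
\[
\mathbb{P}\bigl(|Y-c'|\le r+\delta\mid X=x\bigr)\ \ge\ \mathbb{P}\bigl(|Y-c|\le r\mid X=x\bigr).
\]

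Next, let \(r\) be any element of the feasible set defining \(\psi_x(c)\), so that the right-hand probability is at least \(1-\alpha\). Then \(r+\delta\) lies in the feasible set for \(c'\), hence \(\psi_x(c')\le r+\delta\). Taking the infimum over all such \(r\) yields \(\psi_x(c')\le\psi_x(c)+\delta\). Swapping the roles of \(c\) and \(c'\) gives the reverse inequality, and together they give the claim.

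The only point needing care is that the feasible set is nonempty, so that \(\psi_x(c)<\infty\) and the infimum manipulation is legitimate. Since \(\mathbb{P}(|Y-c|\le r\mid X=x)\to 1>1-\alpha\) as \(r\to\infty\), the set is nonempty for every \(c\), so no attainment of the infimum or continuity of \(F_x\) is required. I do not expect a real obstacle beyond this bookkeeping.
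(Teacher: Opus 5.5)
Your proof is correct and follows essentially the same route as the paper's: both rest on the pointwise bound $|Y-c'|\le |Y-c|+|c-c'|$ plus monotonicity, with the paper phrasing the conclusion as monotonicity of the $(1-\alpha)$-quantile of the folded residual while you work directly with the infimum in \eqref{eq:minimal_feasible_radius}. Working with the infimum means you never need to identify $\psi_x(c)$ with a quantile, and your nonemptiness remark covers the only bookkeeping point.
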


\begin{proof}
Let \(U=|Y-c|\mid X=x\) and \(V=|Y-c'|\mid X=x\).
By the reverse triangle inequality, \(|U-V|\le |c-c'|\) almost surely.
Hence \(U\le V+|c-c'|\) and \(V\le U+|c-c'|\) almost surely, implying
\(Q_{1-\alpha}(U)\le Q_{1-\alpha}(V)+|c-c'|\) and the reverse inequality by symmetry.
\end{proof}

\begin{lemma}[Coverage is Lipschitz in radius]\label{lem:cov_sens_app}
Under Assumption~\ref{as:reg_main}, for any fixed \(x\), center \(c\), and radii \(r_1,r_2\ge 0\),
\[
\Bigl|\mathbb{P}(|Y-c|\le r_1\mid X=x)-\mathbb{P}(|Y-c|\le r_2\mid X=x)\Bigr|
\ \le\ 2 f_{\max}\,|r_1-r_2|.
\]
\end{lemma}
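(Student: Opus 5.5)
Without loss of generality we take \(r_1\le r_2\) and rewrite the difference of coverage probabilities as the conditional mass of the annulus.
\[
\mathbb{P}(|Y-c|\le r_2\mid X=x)-\mathbb{P}(|Y-c|\le r_1\mid X=x)
=\mathbb{P}\bigl(r_1<|Y-c|\le r_2\mid X=x\bigr).
\]
This difference is nonnegative, so it equals its absolute value. Under Assumption~\ref{as:reg_main}, \(Y\mid X=x\) has the Lebesgue density \(f_x\). Hence the boundary points carry no mass, and the choice of \(<\) versus \(\le\) is immaterial.

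The annulus event \(\{r_1<|Y-c|\le r_2\}\) splits into two disjoint pieces, \(\{c+r_1<Y\le c+r_2\}\) and \(\{c-r_2\le Y<c-r_1\}\). Each piece is an interval of length \(r_2-r_1\). We integrate the density over each piece and use \(f_x\le f_{\max}\), which gives
\[
\int_{c+r_1}^{c+r_2} f_x(y)\,dy+\int_{c-r_2}^{c-r_1} f_x(y)\,dy\ \le\ 2f_{\max}(r_2-r_1).
\]
This is exactly the claimed bound \(2f_{\max}|r_1-r_2|\).

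There is no real obstacle. The only points needing care are the reduction to the ordered case \(r_1\le r_2\) by symmetry, and the observation that when \(r_1=0\) the two pieces still meet only at the single point \(y=c\), which has measure zero. Continuity of \(f_x\) is not needed; only the existence and boundedness of the density are used.
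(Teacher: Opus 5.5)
Your proof is correct and takes essentially the same route as the paper. The paper writes the coverage as $F_x(c+r)-F_x(c-r)$ and bounds the change at each endpoint by $f_{\max}|r_1-r_2|$ via the mean value theorem; your direct integration of $f_x$ over the two pieces of the annulus is the same decomposition, and, as you note, it does not need continuity of $f_x$, only a bounded density.
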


\begin{proof}
Write \(\mathbb{P}(|Y-c|\le r\mid X=x)=F_x(c+r)-F_x(c-r)\), where \(F_x\) is the conditional CDF.
By the mean value theorem and \(\sup f_x\le f_{\max}\),
\(|F_x(c+r_1)-F_x(c+r_2)|\le f_{\max}|r_1-r_2|\), and similarly for the lower endpoint.
Summing yields the bound.
\end{proof}

\subsection{Vanishing \(\beta\)-bias of the \(\beta\)-soft oracle}\label{app:beta_smoothing}

This section proves Lemma~\ref{lem:beta_consistency}. Fix \(x\). For any \(c
\in \mathbb{R}\), denote the endpoint imbalance map
\[
G_x(c)\coloneqq f_x\bigl(c+\psi_x(c)\bigr)-f_x\bigl(c-\psi_x(c)\bigr).
\]
Also define the \(\beta\)-soft stationarity map
\[
D_{x,\beta}(c)\ \coloneqq\ \frac{\partial}{\partial c}\Phi_{x,\beta}\bigl(c,\psi_x(c)\bigr).
\]
By Definition~\ref{def:beta_oracle}, any \(\beta\)-soft oracle center \(\ctr_\beta(x)\) satisfies
\(D_{x,\beta}(\ctr_\beta(x))=0\).

\begin{lemma}[Uniform lower bound on best-response radii]\label{lem:rad_lower_bound_beta}
Assume \(\sup_{x,y} f_x(y)\le f_{\max}\).
Then for all \(x\in\mathcal{X}\) and all \(c\in\mathbb{R}\),
\[
\psi_x(c)\ \ge\ r_{\min}\coloneqq \frac{1-\alpha}{2f_{\max}}.
\]
In particular, any \(\beta\)-soft oracle radius \(\rad_\beta(x)=\psi_x(\ctr_\beta(x))\) satisfies
\(\rad_\beta(x)\ge r_{\min}\).
\end{lemma}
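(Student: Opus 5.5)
The plan is to bound the mass of a symmetric interval of radius $r$ directly and then argue by contradiction with the definition of $\psi_x(c)$ in \eqref{eq:minimal_feasible_radius}. Fix $x\in\mathcal{X}$ and $c\in\mathbb{R}$. For any $r\ge 0$, Assumption~\ref{as:reg_main} (bounded density) gives
\[
\mathbb{P}\bigl(|Y-c|\le r\mid X=x\bigr)=\int_{c-r}^{c+r} f_x(y)\,dy\ \le\ 2r f_{\max}.
\]
Alternatively, apply Lemma~\ref{lem:cov_sens_app} with $r_1=r$ and $r_2=0$, using that the coverage at radius $0$ is zero because $Y\mid X=x$ has a density.

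Now take any $r<r_{\min}=(1-\alpha)/(2f_{\max})$. For such $r$ the bound gives conditional mass at most $2rf_{\max}<1-\alpha$, so $r$ is not in the feasible set defining $\psi_x(c)$. Every feasible $r$ therefore satisfies $r\ge r_{\min}$, and taking the infimum gives $\psi_x(c)\ge r_{\min}$. If the feasible set were empty, the infimum would be $+\infty$ and the bound would hold trivially. In fact the set is nonempty, since the mass tends to $1$ as $r\to\infty$.

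The bound is uniform in $x$ and $c$ because $f_{\max}$ is a global constant. The statement about $\rad_\beta(x)$ is then immediate: by Definition~\ref{def:beta_oracle}, $\rad_\beta(x)=\psi_x(\ctr_\beta(x))$, which is $\psi_x$ evaluated at a particular center.

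There is no real obstacle here. The only point needing care is the strict-versus-nonstrict inequality at the infimum. That causes no problem, because every $r<r_{\min}$ is excluded from the feasible set, so the infimum is at least $r_{\min}$ regardless of whether $r_{\min}$ itself is attained.
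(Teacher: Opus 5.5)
Your proof is correct and follows essentially the same route as the paper: bound the mass of $[c-r,c+r]$ by $2rf_{\max}$, note that every $r<r_{\min}$ is therefore infeasible, and conclude $\psi_x(c)\ge r_{\min}$ by passing to the infimum. Your extra remarks are sound but not needed for the argument; these are the alternative via Lemma~\ref{lem:cov_sens_app}, the empty-feasible-set case, and the strict-versus-nonstrict point.
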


\begin{proof}
For any \(r\ge 0\),
\[
\mathbb{P}\bigl(|Y-c|\le r\mid X=x\bigr)
=
\int_{c-r}^{c+r} f_x(y)\,dy
\ \le\
2r\,f_{\max}.
\]
If \(r< (1-\alpha)/(2f_{\max})\), the right-hand side is strictly smaller than \(1-\alpha\), hence such an \(r\)
cannot satisfy the feasibility constraint in the definition of \(\psi_x(c)\).
\end{proof}

\begin{lemma}[Uniform kernel-smoothing consistency]\label{lem:conv_uniform_beta}
Fix \(x\) and recall \(f_{x,\beta}=f_x*K_\beta\), where \(K_\beta(u)=\beta^{-1}\sigma'(u/\beta)\).
Under Assumptions~\ref{as:reg_main} and~\ref{as:uni_main},
\[
\sup_{z\in\mathbb{R}} \bigl|f_{x,\beta}(z)-f_x(z)\bigr|\to 0
\qquad
\text{as }\beta\to 0^+.
\]
\end{lemma}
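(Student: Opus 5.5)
The plan is the standard approximate-identity argument. Its one non-routine ingredient is \emph{uniform} continuity of \(f_x\) on all of \(\mathbb{R}\), which we extract from unimodality. Fix \(x\). Since \(\int K_\beta=1\) and \(K_\beta\) is symmetric, for every \(z\),
\[
f_{x,\beta}(z)-f_x(z)=\int_{\mathbb{R}} K_\beta(u)\bigl(f_x(z-u)-f_x(z)\bigr)\,du .
\]
Split the integral at a threshold \(\delta>0\). On \(\{|u|\le\delta\}\) the integrand is bounded by \(\omega_x(\delta)\coloneqq\sup_{|s-t|\le\delta}|f_x(s)-f_x(t)|\), the modulus of continuity of \(f_x\). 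On \(\{|u|>\delta\}\) we use \(|f_x(z-u)-f_x(z)|\le f_{\max}\) (both values lie in \([0,f_{\max}]\) by Assumption~\ref{as:reg_main}). By symmetry of \(K_\beta\), \(\int_{|u|>\delta}K_\beta(u)\,du=2\bigl(1-\sigma(\delta/\beta)\bigr)\). This gives, uniformly in \(z\),
\[
\sup_{z}\bigl|f_{x,\beta}(z)-f_x(z)\bigr|\ \le\ \omega_x(\delta)+2f_{\max}\bigl(1-\sigma(\delta/\beta)\bigr).
\]
Letting \(\beta\to0^+\) with \(\delta\) fixed kills the second term, since \(1-\sigma(\delta/\beta)\le e^{-\delta/\beta}\). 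Then letting \(\delta\to0^+\) kills the first term, provided \(\omega_x(\delta)\to0\).

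The main obstacle is therefore to show \(\omega_x(\delta)\to0\), i.e.\ that \(f_x\) is uniformly continuous on \(\mathbb{R}\). Continuity alone gives this only on compacts. I would use Assumption~\ref{as:uni_main}. Unimodality means \(f_x\) is nondecreasing on \((-\infty,\text{mode}]\) and nonincreasing on \([\text{mode},\infty)\). A monotone nonnegative integrable function on a half-line must tend to \(0\) at infinity, so \(f_x(y)\to0\) as \(|y|\to\infty\).

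Given \(\varepsilon>0\), choose \(A\) with \(f_x(y)<\varepsilon/2\) for \(|y|\ge A\). The restriction of \(f_x\) to \([-A-1,A+1]\) is uniformly continuous, so pick \(\delta_0\le1\) accordingly. Now take \(|s-t|\le\delta_0\). If both points lie in \([-A-1,A+1]\), uniform continuity on that compact gives \(|f_x(s)-f_x(t)|<\varepsilon\). Otherwise one point has modulus greater than \(A+1\), so both have modulus at least \(A\). Then \(|f_x(s)-f_x(t)|\le\max(f_x(s),f_x(t))<\varepsilon/2\). Hence \(\omega_x(\delta)\to0\).

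If one prefers to avoid continuity at support endpoints, for example an exponential-type density with a jump, this is exactly the step that fails. Under the paper's Assumption~\ref{as:reg_main}, however, \(f_x\) is continuous on all of \(\mathbb{R}\), so the argument goes through. Combining this with the two-term bound above completes the proof.
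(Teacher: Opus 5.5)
Your proposal is correct and follows essentially the same route as the paper. You use unimodality plus integrability to get $f_x(y)\to 0$ as $|y|\to\infty$, hence uniform continuity on $\mathbb{R}$, and then split the convolution integral at $|u|=\delta$, bounding the far part by a constant times the kernel tail mass $\int_{|t|>\delta/\beta}\sigma'(t)\,dt$. The differences are cosmetic: you prove the step ``continuous with zero limits at $\pm\infty$ implies uniformly continuous'' that the paper takes as standard, and your tail constant $f_{\max}$ is slightly sharper than the paper's $2f_{\max}$.
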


\begin{proof}
By Assumption~\ref{as:uni_main}, \(f_x\) is monotone on each tail, hence the limits
\(\lim_{y\to\infty}f_x(y)\) and \(\lim_{y\to-\infty}f_x(y)\) exist.
Since \(f_x\) is a density, these limits must be \(0\).
Therefore \(f_x\) is continuous on \(\mathbb{R}\) and has finite limits at \(\pm\infty\),
which implies that \(f_x\) is uniformly continuous on \(\mathbb{R}\).

Fix \(\varepsilon>0\). By uniform continuity, choose \(\delta>0\) such that
\(|f_x(z-u)-f_x(z)|\le \varepsilon\) whenever \(|u|\le \delta\), uniformly in \(z\).
Then for all \(z\),
\begin{align*}
|f_{x,\beta}(z)-f_x(z)|
&=
\left|\int_{\mathbb{R}} \bigl(f_x(z-u)-f_x(z)\bigr)\,K_\beta(u)\,du\right|\\
&\le
\int_{|u|\le \delta} \varepsilon\,K_\beta(u)\,du
+
\int_{|u|>\delta} 2f_{\max}\,K_\beta(u)\,du\\
&\le
\varepsilon + 2f_{\max}\int_{|u|>\delta}K_\beta(u)\,du\\
&=\varepsilon + 2f_{\max}\int_{|t|>\delta/\beta}\sigma'(t)\,dt.
\end{align*}
Since \(\sigma'\) is integrable, 
\(\int_{|t|>\delta/\beta}\sigma'(t)\,dt \to 0\) as \(\beta\to 0^+\). Taking the supremum over \(z\) yields the claim.
\end{proof}

\begin{lemma}[Uniqueness of balance point]\label{lem:G_root_iso}
Under Assumptions~\ref{as:reg_main} and~\ref{as:uni_main}, the equation \(G_x(c)=0\) has the unique solution
\(c=\ctr^\star(x)\). Moreover, for every \(\varepsilon>0\) there exists \(\gamma_x(\varepsilon)>0\) such that
\[
\inf_{|c-\ctr^\star(x)|\ge \varepsilon} |G_x(c)|\ \ge\ \gamma_x(\varepsilon).
\]
\end{lemma}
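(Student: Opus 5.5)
The plan is to study the interval $[c-\psi_x(c),\,c+\psi_x(c)]$ as $c$ moves, and to use unimodality to show that $G_x$ changes sign exactly once. Let $y_0$ be the mode of $f_x$. Write $a(c)=c-\psi_x(c)$ and $b(c)=c+\psi_x(c)$.

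By Lemma~\ref{lem:rad_lip_app}, $\psi_x$ is $1$-Lipschitz. Hence $a$ and $b$ are continuous and nondecreasing in $c$. By continuity of $F_x$, the mass of $[a(c),b(c)]$ equals $1-\alpha$. Lemma~\ref{lem:rad_lower_bound_beta} gives $b(c)-a(c)\ge 2r_{\min}>0$. So $G_x$ is continuous.

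\textbf{Step 1: sign of $G_x$ away from $\ctr^\star$.} Take $c<\ctr^\star(x)$. I claim $G_x(c)>0$.

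First, the interval $[a(c),b(c)]$ is a $(1-\alpha)$-mass interval of length $2\psi_x(c)\ge 2\rad^\star(x)$. By uniqueness of the HDI (Assumption~\ref{as:uni_main}), this length is strictly larger whenever the interval differs from $C^\star(x)$, and its center $c$ differs, so it does differ.

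Next, use the standard HDI characterization for a unimodal continuous density. A $(1-\alpha)$-mass interval $[a,b]$ has $f_x(a)=f_x(b)$ with $a\le y_0\le b$ exactly when it is the HDI. When $f_x(a)<f_x(b)$, sliding the interval right shortens the required length; this is the push--pull rule for $\psi_x'$. Directly: since the mass is fixed, $a(c)$ and $b(c)$ move together. If $G_x(c)\le 0$ with $c<\ctr^\star$, monotonicity of $f_x$ away from the mode forces $[a(c),b(c)]$ into one of two configurations. Either it contains the level set $\{f_x\ge f_x(b(c))\}$ and is longer than needed, or it lies entirely right of the mode. I will rule out both, using the fact that the length $c\mapsto 2\psi_x(c)$ is minimized only at $\ctr^\star$.

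My preferred route is a cleaner monotone-comparison argument. For $c<c'$, mass conservation gives $\int_{a(c)}^{a(c')}f_x=\int_{b(c)}^{b(c')}f_x$. Combining this with unimodality shows that $\psi_x$ is quasi-convex, with its unique minimum at $\ctr^\star$. In particular, $\psi_x$ is strictly decreasing on $(-\infty,\ctr^\star]$. A decrease of the radius under a shift of $\delta$ means $b(c+\delta)-b(c)<a(c+\delta)-a(c)$, so the right endpoint moves less while picking up the same mass. Letting $\delta\to0$ and using continuity of $f_x$ gives $f_x(b(c))\ge f_x(a(c))$.

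To upgrade this to a strict inequality I argue by contradiction: if equality held, the level-set characterization would make $[a(c),b(c)]$ the HDI, contradicting $c\neq\ctr^\star$. Flat pieces of $f_x$ need care here, and uniqueness of the HDI is exactly what excludes them. The symmetric argument gives $G_x(c)<0$ for $c>\ctr^\star$. Together with $G_x(\ctr^\star)=0$ (Proposition~\ref{prop:hdi_endpoints}), this proves uniqueness of the root.

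\textbf{Step 2: quantitative separation.} On any compact set $\{\varepsilon\le|c-\ctr^\star|\le R\}$, continuity of $G_x$ and the strict sign give a positive minimum of $|G_x|$. The main obstacle is the tails $|c|\to\infty$.

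Take $c\to-\infty$. Then $a(c)\to-\infty$, while $b(c)$ stays bounded, converging to the $(1-\alpha)$-quantile of $F_x$. So $f_x(a(c))\to0$ by Lemma~\ref{lem:conv_uniform_beta}'s observation that the tail limits of $f_x$ are zero. Meanwhile $f_x(b(c))\to f_x(F_x^{-1}(1-\alpha))$, which is positive because that quantile is interior to the support and $f_x$ is unimodal. Hence $\liminf G_x>0$ as $c\to-\infty$, and similarly $\limsup G_x<0$ as $c\to+\infty$.

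Choosing $R$ large enough that $|G_x|$ is bounded below beyond $R$, and taking the minimum of the two bounds, gives $\gamma_x(\varepsilon)>0$. I expect the hardest point to be the strictness in Step 1 when $f_x$ has plateaus or when the support is bounded (where $b(c)$ may stick at a support edge). There I would fall back on the HDI-uniqueness assumption and the interior-endpoint condition of Assumption~\ref{as:uni_main}.
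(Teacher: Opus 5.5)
\paragraph{The gap.} It is the strictness step at the end of your Step~1, and extra care cannot close it. Uniqueness of the HDI does \emph{not} exclude plateaus, and if Assumption~\ref{as:uni_main} is read as ordinary (weak) unimodality, the lemma's conclusion is actually false.

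Take $\alpha=0.1$ and a continuous unimodal $f_x$ made of three pieces:
\begin{itemize}[leftmargin=*]
\item a peak on $(0.6,1)$ with $f_x>0.1$ and mass $0.05$;
\item a plateau $f_x\equiv 0.1$ on $[1,10.2]$ with mass $0.92$;
\item tails with $f_x<0.1$ and total mass $0.03$.
\end{itemize}
The unique shortest $0.9$-interval is $[0.6,9.5]$. It is interior, with $f_x=0.1$ at both ends, so $m^\star=5.05$. Yet for every $c\in[5.5,5.7]$ we have $\psi_x(c)=4.5$ and $[c-4.5,c+4.5]\subseteq[1,10.2]$, hence $G_x(c)=0$. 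So $G_x$ has a continuum of roots with $|c-m^\star|\ge 0.45$, and $\psi_x$ is flat there.

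The example exposes two problems in your argument. First, ``quasi-convex with a unique minimizer, in particular strictly decreasing on $(-\infty,m^\star]$'' is a non sequitur. A quasi-convex function can be flat away from its minimizer; reflect $y\mapsto -y$ to put the flat piece on your side $c<m^\star$. Second, the level-set characterization you invoke for the contradiction does not apply. It needs the mode to lie between the two equal-density endpoints, and that fails when both endpoints sit on the same shoulder.

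\paragraph{Relation to the paper's proof and the fix.} The paper's proof contains the same hidden step. It passes from $f_x(\ell)=f_x(u)=t$ plus unimodality to ``$[\ell,u]$ is a highest-density level set''. That requires $f_x\le t$ off $[\ell,u]$, i.e.\ the mode inside $[\ell,u]$.

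The missing ingredient is \emph{strict} unimodality: $f_x$ strictly increasing on the support to the left of the mode $y_0$ and strictly decreasing to the right. Then $f_x(a)=f_x(b)$ with $a<b$ and $\int_a^b f_x=1-\alpha\in(0,1)$ forces $a<y_0<b$. Hence $f_x\ge t>0$ on $[a,b]$ and $f_x\le t$ outside it. So $[a,b]$ is a shortest $(1-\alpha)$-interval and equals $C^\star(x)$ by uniqueness.

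Under that reading the root is unique in one line, exactly as in the paper. Your quasi-convexity and sign analysis then becomes unnecessary: Step~2 only needs that $G_x$ has no zero on the compact set $\{\varepsilon\le |c-m^\star|\le R\}$. Your tail analysis in Step~2 (endpoints converging to the $\alpha$- and $(1-\alpha)$-quantiles, then compactness) matches the paper's and is fine.
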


\begin{proof}
Let \(c\in\mathbb{R}\) and set \(\ell=c-\psi_x(c)\), \(u=c+\psi_x(c)\).
By the definition of \(\psi_x(c)\) and the continuity of \(f_x\), the constraint binds:
\(\int_{\ell}^{u} f_x(y)\,dy = 1-\alpha\).
If \(G_x(c)=0\), then \(f_x(u)=f_x(\ell)\eqqcolon t\).
By unimodality, \(f_x(y)\ge t\) for all \(y\in[\ell,u]\).
Therefore \([\ell,u]\) is a highest-density level set interval with mass \(1-\alpha\), indicating that it must coincide with the unique \((1-\alpha)\)-HDI,
hence \(c=\ctr^\star(x)\).

By Lemma~\ref{lem:rad_lip_app}, \(\psi_x(c)\) is continuous, and \(f_x\) is continuous,
so \(G_x(c)\) is continuous.
Moreover, as \(c\to\infty\), we have \(u\to\infty\) and thus \(f_x(u)\to 0\);
while \( \ell\) remains in a fixed lower tail quantile region, implying
\(\limsup_{c\to\infty} G_x(c)<0\). Similarly, \(\liminf_{c\to-\infty} G_x(c)>0\).
Hence there exists \(M<\infty\) such that \(\inf_{|c|\ge M}|G_x(c)|>0\).

Now fix \(\varepsilon>0\). On the compact set
\(\{c: \varepsilon\le |c-\ctr^\star(x)|\le M\}\), the continuous function \(|G_x(c)|\) attains its minimum,
and the minimum is strictly positive by uniqueness of the root.
Combining with the \(|c|\ge M\) region yields the stated \(\gamma_x(\varepsilon)>0\).
\end{proof}

\paragraph{Proof of Lemma~\ref{lem:beta_consistency}.}
\begin{proof}
Fix \(x\) and let \(\ctr_\beta=\ctr_\beta(x)\), \(\rad_\beta=\rad_\beta(x)=\psi_x(\ctr_\beta)\).
By Lemma~\ref{lem:softgrad} applied with \(c\) and \(r=\psi_x(c)\),
for every \(c\in\mathbb{R}\),
\[
D_{x,\beta}(c)
=
f_{x,\beta}\bigl(c+\psi_x(c)\bigr)-f_{x,\beta}\bigl(c-\psi_x(c)\bigr)
+\varepsilon_{x,\beta}\bigl(c,\psi_x(c)\bigr),
\]
\[
|\varepsilon_{x,\beta}(c,\psi_x(c))|
\le
2 f_{\max}\left(1-\sigma\left(\frac{\psi_x(c)}{\beta}\right)\right).
\]
By Lemma~\ref{lem:rad_lower_bound_beta}, \(\psi_x(c)\ge r_{\min}\) for all \(c\), hence
\[
\sup_{c\in\mathbb{R}}|\varepsilon_{x,\beta}(c,\psi_x(c))|
\le
2 f_{\max}\left(1-\sigma\left(\frac{r_{\min}}{\beta}\right)\right)
\to 0
\qquad
\text{as }\beta\to 0^+.
\]
Therefore, for all \(c\),
\[
|D_{x,\beta}(c)-G_x(c)|
\le
2\sup_{z\in\mathbb{R}}|f_{x,\beta}(z)-f_x(z)|
+\sup_{c\in\mathbb{R}}|\varepsilon_{x,\beta}(c,\psi_x(c))|.
\]
By Lemma~\ref{lem:conv_uniform_beta}, the right-hand side goes to \(0\), so
\[
\sup_{c\in\mathbb{R}}|D_{x,\beta}(c)-G_x(c)|\to 0
\qquad
\text{as }\beta\to 0^+.
\]

Note that \(D_{x,\beta}(\ctr_\beta)=0\) by Definition~\ref{def:beta_oracle}, then
\[
|G_x(\ctr_\beta)|
=
|G_x(\ctr_\beta)-D_{x,\beta}(\ctr_\beta)|
\le
\sup_{c\in\mathbb{R}}|D_{x,\beta}(c)-G_x(c)|
\to 0
\qquad
\text{as }\beta\to 0^+.
\]
By Lemma~\ref{lem:G_root_iso}, the root of \(G_x\) is isolated at \(\ctr^\star(x)\), hence
\(\ctr_\beta\to \ctr^\star(x)\) as \(\beta\to 0^+\).
\end{proof}

\subsection{Proof of Theorem~\ref{thm:cocp_asymp}}\label{app:proof_asymp}
\begin{proof}

We prove (i)--(iii) and the decomposition \eqref{eq:length_gap_decomp}.

\paragraph{Length decomposition.}
Recall \(|\widehat C(x)|=2\widehat q\,\widehat\rad(x)\) and \(|C^\star(x)|=2\rad^\star(x)=2\psi_x(\ctr^\star(x))\).
Then
\begin{align*}
&\frac{1}{2}\bigl||\widehat C(x)|-|C^\star(x)|\bigr|\\
&=
\bigl|\widehat q\,\widehat\rad(x)-\psi_x(\ctr^\star(x))\bigr|\\
&\le
\bigl|\widehat q\,\widehat\rad(x)-\psi_x(\ctr_{\beta_n}(x))\bigr|
+\bigl|\psi_x(\ctr_{\beta_n}(x))-\psi_x(\ctr^\star(x))\bigr|\\
&\le
|\widehat q-1|\,\widehat\rad(x)
+\bigl|\widehat\rad(x)-\psi_x(\widehat\ctr(x))\bigr|
+\bigl|\psi_x(\widehat\ctr(x))-\psi_x(\ctr_{\beta_n}(x))\bigr|
+\bigl|\psi_x(\ctr_{\beta_n}(x))-\psi_x(\ctr^\star(x))\bigr|\\
&\le
|\widehat q-1|\,\widehat\rad(x)
+\bigl|\widehat\rad(x)-\psi_x(\widehat\ctr(x))\bigr|
+\bigl|\widehat\ctr(x)-\ctr_{\beta_n}(x)\bigr|
+\bigl|\ctr_{\beta_n}(x)-\ctr^\star(x)\bigr|
\quad
\text{(by Lemma~\ref{lem:rad_lip_app})}.
\end{align*}
Multiplying by \(2\) yields \eqref{eq:length_gap_decomp}.

\paragraph{Calibration factor \(\widehat q\to 1\).}
Define the (conditional-on-training) score CDF
\[
F(t)\coloneqq \mathbb{P}(S\le t\mid \mathcal{D}_{\mathrm{train}}),
\qquad
S\coloneqq \frac{|Y-\widehat\ctr(X)|}{\widehat\rad(X)}
\]
for a fresh \((X,Y)\).
Let \(q\coloneqq Q_{1-\alpha}(S\mid \mathcal{D}_{\mathrm{train}})\). We bound \(|\widehat q-1|\le |\widehat q-q|+|q-1|\).

\emph{(a) Empirical quantile concentration.}
Conditional on \(\mathcal{D}_{\mathrm{train}}\), the calibration scores are i.i.d.\ from \(F\).
By the DKW inequality, with probability at least \(1-\delta\),
\[
\sup_t |F_{n_{\mathrm{cal}}}(t)-F(t)|
\le
\varepsilon_{\mathrm{DKW}}
\coloneqq
\sqrt{\frac{1}{2n_{\mathrm{cal}}}\log\frac{2}{\delta}}.
\]
Let \(E_n \coloneqq \{\,q\in(1-\delta_0,1+\delta_0)\,\}\).
On the event \(E_n\), Assumption~\ref{as:score_main} implies that
\(F\) has a density bounded below by \(s_0\) on \((1-\delta_0,1+\delta_0)\),
so \(F\) is strictly increasing there and its inverse (quantile map) is locally Lipschitz
with constant \(1/s_0\).
Therefore, on \(E_n\), \(|\widehat q-q|\le \varepsilon_{\mathrm{DKW}}/s_0\), yielding
\[
|\widehat q-q| = O_{\mathbb{P}}\left(\sqrt{\frac{\log n_{\mathrm{cal}}}{n_{\mathrm{cal}}}}\right),
\]
where we will show in (b) that \(\mathbb{P}(E_n)\to 1\).

\emph{(b) The population score quantile \(q\) is close to 1.}
Note that for each fixed \(x\), by definition of \(\psi_x(\widehat\ctr(x))\) (continuity from Assumption~\ref{as:reg_main}),
\[
\mathbb{P}\bigl(|Y-\widehat\ctr(x)|\le \psi_x(\widehat\ctr(x))\mid X=x\bigr)=1-\alpha.
\]
Hence
\[
|F(1)-(1-\alpha)|
=
\left|
\mathbb{E}\bigl[
\mathbb{P}(|Y-\widehat\ctr(X)|\le \widehat\rad(X)\mid X)
-
\mathbb{P}(|Y-\widehat\ctr(X)|\le \psi_X(\widehat\ctr(X))\mid X)
\mid \mathcal{D}_{\mathrm{train}}
\bigr]
\right|.
\]
By Lemma~\ref{lem:cov_sens_app}, the integrand is bounded by
\(2f_{\max}|\widehat\rad(X)-\psi_X(\widehat\ctr(X))|\).
Applying Cauchy--Schwarz and Assumption~\ref{as:cons_main} yields
\[
|F(1)-(1-\alpha)| \le 2f_{\max}\sqrt{\eta_{\rad,n}}
\]
with probability at least \(1-\rho_{\rad,n}\).
By Assumption~\ref{as:score_main}, \(F\) has density at least \(s_0\) on \((1-\delta_0,1+\delta_0)\);
for \(n\) large enough, the above bound implies \(q\in(1-\delta_0,1+\delta_0)\). Thus \(\mathbb{P}(E_n)\ge 1-\rho_{\rad,n}\to 1\), so the bound in (a) holds with probability tending to one, and then
\[
|q-1|\le \frac{|F(1)-(1-\alpha)|}{s_0}\ \le\ \frac{2f_{\max}}{s_0}\sqrt{\eta_{\rad,n}}.
\]
Combining (a)--(b) proves (i).

\paragraph{Asymptotic conditional coverage.}
For any fixed \(x\),
\begin{align*}
    &\left|\mathbb{P}(Y\in \widehat C(x)\mid X=x)-(1-\alpha)\right|\\
    &=
    \left|
    \mathbb{P}(|Y-\widehat\ctr(x)|\le \widehat q\,\widehat\rad(x)\mid X=x)
    -
    \mathbb{P}(|Y-\widehat\ctr(x)|\le \psi_x(\widehat\ctr(x))\mid X=x)
    \right|\\
    &\le
    2f_{\max}\,\left|\widehat q\,\widehat\rad(x)-\psi_x(\widehat\ctr(x))\right|
    \quad
    \text{(by Lemma~\ref{lem:cov_sens_app})}\\
    &\le 
    2f_{\max}\Bigl(|\widehat q-1|\,\widehat\rad(x)+|\widehat\rad(x)-\psi_x(\widehat\ctr(x))|\Bigr).
\end{align*}
Define
\[
\Lambda_n \coloneqq
\left\{
x:\ |\widehat q-1|\le \delta_n,\ 
|\widehat\rad(x)-\psi_x(\widehat\ctr(x))|\le \delta_n,\ 
\widehat\rad(x)\le h_{\max}
\right\},
\]
where \(\delta_n\to 0^+\) satisfies \(\delta_n\gg \sqrt{\eta_{\rad,n}}+\sqrt{\log n_{\mathrm{cal}}/n_{\mathrm{cal}}}\).
By (i) and Assumption~\ref{as:cons_main}, \(\mathbb{P}(X\in\Lambda_n)\to 1\).
On \(\Lambda_n\), the conditional coverage error is at most \(2f_{\max}(h_{\max}+1)\delta_n\to 0\),
which proves (ii).

\paragraph{Oracle-efficient length.}
Note the decomposition \eqref{eq:length_gap_decomp}:
\[
\bigl||\widehat C(X)|-|C^\star(X)|\bigr|
\le
2|\widehat q-1|\,\widehat\rad(X)
+2|\widehat\rad(X)-\psi_X(\widehat\ctr(X))|
+2|\widehat\ctr(X)-\ctr_{\beta_n}(X)|
+2|\ctr_{\beta_n}(X)-\ctr^\star(X)|
.
\]
Combining (i), Assumptions~\ref{as:bounded_main} and~\ref{as:cons_main}, Lemma~\ref{lem:beta_consistency}, and \(\beta_n\to 0^+\), the right-hand side converges to 0 in probability,
hence (iii) holds.
\end{proof}

\section{Analysis of $\beta$-bias Rates and Support Boundaries}\label{app:beta_rates_boundary}

This appendix complements Lemma~\ref{lem:beta_consistency} by providing (i) a second-order $\beta$-bias rate
in the interior HDI regime under additional local smoothness assumptions, and (ii) an explicit first-order
boundary shift when the oracle HDI is pinned at an active support boundary. We also explain why both cases
are compatible with Theorem~\ref{thm:cocp_asymp} without modifying its statement. Note that we work pointwise in $x$ and suppress $x$ to lighten notation in the following statement.

\subsection{Second-order $\beta$-bias in the interior HDI regime}\label{app:beta_bias_beta2}

Let $C^\star=[\ell^\star,u^\star]$ be the unique interior $(1-\alpha)$-HDI with
$\ctr^\star=(\ell^\star+u^\star)/2$ and $\rad^\star=(u^\star-\ell^\star)/2$.

\begin{assumption}[Local $C^2$ smoothness and non-flat HDI endpoints]\label{as:local_smooth_beta2}
There exists $\delta>0$ such that $f$ is twice continuously differentiable on
\[
[\ell^\star-\delta,\ell^\star+\delta]\ \cup\ [u^\star-\delta,u^\star+\delta],
\qquad
\sup |f''|\le M<\infty \ \text{ on this set}.
\]
Moreover, the endpoint slopes are non-flat:
\[
f'(\ell^\star)>0,
\qquad
f'(u^\star)<0.
\]
\end{assumption}

\begin{lemma}[Local logistic smoothing error]\label{lem:local_smooth_O_beta2}
Under Assumption~\ref{as:local_smooth_beta2}, there exists a constant $C<\infty$ such that for all
sufficiently small $\beta>0$,
\[
\sup_{z\in[\ell^\star-\delta/2,\ell^\star+\delta/2]} |f_\beta(z)-f(z)|
\ \le\ C\beta^2,
\qquad
\sup_{z\in[u^\star-\delta/2,u^\star+\delta/2]} |f_\beta(z)-f(z)|
\ \le\ C\beta^2,
\]
where $f_\beta=f*K_\beta$ and $K_\beta(u)=\beta^{-1}\sigma'(u/\beta)$.
\end{lemma}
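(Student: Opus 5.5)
We would prove the bound by the standard second-order Taylor argument for symmetric kernels with finite second moment, splitting the convolution integral into a near part and a far part. Fix $z$ in $[\ell^\star-\delta/2,\ell^\star+\delta/2]$; the case of the upper endpoint is identical. Since $\int K_\beta=1$, we write
\[
f_\beta(z)-f(z)=\int_{\mathbb{R}}\bigl(f(z-u)-f(z)\bigr)K_\beta(u)\,du,
\]
and split into $|u|\le\delta/2$ and $|u|>\delta/2$. On the near region, $z-u$ stays in $[\ell^\star-\delta,\ell^\star+\delta]$, where $f$ is $C^2$ with $|f''|\le M$. There we use the expansion $f(z-u)-f(z)=-uf'(z)+R(z,u)$ with $|R(z,u)|\le \tfrac{M}{2}u^2$.

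For the near region we use symmetry. The kernel $K_\beta$ is even because $\sigma'$ is even ($\sigma'(t)=\sigma(t)(1-\sigma(t))$). Hence $\int_{|u|\le\delta/2}uK_\beta(u)\,du=0$, and the linear term drops out exactly. The remainder contributes at most
\[
\frac{M}{2}\int_{\mathbb{R}}u^2K_\beta(u)\,du=\frac{M}{2}\beta^2\int_{\mathbb{R}}t^2\sigma'(t)\,dt=\frac{M\pi^2}{6}\beta^2,
\]
using the substitution $u=\beta t$. Only finiteness of the logistic second moment (its value is $\pi^2/3$) is needed, and this follows from the exponential tails $\sigma'(t)\le e^{-|t|}$.

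For the far region we only use the global bound $|f(z-u)-f(z)|\le 2f_{\max}$ from Assumption~\ref{as:reg_main}. This gives a contribution of at most
\[
2f_{\max}\int_{|t|>\delta/(2\beta)}\sigma'(t)\,dt\le 4f_{\max}e^{-\delta/(2\beta)}.
\]
This is exponentially small, so it is at most $\beta^2$ for all sufficiently small $\beta$. Adding the two contributions gives $C=M\pi^2/6+1$, uniformly in $z$, for all small $\beta$.

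The only point needing care is that the Taylor expansion is valid only on the window where $f$ is known to be $C^2$. The halved neighborhood $\delta/2$ in the statement is exactly what makes the split at $|u|=\delta/2$ keep $z-u$ inside that window. The far region is then handled purely through the exponential tail of the logistic kernel together with boundedness of $f$, and no global smoothness of $f$ is required.
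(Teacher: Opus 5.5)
Your proposal is correct and follows essentially the same argument as the paper: the same split of the convolution at $|u|=\delta/2$, a second-order Taylor expansion on the $C^2$ window with the linear term removed by symmetry of $K_\beta$, and the boundedness bound $2f_{\max}$ combined with the exponential logistic tail for the far part. You additionally make the constants explicit ($\int t^2\sigma'(t)\,dt=\pi^2/3$, $\sigma'(t)\le e^{-|t|}$). You also correctly note that the first moment vanishes on the truncated symmetric region $|u|\le\delta/2$, a detail the paper handles by citing $\int uK_\beta(u)\,du=0$ over all of $\mathbb{R}$.
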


\begin{proof}
Fix $z$ in $[\ell^\star-\delta/2,\ell^\star+\delta/2]$.
Write
\[
f_\beta(z)-f(z)=\int_{\mathbb{R}}\bigl(f(z-u)-f(z)\bigr)K_\beta(u)\,du.
\]
Split the integral into $|u|\le \delta/2$ and $|u|>\delta/2$.

On $|u|\le\delta/2$, we have $z-u\in[\ell^\star-\delta,\ell^\star+\delta]$ so Taylor's theorem yields
\[
f(z-u)=f(z)-u f'(z)+\frac{u^2}{2}f''(\xi_{z,u})
\]
for some $\xi_{z,u}$ between $z$ and $z-u$. Since $K_\beta$ is symmetric, $\int u K_\beta(u)\,du=0$, hence
\[
\left|\int_{|u|\le\delta/2}\bigl(f(z-u)-f(z)\bigr)K_\beta(u)\,du\right|
\le
\frac{1}{2}\sup|f''|\int u^2 K_\beta(u)\,du
=
O(\beta^2),
\]
because $\int u^2 K_\beta(u)\,du=\beta^2\int t^2\sigma'(t)\,dt<\infty$.

On $|u|>\delta/2$, we bound $|f(z-u)-f(z)|\le 2f_{\max}$ (boundedness) to get
\[
\left|\int_{|u|>\delta/2}\bigl(f(z-u)-f(z)\bigr)K_\beta(u)\,du\right|
\le
2f_{\max}\int_{|u|>\delta/2}K_\beta(u)\,du
=
2f_{\max}\int_{|t|>\delta/(2\beta)}\sigma'(t)\,dt,
\]
which decays exponentially in $1/\beta$ and in particular is $o(\beta^2)$.
Combining both parts proves the claim. The argument around $u^\star$ is identical.
\end{proof}

\begin{proposition}[Second-order bias in interior case]\label{prop:beta_bias_beta2}
Assume the setting of Lemma~\ref{lem:beta_consistency} and, in addition, Assumption~\ref{as:local_smooth_beta2}.
Let $\ctr_\beta$ be any $\beta$-soft oracle center and $\rad_\beta=\psi(\ctr_\beta)$.
Then, as $\beta\to 0^+$,
\[
|\ctr_\beta-\ctr^\star| = O(\beta^2),
\qquad
|\rad_\beta-\rad^\star| = O(\beta^2).
\]
\end{proposition}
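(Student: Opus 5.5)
The argument is a quantitative version of the proof of Lemma~\ref{lem:beta_consistency}. There we only used $\sup_c|D_{\beta}(c)-G(c)|\to 0$ together with an isolated root of $G$. Here we will show two things: (a) near $\ctr^\star$ the discrepancy $|D_\beta(c)-G(c)|$ is $O(\beta^2)$, and (b) $G$ has a nonzero derivative at $\ctr^\star$, so the root is \emph{linearly} stable. The inverse-function-type argument then gives $|\ctr_\beta-\ctr^\star|=O(\beta^2)$. The radius statement follows at once from Lemma~\ref{lem:rad_lip_app}, since $|\rad_\beta-\rad^\star|=|\psi(\ctr_\beta)-\psi(\ctr^\star)|\le|\ctr_\beta-\ctr^\star|$.

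\textbf{Step 1 (localization).} By Lemma~\ref{lem:beta_consistency}, $\ctr_\beta\to\ctr^\star$. Since $\psi$ is $1$-Lipschitz, for $\beta$ small the endpoints $\ctr_\beta\pm\psi(\ctr_\beta)$ lie within distance $\delta/2$ of $u^\star$ and $\ell^\star$ respectively; it suffices that $|\ctr_\beta-\ctr^\star|\le\delta/4$. So we only need to analyze $c$ in a neighborhood $N=[\ctr^\star-\delta/4,\ctr^\star+\delta/4]$.

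\textbf{Step 2 (discrepancy on $N$).} For $c\in N$, Lemma~\ref{lem:softgrad} with $r=\psi(c)$ gives
\[
|D_\beta(c)-G(c)|\le |f_\beta(c+\psi(c))-f(c+\psi(c))|+|f_\beta(c-\psi(c))-f(c-\psi(c))|+2f_{\max}\bigl(1-\sigma(r_{\min}/\beta)\bigr).
\]
The first two terms are each at most $C\beta^2$ by Lemma~\ref{lem:local_smooth_O_beta2}. The last term is at most $2f_{\max}e^{-r_{\min}/\beta}=o(\beta^2)$ by Lemma~\ref{lem:rad_lower_bound_beta}. Hence $\sup_{c\in N}|D_\beta(c)-G(c)|\le C'\beta^2$. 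Evaluating at $c=\ctr_\beta$, where $D_\beta(\ctr_\beta)=0$, gives $|G(\ctr_\beta)|\le C'\beta^2$.

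\textbf{Step 3 (linear stability of $G$; main obstacle).} We must show $|G(c)|\ge\kappa|c-\ctr^\star|$ on $N$ for some $\kappa>0$, possibly after shrinking $N$. First, $\psi$ is differentiable near $\ctr^\star$ with the push--pull formula
\[
\psi'(c)=-\frac{G(c)}{f(c+\psi)+f(c-\psi)}.
\]
This follows from the implicit function theorem applied to $F(c+r)-F(c-r)=1-\alpha$: the $r$-derivative $f(c+r)+f(c-r)$ is positive near the endpoints, because interior HDI endpoints of a unimodal density have positive density (indeed $f'(\ell^\star)>0$ forces $f(\ell^\star)>0$ once it is in the support interior). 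Since $G(\ctr^\star)=0$, we get $\psi'(\ctr^\star)=0$. Then $G$ is $C^1$ near $\ctr^\star$, with
\[
G'(\ctr^\star)=f'(u^\star)(1+\psi'(\ctr^\star))-f'(\ell^\star)(1-\psi'(\ctr^\star))=f'(u^\star)-f'(\ell^\star)<0,
\]
by the non-flatness condition in Assumption~\ref{as:local_smooth_beta2}. Continuity of $G'$ yields $|G'|\ge\kappa:=|f'(u^\star)-f'(\ell^\star)|/2$ on a small neighborhood. The mean value theorem then gives $|G(c)|\ge\kappa|c-\ctr^\star|$ there. The delicate point is justifying differentiability of $\psi$ and $G$ only locally, using $C^2$ smoothness confined to the $\delta$-windows; this is why Step 1 localizes first.

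\textbf{Conclusion.} Combining Steps 2 and 3 gives $\kappa|\ctr_\beta-\ctr^\star|\le C'\beta^2$, which is the center bound. The Lipschitz bound on $\psi$ then gives the radius bound.
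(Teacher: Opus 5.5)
Your proof is correct and follows essentially the same route as the paper: localize via Lemma~\ref{lem:beta_consistency}, then combine soft stationarity with Lemmas~\ref{lem:softgrad}, \ref{lem:rad_lower_bound_beta} and~\ref{lem:local_smooth_O_beta2} to get $|G(\ctr_\beta)|=O(\beta^2)$, then invert using the simple root $G'(\ctr^\star)=f'(u^\star)-f'(\ell^\star)\neq 0$, and finish the radius with the $1$-Lipschitz bound on $\psi$. The differences are only cosmetic: you state the discrepancy bound uniformly on a neighborhood rather than just at $\ctr_\beta$, and you spell out the implicit-function-theorem justification for local differentiability of $\psi$ (using $f(\ell^\star),f(u^\star)>0$), which the paper merely asserts.
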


\begin{proof}
By Lemma~\ref{lem:beta_consistency}, $\ctr_\beta\to \ctr^\star$ and $\rad_\beta\to \rad^\star$.
Hence for sufficiently small $\beta$, the induced endpoints
\[
\ell_\beta\coloneqq \ctr_\beta-\rad_\beta,
\qquad
u_\beta\coloneqq \ctr_\beta+\rad_\beta
\]
lie in the $C^2$ neighborhoods of $\ell^\star$ and $u^\star$ in Assumption~\ref{as:local_smooth_beta2}.

Soft stationarity and Lemma~\ref{lem:softgrad} imply
\[
0
=
\frac{\partial}{\partial c}\Phi_{\beta}(\ctr_\beta,\rad_\beta)
=
f_\beta(u_\beta)-f_\beta(\ell_\beta)+\varepsilon_\beta,
\qquad
|\varepsilon_\beta|\le 2f_{\max}\bigl(1-\sigma(\rad_\beta/\beta)\bigr).
\]
By Lemma~\ref{lem:rad_lower_bound_beta}, $\rad_\beta$ is uniformly bounded below away from $0$,
hence $\varepsilon_\beta=o(\beta^2)$.

Therefore,
\[
|f(u_\beta)-f(\ell_\beta)|
\le
|f(u_\beta)-f_\beta(u_\beta)|
+
|f(\ell_\beta)-f_\beta(\ell_\beta)|
+
|\varepsilon_\beta|
=
O(\beta^2)
\]
by Lemma~\ref{lem:local_smooth_O_beta2} (applied locally at $\ell_\beta,u_\beta$).

Finally, we relate the density-imbalance \(G(c)\coloneqq f(c+\psi(c))-f(c-\psi(c))\) to the center error.
By the definition of \(\psi(c)\),
\[
\psi'(c)=-
\frac{f(c+\psi(c))-f(c-\psi(c))}
{f(c+\psi(c))+f(c-\psi(c))},
\]
whenever \(f(c\pm \psi(c))\) exist and their sum is positive. In particular, we have \(G(\ctr^\star)=0\).

Under Assumption~\ref{as:local_smooth_beta2}, \(f\) is \(C^2\) near \(\ell^\star,u^\star\), hence \(G\) is \(C^1\)
in a neighborhood of \(\ctr^\star\). Differentiating
\(G(c)\) and using \(\psi'(\ctr^\star)=0\) gives
\[
G'(\ctr^\star)= f'(u^\star)-f'(\ell^\star).
\]
By Assumption~\ref{as:local_smooth_beta2}, \(f'(\ell^\star)>0\) and \(f'(u^\star)<0\), so \(G'(\ctr^\star)\neq 0\).
Therefore \(\ctr^\star\) is a simple root of \(G\), and by the mean value theorem there exist \(m>0\) and a
neighborhood \(\mathcal{N}\) of \(\ctr^\star\) such that
\[
|G(c)|\ge m|c-\ctr^\star|,\qquad \forall c\in\mathcal{N}.
\]
Applying this at $c=\ctr_\beta$ yields $|\ctr_\beta-\ctr^\star|=O(\beta^2)$.

The radius bound follows from Lemma~\ref{lem:rad_lip_app}:
\[
|\rad_\beta-\rad^\star|
=
|\psi(\ctr_\beta)-\psi(\ctr^\star)|
\le
|\ctr_\beta-\ctr^\star|
=
O(\beta^2).
\]
\end{proof}

\subsection{Active support boundaries: first-order $\beta$ shift}\label{app:beta_boundary_shift}

\begin{assumption}[Active lower-boundary truncation]\label{as:boundary_trunc}
Fix $x$.
The conditional density $f$ has support $[a,b]$, where $b\in(a,\infty]$.
On $(a,b)$, $f$ is continuous, bounded by $f_{\max}$, and unimodal.
We extend $f(y)=0$ for $y\notin[a,b]$, and assume the right limit
$f(a^+)\coloneqq \lim_{y\to a^+} f(y)$ exists with $0<f(a^+)<\infty$.
The unique $(1-\alpha)$-HDI is boundary-pinned:
\[
C^\star=[\ell^\star,u^\star]=[a,u^\star],
\qquad
u^\star(x)\in(a,b),
\]
and the boundary is \emph{active} in the sense that $f(a^+) > f(u^\star)$.
\end{assumption}

\begin{proposition}[First-order bias in an active support boundary]\label{prop:beta_boundary}
Fix $x$ and suppose Assumption~\ref{as:boundary_trunc} holds.
Let $(\ctr_\beta,\rad_\beta)$ be any $\beta$-soft oracle pair
in Definition~\ref{def:beta_oracle}, and define endpoints
\[
\ell_\beta\coloneqq \ctr_\beta-\rad_\beta,
\qquad
u_\beta\coloneqq \ctr_\beta+\rad_\beta.
\]
Let $C^\star=[a,u^\star]$, and define
\[
\kappa \coloneqq \log\!\left(\frac{f(u^\star)}{f(a^+)-f(u^\star)}\right),
\qquad\text{equivalently}\qquad
\sigma(\kappa)=\frac{f(u^\star)}{f(a^+)}.
\]
Then, as $\beta\to 0^+$,
\[
\ell_\beta=a+\kappa\,\beta+o(\beta),
\qquad\text{and}\qquad
\bigl||C_\beta^\star|-|C^\star|\bigr|=O(\beta).
\]
In particular, $\ctr_\beta\to \ctr^\star$ and $\rad_\beta\to \rad^\star$ as $\beta\to 0^+$.
\end{proposition}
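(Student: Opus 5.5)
The plan is to analyse the stationarity equation of Definition~\ref{def:beta_oracle} directly through Lemma~\ref{lem:softgrad}. The key difference from the interior case is that, near the jump at $a$, the smoothed density $f_\beta$ does \emph{not} converge uniformly to $f$. Instead it has an explicit boundary-layer profile. Write $\ell_\beta=\ctr_\beta-\rad_\beta$, $u_\beta=\ctr_\beta+\rad_\beta$ and $s_\beta\coloneqq(\ell_\beta-a)/\beta$.

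\emph{Step 1: boundary-layer expansion.} For $z=a+s\beta$ with $s$ in a compact set,
\[
f_\beta(z)=\int_a^b f(y)K_\beta(z-y)\,dy=\int_{-\infty}^{s}\sigma'(t)\,f(z-\beta t)\,\mathbbm{1}\{z-\beta t<b\}\,dt .
\]
By right-continuity of $f$ at $a$ and dominated convergence, this gives $f_\beta(a+s\beta)=f(a^+)\sigma(s)+o(1)$, uniformly for $s$ in compacts. Away from $a$, the proof of Lemma~\ref{lem:conv_uniform_beta} goes through on any set at positive distance from $a$ (and from $b$ if $b<\infty$). There $f$ is locally uniformly continuous, so $f_\beta\to f$ uniformly on such sets. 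In particular $f_\beta\to f$ near $u^\star$. Moreover $f_\beta(z)\to 0$ uniformly for $z\le a-\varepsilon$, and $f_\beta(z)\to f(z)$ for $z\ge a+\varepsilon$ bounded away from $b$. The remainder in Lemma~\ref{lem:softgrad} is $O(1-\sigma(r_{\min}/\beta))$ by Lemma~\ref{lem:rad_lower_bound_beta}, so it is exponentially small. Stationarity therefore reads
\[
f_\beta(u_\beta)-f_\beta(\ell_\beta)=o(1).
\]

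\emph{Step 2: consistency, $\ell_\beta\to a$ and $u_\beta\to u^\star$.} Argue by contradiction along a subsequence on which $\ell_\beta\to\bar\ell$, using the constraint $F(u_\beta)-F(\ell_\beta)=1-\alpha$ (with $F$ extended as $0$ below $a$). There are two cases to exclude.
\begin{itemize}[leftmargin=*]
\item If $\bar\ell<a$, then $f_\beta(\ell_\beta)\to0$. But $u_\beta$ converges to a point where the upper mass condition $F(\bar u)=1-\alpha$ holds, and this point lies strictly inside the support, so $f(\bar u)>0$. This contradicts stationarity.
\item If $\bar\ell>a$, stationarity gives $f(\bar\ell)=f(\bar u)$ for an interval of mass $1-\alpha$. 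By the level-set argument of Lemma~\ref{lem:G_root_iso}, that interval would be the HDI. This contradicts the fact that the unique HDI is pinned at $a$.
\end{itemize}
A compactness argument on $\ctr_\beta$ is also needed; it follows from the tail behaviour as in Lemma~\ref{lem:G_root_iso}. Hence $\ell_\beta\to a$. Then $F(\ell_\beta)\to0$, so $u_\beta\to u^\star$ because $f(u^\star)>0$. This already gives the final sentence of the proposition.

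\emph{Step 3: $s_\beta$ is bounded, and identification of its limit.} This is the main obstacle, because one must rule out the intermediate regime $\beta\ll|\ell_\beta-a|=o(1)$.
\begin{itemize}[leftmargin=*]
\item If $s_\beta\to+\infty$ along a subsequence, the integral representation of Step~1 shows $f_\beta(\ell_\beta)\to f(a^+)$. Indeed the kernel mass sits on points within $o(1)$ of $a$ from the right, except for a fraction $\sigma(-s_\beta)\to0$. Then $f(a^+)=f(u^\star)$, contradicting activeness $f(a^+)>f(u^\star)$.
\item If $s_\beta\to-\infty$, then $f_\beta(\ell_\beta)\to0\neq f(u^\star)$, again a contradiction.
\end{itemize}
So $s_\beta$ is bounded. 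Along any convergent subsequence $s_\beta\to\bar s$, Step~1 gives $f(a^+)\sigma(\bar s)=f(u^\star)$. Since $\sigma$ is strictly increasing, this forces $\bar s=\kappa$. Hence $\ell_\beta=a+\kappa\beta+o(\beta)$.

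\emph{Step 4: length.} From the mass constraint, $|F(\ell_\beta)|\le f_{\max}|\ell_\beta-a|=O(\beta)$. So $F(u_\beta)=1-\alpha+O(\beta)$. Since $f$ is continuous and positive at $u^\star$, the mean value theorem gives $u_\beta-u^\star=O(\beta)$. Therefore
\[
\bigl||C^\star_\beta|-|C^\star|\bigr|\le|u_\beta-u^\star|+|\ell_\beta-a|=O(\beta).
\]
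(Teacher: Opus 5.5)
Your proposal is correct and follows the paper's proof essentially step for step: the stationarity equation with exponentially small remainder (via Lemma~\ref{lem:softgrad} and Lemma~\ref{lem:rad_lower_bound_beta}), consistency $\ell_\beta\to a$ by excluding limits below and above $a$, the boundary-layer profile $f_\beta(a+\beta t)\to f(a^+)\sigma(t)$, and identification of $(\ell_\beta-a)/\beta\to\kappa$. Your version is somewhat more careful than the paper's in two places: you make the boundary-layer limit uniform on compacts, and you explicitly exclude $s_\beta\to\pm\infty$ in the intermediate regime. In the final step, your mean-value argument giving $u_\beta-u^\star=O(\beta)$ replaces the paper's case split on the sign of $\kappa$; that split additionally gives the explicit first-order constant of the length gap, which the statement does not require.
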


\begin{proof}
Extend \(f(y)=0\) for \(y\notin[a,b]\).
Recall \(f_\beta \coloneqq f*K_\beta\), where \(K_\beta(u)=\beta^{-1}\sigma'(u/\beta)\).
Let \((c_\beta,r_\beta)\coloneqq(\ctr_\beta,\rad_\beta)\) be a \(\beta\)-soft oracle pair in
Definition~\ref{def:beta_oracle}, and define endpoints
\[
\ell_\beta\coloneqq c_\beta-r_\beta,
\qquad
u_\beta\coloneqq c_\beta+r_\beta.
\]
By definition, \(r_\beta=\psi_x(c_\beta)\), so by continuity the feasibility constraint binds:
\begin{equation}\label{eq:bd_mass_bind}
\int_{\ell_\beta}^{u_\beta} f(y)\,dy = 1-\alpha.
\end{equation}
We next organize the proof into five steps.

\paragraph{1. Approximately balanced smoothed endpoints.}
Lemma~\ref{lem:softgrad} gives
\[
0
=
\frac{\partial}{\partial c}\Phi_{x,\beta}(c_\beta,r_\beta)
=
f_\beta(u_\beta)-f_\beta(\ell_\beta)+\varepsilon_{x,\beta}(c_\beta,r_\beta),
\qquad
|\varepsilon_{x,\beta}(c_\beta,r_\beta)|
\le
2f_{\max}\bigl(1-\sigma(r_\beta/\beta)\bigr).
\]
By Lemma~\ref{lem:rad_lower_bound_beta}, \(r_\beta\ge r_{\min}=(1-\alpha)/(2f_{\max})\), hence
\begin{equation}\label{eq:bd_imbalance}
f_\beta(u_\beta)-f_\beta(\ell_\beta)
=
O\!\left(1-\sigma(r_{\min}/\beta)\right)
=
o(\beta),
\qquad
\beta\to 0^+.
\end{equation}

\paragraph{2. Prove \(\ell_\beta\to a\) and \(u_\beta\to u^\star\).}
Suppose by contradiction that \(\ell_\beta\not\to a\). Then there exist \(\delta>0\) and a sequence
\(\beta_k\to 0^+\) such that \(|\ell_{\beta_k}-a|\ge \delta\) for all \(k\).
We consider two cases.

\smallskip
\noindent\emph{Case 1: \(\ell_{\beta_k}\le a-\delta\) for all \(k\) along a subsequence.}
Since \(f(y)=0\) for \(y<a\), the constraint \eqref{eq:bd_mass_bind} reduces to
\[
\int_{a}^{u_{\beta_k}} f(y)\,dy = 1-\alpha,
\]
hence by uniqueness of \(u^\star\) we have \(u_{\beta_k}=u^\star\) for all \(k\) in this subsequence.
Moreover, because \(\ell_{\beta_k}\le a-\delta\), we have
\(f_{\beta_k}(\ell_{\beta_k})\to 0\) as \(k\to\infty\).
Since \(u^\star\in(a,b)\) is an interior point, smoothing disappears at \(u^\star\) and
\(f_{\beta_k}(u_{\beta_k})=f_{\beta_k}(u^\star)\to f(u^\star)\).
As \(u^\star\) lies in the support interior, \(f(u^\star)>0\), hence
\[
f_{\beta_k}(u_{\beta_k})-f_{\beta_k}(\ell_{\beta_k}) \to f(u^\star)>0,
\]
contradicting \eqref{eq:bd_imbalance}. Therefore Case~1 is impossible.

\smallskip
\noindent\emph{Case 2: \(\ell_{\beta_k}\ge a+\delta\) for all \(k\) along a subsequence.}
Then \eqref{eq:bd_mass_bind} implies \(F(\ell_{\beta_k})\le \alpha\), so \(\ell_{\beta_k}\le F^{-1}(\alpha)\) and
\(\{\ell_{\beta_k}\}\) is bounded. We next show \(\{u_{\beta_k}\}\) is bounded as well.
If not, then (passing to a further subsequence) \(u_{\beta_k}\to\infty\) (only possible when \(b=+\infty\)).
By unimodality, \(f(y)\to 0\) as \(y\to\infty\), and consequently \(f_{\beta_k}(u_{\beta_k})\to 0\).
On the other hand, since \(\ell_{\beta_k}\in[a+\delta,\,F^{-1}(\alpha)]\), we may pass to a subsequence such that
\(\ell_{\beta_k}\to \ell_0\in[a+\delta,\,F^{-1}(\alpha)]\subset(a,b)\).
Because \(\ell_0\) is an interior point, \(f_{\beta_k}(\ell_{\beta_k})\to f(\ell_0)\), and as \(\ell_0\in(a,b)\),
we have \(f(\ell_0)>0\). Hence
\[
f_{\beta_k}(u_{\beta_k})-f_{\beta_k}(\ell_{\beta_k}) \to -f(\ell_0)<0,
\]
again contradicting \eqref{eq:bd_imbalance}. Therefore \(\{u_{\beta_k}\}\) must be bounded.

Thus \((\ell_{\beta_k},u_{\beta_k})\) is bounded, and passing to a subsequence (not relabeled),
\((\ell_{\beta_k},u_{\beta_k})\to(\ell_0,u_0)\) with \(a+\delta\le \ell_0<u_0<b\).
Then
\(f_{\beta_k}(\ell_{\beta_k})\to f(\ell_0)\) and \(f_{\beta_k}(u_{\beta_k})\to f(u_0)\).
Letting \(k\to\infty\) in \eqref{eq:bd_imbalance} yields \(f(\ell_0)=f(u_0)\).
Moreover, letting \(k\to\infty\) in \eqref{eq:bd_mass_bind} yields \(\int_{\ell_0}^{u_0} f = 1-\alpha\).
This characterizes an interior \((1-\alpha)\)-HDI,
contradicting Assumption~\ref{as:boundary_trunc} that the unique \((1-\alpha)\)-HDI is boundary-pinned at \(a\).

Since both cases lead to contradictions, we conclude \(\ell_\beta\to a\).
Finally, \eqref{eq:bd_mass_bind} and \(\ell_\beta\to a\) imply \(u_\beta\to u^\star\), where \(u^\star\) is the unique point
satisfying \(\int_a^{u^\star} f = 1-\alpha\) (equivalently \(F(u^\star)=1-\alpha\)).

\paragraph{3. Boundary scaling of the smoothed density.}
Fix \(t\in\mathbb{R}\). For \(y=a+\beta t\),
\begin{align*}
f_\beta(a+\beta t)
&=
\int_a^b f(z)\,\frac{1}{\beta}\sigma'\!\left(\frac{a+\beta t-z}{\beta}\right)\,dz.\\
&=
\int_{0}^{(b-a)/\beta} f(a+\beta w)\,\sigma'(t-w)\,dw
\quad
\left(
\text{with }z=a+\beta w
\right).
\end{align*}
Since \(f\) is bounded and \(\sigma'\) is integrable, dominated convergence yields
\begin{equation}\label{eq:bd_halfkernel}
f_\beta(a+\beta t)
\ \longrightarrow\
f(a^+)\int_{0}^{\infty}\sigma'(t-w)\,dw
=
f(a^+)\sigma(t),
\qquad
\beta\to 0^+.
\end{equation}

\paragraph{4. Left endpoint expansion.}
Define \(t_\beta\coloneqq (\ell_\beta-a)/\beta\).
By Step~2, \(u_\beta\to u^\star\in(a,b)\), so \(f_\beta(u_\beta)\to f(u^\star)\).
Then \eqref{eq:bd_imbalance} implies \(f_\beta(\ell_\beta)\to f(u^\star)\) as well.
By Assumption~\ref{as:boundary_trunc}, \(f(u^\star)/f(a^+)\in(0,1)\), hence \(t_\beta\) must remain bounded
(otherwise \(\sigma(t_\beta)\to 0\) or \(1\), contradicting \(f_\beta(\ell_\beta)\to f(u^\star)\)).
Taking limits in \eqref{eq:bd_halfkernel} along \(\ell_\beta=a+\beta t_\beta\) yields
\[
f(a^+)\sigma(t_\beta)\ \to\ f(u^\star),
\qquad\text{so}\qquad
\sigma(t_\beta)\ \to\ \frac{f(u^\star)}{f(a^+)}.
\]
Since \(\sigma\) is continuous and strictly increasing, \(t_\beta\to \kappa\), where
\(\sigma(\kappa)=f(u^\star)/f(a^+)\), i.e.,
\[
\kappa=\log\!\left(\frac{f(u^\star)}{f(a^+)-f(u^\star)}\right).
\]
Therefore \(\ell_\beta=a+\kappa\beta+o(\beta)\).

\paragraph{5. Right endpoint adaptation and length gap.}
If \(\kappa=0\), then \(\ell_\beta=a+o(\beta)\).
Using \eqref{eq:bd_mass_bind} and \(\int_a^{u^\star} f=1-\alpha\),
we obtain \(u_\beta=u^\star+o(\beta)\). Consequently
\[
|C_\beta^\star|-|C^\star|
=
(u_\beta-\ell_\beta)-(u^\star-a)
=
o(\beta).
\]

If \(\kappa<0\), then \(\ell_\beta<a\) for all sufficiently small \(\beta\).
Because \(f(y)=0\) for \(y<a\), \eqref{eq:bd_mass_bind} reduces to
\(\int_a^{u_\beta} f=1-\alpha\), hence \(u_\beta=u^\star\) by uniqueness.
Thus \(|C_\beta^\star|-|C^\star|=(u^\star-\ell_\beta)-(u^\star-a)=-\kappa\beta+o(\beta)\).

If \(\kappa>0\), then \(\ell_\beta>a\) for all sufficiently small \(\beta\).
Write \(u_\beta=u^\star+\Delta_\beta\).
Using \eqref{eq:bd_mass_bind} and \(\int_a^{u^\star} f=1-\alpha\),
\[
0
=
\int_{\ell_\beta}^{u_\beta} f - \int_a^{u^\star} f
=
-\int_a^{\ell_\beta} f + \int_{u^\star}^{u^\star+\Delta_\beta} f.
\]
By continuity at \(a^+\) and \(u^\star\), and \(\ell_\beta-a=\kappa\beta+o(\beta)\), we have
\[
\int_a^{\ell_\beta} f
=
(\ell_\beta-a)\,f(a^+)+o(\beta)
=
\kappa\beta\,f(a^+)+o(\beta),
\qquad
\int_{u^\star}^{u^\star+\Delta_\beta} f
=
\Delta_\beta\,f(u^\star)+o(\beta).
\]
Hence \(\Delta_\beta=\kappa\beta\,f(a^+)/f(u^\star)+o(\beta)\), yielding the stated expansion for \(u_\beta\).
The length gap follows by direct subtraction:
\[
|C_\beta^\star|-|C^\star|
=
(u_\beta-\ell_\beta)-(u^\star-a)
=
\kappa\beta\!\left(\frac{f(a^+)}{f(u^\star)}-1\right)+o(\beta).
\]
This completes the proof.
\end{proof}

\begin{remark}[Compatibility with Theorem~\ref{thm:cocp_asymp}]
Proposition~\ref{prop:beta_boundary} implies $|\ctr_\beta(x)-\ctr^\star(x)|=O(\beta)$ in the boundary-pinned regime,
hence the $\beta_n$-bias term in \eqref{eq:length_gap_decomp} still vanishes along any $\beta_n\to 0^+$.
Therefore, the conclusions of Theorem~\ref{thm:cocp_asymp} remain valid without modifying its statement.
\end{remark}


\section{Experimental Details}
\label{app:exp-details}

\subsection{Preprocessing}
\label{app:exp-preprocess}

\paragraph{Synthetic data.}
Covariates are generated as \(X \sim \mathrm{Unif}([-2,2])\) and are used without further normalization.
Targets are generated directly from the specified conditional distributions (Sec.~\ref{subsec:exp_synthetic}).

\paragraph{Real data.}
On each run, we apply the following preprocessing steps \emph{fit on the training split only}:

\begin{enumerate}[leftmargin=*,itemsep=1pt]
    \item \textbf{Feature standardization:} we standardize each feature to zero mean and unit variance using the training split statistics, and apply the same transform to validation/calibration/test.
    \item \textbf{Target stabilization:} for heavy-tailed targets, specifically, in \textit{blog}, \textit{facebook-1}, and \textit{facebook-2}, we apply dataset-specific standard transformations commonly used in prior work (i.e., \(\log(1+y)\)).

    \item \textbf{Response rescaling:} we further rescale \(y\) by a constant \(c=\frac{1}{|\mathcal{D}_{\text{train}}|}\sum_{(x,y)\in\mathcal{D}_{\text{train}}} |y|\) computed on the training split, i.e., \(y \leftarrow y/c\).
    This improves numerical conditioning and does not affect coverage; it only rescales interval lengths by a constant shared across methods within each repetition.
\end{enumerate}

\subsection{Neural architectures}
\label{app:exp-arch}

\paragraph{Backbone MLP (shared across methods).}
All learning-based methods use the same backbone architecture: a two-hidden-layer MLP with 64 ReLU units per layer and a linear output layer.
Method-specific heads enforce required constraints, e.g.:
\begin{itemize}[leftmargin=*,itemsep=1pt]
    \item \textbf{Center of CoCP} outputs \(m(x)\in\mathbb{R}\).
    \item \textbf{Radius of CoCP} outputs \(h(x)>0\) by applying a Softplus nonlinearity.
    \item \textbf{CQR} outputs \((\hat q_{\mathrm{lo}}(x),\hat q_{\mathrm{hi}}(x))\) via a base-and-gap parameterization to ensure monotonicity (\(\hat q_{\mathrm{lo}}\le \hat q_{\mathrm{hi}}\)).
\end{itemize}

\subsection{CoCP training details}
\label{app:exp-cocp-details}

\paragraph{Cross-fitting and internal split.}
CoCP trains a center \(m(x)\) and radius \(h(x)\) with K-fold cross-fitting.
Within each repetition, CoCP uses \(K=5\) folds on the training set.
For each fold \(k\), we use:
(i) one fold as an internal validation set for early stopping,
(ii) one fold for training the radius network \(h\),
and (iii) the remaining folds for training the center network \(m\).
This yields an effective \(60\%/20\%/20\%\) split within the cross-fitting pool for \((m,h,\text{val})\), respectively. Per fold, CoCP proceeds as: warmup \(m\), alternating learning (repeat \(T=5\) times), and final adjustment of \(h\). The temperature parameter is set to \(\beta=0.01\) in all experiments.

\subsection{Baseline implementation details}
\label{app:exp-baseline-details}

\paragraph{Split.}
We implement standard split conformal prediction with nonconformity score
\(S(x,y)=|y-\hat y(x)|\).
The conformity threshold is computed on the calibration split using the standard finite-sample correction,
and the resulting prediction interval is \([\hat y(x)-\hat q,\ \hat y(x)+\hat q]\).

\paragraph{CQR.}
We implement CQR following the standard split conformal recipe on top of
a learned quantile model.
The quantile network architecture matches the shared backbone (hidden \(64\), \(2\) layers, lr \(10^{-3}\)).
Conformal calibration is performed on the calibration split at the nominal coverage level.

\paragraph{CHR.}
For CHR, we employ an alternative faster implementation via CIR \cite{guo2026fast}, using 98 interval components and the same MLP architecture as above  (hidden \(64\), \(2\) layers, lr \(10^{-3}\)).

\paragraph{C-HDR.}
We fit a conditional normalizing flow (CNF) model with \(2\) flow layers and MLP conditioners of width \(64\) (2 layers, learning rate \(2\cdot 10^{-3}\)).
At test time, we approximate HDRs via Monte Carlo sampling using \(5000\) samples per test covariate. To ensure comparability with interval-based methods, we compute the convex hull of these samples to enforce a single connected prediction interval.

\paragraph{CPCP.}
We use the split fractions
0.4 and 0.4, and bandwidth \(\delta=0.02\).
The underlying networks use the same MLP architecture with learning rate \(5\cdot 10^{-4}\).
We enable clipping (\texttt{use\_clip}=\texttt{true}, multiplier \(5.0\)) and mixture augmentation
(\texttt{use\_mix}=\texttt{true}, \(\lambda_{\mathrm{mix}}=0.5\)).

\paragraph{RCP.}
For RCP, we use the same MLP architecture to train the conditional quantile function of the nonconformity score on a fraction \(0.5\) of the calibration set (lr \(10^{-3}\)), and we employ the same calibration step as CoCP via normalized score.

\paragraph{CPL.}
CPL relies on a user-specified finite-dimensional function class through a feature map \(\Phi(x)\),
used to define conditional moment constraints.
We consider \(\Phi(x) = [1,\ x^T]^T\) in synthetic data and \(\Phi(x) = [1,\ \mathrm{emb}(x)]^T\) in real data, where \(\mathrm{emb}(x)\) is the embedding of \(x\) from the shared MLP model.
We run \(100\) outer iterations. In each outer iteration, we train the threshold network \(h_\theta\)
for \(200\) steps on real data (resp.\ \(500\) steps on 1D synthetic data), with learning rate \(10^{-3}\), and we update the nonnegative multipliers with learning rate \(0.2\). We use smoothing parameter of \(0.1\),
length penalty weight \(1.0\), and split ratio \(0.7\) for the internal CPL split.

\paragraph{Naive CC.}
The RKHS-based conditional conformal method of \cite{gibbs2025conformal} can be extremely computationally expensive in our setting.
We therefore include it \emph{only} on synthetic data, and use a simplified variant that omits \emph{test imputation}.
Under i.i.d.\ sampling and large sample sizes, this ``naive'' variant is empirically close to the full method while being significantly more tractable.
We do not run this baseline on real datasets.
We use an RBF kernel
\[
k_\gamma(x,x') = \exp(-\gamma\|x-x'\|_2^2),
\]
with \((\gamma=0.5, \lambda=0.00025)\) and a finite-dimensional feature map \(\Phi(x)\) as in CPL.
In our implementation, the (nonnegative) interval radius is parameterized as
\[
t(x) = \max\{0,\ K(x,X_{\mathrm{cal}})w + \Phi(x)^\top \nu\},
\]
where \(K(x,X_{\mathrm{cal}})\) is the vector of kernel evaluations against the calibration covariates.

\subsection{Metrics and estimation}
\label{app:metrics}

Let \(\hat C(x)=[\hat \ell(x),\hat u(x)]\) denote the predicted interval and define the empirical coverage indicator:
\[
Z_i \;=\; \mathbbm{1}\{\, y_i \in [\hat\ell(x_i),\hat u(x_i)] \,\}.
\]

\paragraph{Coverage and Length.} On both synthetic and real datasets, we report:
\[
\mathrm{Coverage}
= \frac{1}{n_{\mathrm{test}}}\sum_{i=1}^{n_{\mathrm{test}}} Z_i,
\qquad
\mathrm{Length}
= \frac{1}{n_{\mathrm{test}}}\sum_{i=1}^{n_{\mathrm{test}}}\big(\hat u(x_i)-\hat \ell(x_i)\big).
\]

\paragraph{ConMAE.} On synthetic datasets, the conditional distribution \(Y\mid X=x\) is known.
For each test covariate \(x_i\), we compute the \emph{exact} conditional coverage of the predicted interval using the true conditional CDF \(F_{x_i}\):
\[
c_i
\;=\;
\mathbb{P}\big(Y\in \hat C(x_i)\mid X=x_i\big)
\;=\;
F_{x_i}\big(\hat u(x_i)\big)-F_{x_i}\big(\hat \ell(x_i)\big).
\]
We then report the mean absolute deviation from the target \(1-\alpha\):
\[
\mathrm{ConMAE}
\;=\;
\frac{1}{n_{\mathrm{test}}}\sum_{i=1}^{n_{\mathrm{test}}}\big|c_i-(1-\alpha)\big|.
\]

\paragraph{MSCE.} We estimate the MSCE using a partition-wise (histogram-style) estimator as in \cite{braun2025conditional}.
We run K-means on the standardized test covariates \(\{x_i\}\) to obtain \(K\) clusters \(\{\mathcal{G}_1,\dots,\mathcal{G}_K\}\) (we use \(K=10\) throughout).
For each cluster \(g\), define the empirical coverage
\[
\widehat c_g \;=\; \frac{1}{|\mathcal{G}_g|}\sum_{i\in \mathcal{G}_g} Z_i.
\]
We then compute the test-mass-weighted squared deviation from the target level \(1-\alpha\):
\[
\widehat{\mathrm{MSCE}}
\;=\;
\sum_{g=1}^K \frac{|\mathcal{G}_g|}{n_{\mathrm{test}}}\big(\widehat c_g-(1-\alpha)\big)^2.
\]
This estimator highlights the usual bias--variance trade-off: larger \(K\) yields a finer approximation of conditional coverage, but reduces the number of samples per cluster, making \(\widehat c_g\) noisier.
Moreover, because K-means relies on Euclidean distances, the partition can become unreliable in high dimensions, so MSCE should be interpreted as a diagnostic rather than an exact estimate of conditional coverage.

\paragraph{WSC.} We evaluate WSC following \cite{cauchois2021knowing} and using the implementation provided in the \texttt{covmetrics} package \cite{braun2025conditional}.
Given a mass threshold \(\delta\in(0,1]\) (we use \(\delta=0.1\)), WSC scans (a finite set of) slabs in the feature space and reports the \emph{minimum} empirical coverage over all slabs whose empirical mass is at least \(\delta\).
Higher WSC indicates better worst-case conditional behavior; values substantially below \(1-\alpha\) suggest localized undercoverage.

\paragraph{ERT.} We compute ERT using the \texttt{covmetrics} package \cite{braun2025conditional}.
ERT casts conditional coverage evaluation as a classification problem: predict \(Z=\mathbbm{1}\{Y\in \hat C(X)\}\) from \(X\).
Let \(h:\mathcal{X}\to[0,1]\) be a probabilistic auditor and \(\ell\) a proper loss.
ERT is defined as the excess risk over the constant baseline \(1-\alpha\):
\[
\widehat{\ell\text{-ERT}}
\;=\;
\widehat R_\ell(1-\alpha)\;-\;\widehat R_\ell(h),
\qquad
\widehat R_\ell(h)=\frac{1}{n_{\mathrm{test}}}\sum_{i=1}^{n_{\mathrm{test}}}\ell(h(x_i),Z_i).
\]
To avoid overfitting the auditor on the same data it is evaluated on, we follow \cite{braun2025conditional} and estimate ERT with \(k\)-fold cross-fitting on the test set (as implemented in \texttt{covmetrics}).
In all experiments, we instantiate the auditor as a Logistic Regression classifier and report \(\ell_1\)-ERT and \(\ell_2\)-ERT (Brier-score-based) returned by \texttt{covmetrics}.
Due to finite-sample noise, ERT estimates can occasionally be slightly negative; such values should be interpreted as being statistically close to zero.

\section{Additional Experimental Results}\label{app:additional-exp}

\subsection{Ablation study on alternating iterations}\label{app:exp-ablation-t}

To investigate the convergence behavior and the necessity of the proposed alternating optimization mechanism, we ablate the number of iterations ($T=0,1,...,5$) on synthetic datasets, keeping $K = 5$ folds cross-fitting and ensembling. Figure \ref{fig:ablation_iterations_synthetic} visualizes the convergence trajectories, while Table \ref{tab:ablation_iterations_synthetic} details the exact numerical performance. Here, $T=0$ represents an interval using only the MSE-pre-trained center without soft-coverage refinement.

The necessity of co-optimization depends heavily on data skewness. For the symmetric Normal distribution, the conditional mean already coincides with the true HDI center. Thus, the MSE-initialized center is near-optimal, and further updates yield only marginal improvements. 
Conversely, under skewness, the mean deviates from the HDI center, making alternating updates essential. 
On the LogNormal dataset, the process converges rapidly around $T=2$, where the trade-off between interval length and conditional coverage error (ConMAE) effectively stabilizes.

This dynamic is most pronounced on the highly skewed Exponential dataset. At $T=0$, the misaligned MSE center forces the radius to over-expand to capture the required mass, resulting in an inefficiently wide interval (Length 1.2293). As $T$ increases, the soft-coverage objective actively translates the center toward higher-density regions, allowing the radius to contract. As shown by the steep descent in Figure \ref{fig:ablation_iterations_synthetic}, the length drops drastically to 1.1337 by $T=4$, while maintaining a highly competitive ConMAE (0.0069). To ensure robust convergence across unknown distributions without dataset-specific tuning, we uniformly adopt $T = 5$ as a safe default in our main experiments.

\begin{figure}[htbp]
    \centering
    \includegraphics[width=\textwidth]{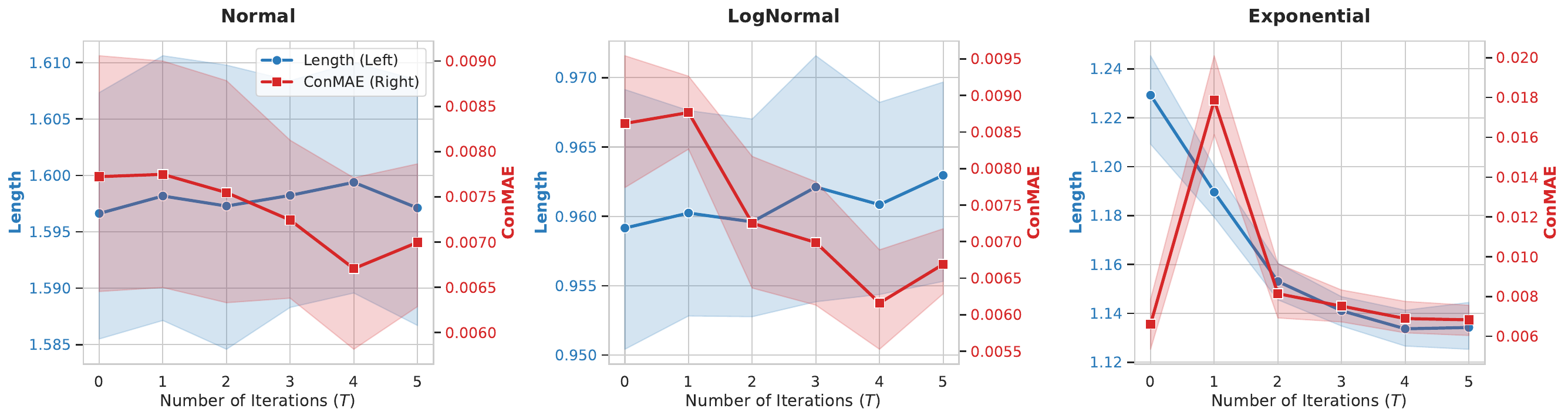}
    \caption{\textbf{Convergence dynamics of CoCP over alternating iterations (\(T\)).} The plots visualize the trade-off between interval length (blue, left axis) and conditional coverage error (ConMAE; red, right axis). While symmetric distributions (Normal) are near-optimal at \(T=0\), skewed distributions require alternating updates to optimize this trade-off, achieving either significantly lower ConMAE (LogNormal) or drastically reduced interval length (Exponential).}
    \label{fig:ablation_iterations_synthetic}
\end{figure}

\begin{table}[htbp]
\centering
\caption{Ablation study on the number of alternating iterations ($T$) on synthetic datasets, including $T=0$. Results are averaged over 10 splits (mean $\pm$ std). Training time is reported in seconds.}
\label{tab:ablation_iterations_synthetic}
\vspace{2mm}
\begin{tabular}{lccccc}
\toprule
\textbf{Dataset} & \textbf{\(T\)} & \textbf{Coverage} & \textbf{Length (\(\downarrow\))} & \textbf{ConMAE (\(\downarrow\))} & \textbf{Train Time (s)} \\
\midrule
\multirow{6}{*}{\textbf{Normal}} 
& 0 & 0.9013 (0.0039) & 1.5966 (0.0187) & 0.0077 (0.0021) & 14.71 (2.57) \\
& 1 & 0.9014 (0.0043) & 1.5982 (0.0206) & 0.0077 (0.0022) & 29.37 (3.52) \\
& 2 & 0.9005 (0.0044) & 1.5973 (0.0203) & 0.0075 (0.0021) & 42.64 (4.18) \\
& 3 & 0.9007 (0.0041) & 1.5982 (0.0170) & 0.0072 (0.0015) & 56.02 (5.34) \\
& 4 & 0.9008 (0.0037) & 1.5994 (0.0180) & 0.0067 (0.0017) & 69.49 (6.33) \\
& 5 & 0.9003 (0.0041) & 1.5971 (0.0187) & 0.0070 (0.0014) & 82.47 (7.57) \\
\midrule
\multirow{6}{*}{\textbf{LogNormal}} 
& 0 & 0.9009 (0.0034) & 0.9592 (0.0161) & 0.0086 (0.0015) & 14.77 (2.33) \\
& 1 & 0.9014 (0.0028) & 0.9602 (0.0126) & 0.0088 (0.0009) & 30.58 (5.11) \\
& 2 & 0.9005 (0.0031) & 0.9596 (0.0128) & 0.0073 (0.0016) & 44.57 (6.20) \\
& 3 & 0.9008 (0.0035) & 0.9621 (0.0153) & 0.0070 (0.0014) & 58.07 (6.64) \\
& 4 & 0.9006 (0.0027) & 0.9608 (0.0117) & 0.0062 (0.0011) & 72.72 (7.89) \\
& 5 & 0.9008 (0.0030) & 0.9630 (0.0129) & 0.0067 (0.0007) & 86.72 (9.27) \\
\midrule
\multirow{6}{*}{\textbf{Exponential}} 
& 0 & 0.8970 (0.0031) & 1.2293 (0.0318) & 0.0066 (0.0021) & 14.89 (2.18) \\
& 1 & 0.8961 (0.0025) & 1.1896 (0.0172) & 0.0179 (0.0032) & 50.68 (19.82) \\
& 2 & 0.8965 (0.0030) & 1.1530 (0.0130) & 0.0081 (0.0024) & 75.70 (27.15) \\
& 3 & 0.8971 (0.0033) & 1.1411 (0.0104) & 0.0075 (0.0014) & 100.76 (36.14) \\
& 4 & 0.8969 (0.0027) & 1.1337 (0.0127) & 0.0069 (0.0014) & 122.31 (42.26) \\
& 5 & 0.8971 (0.0027) & 1.1342 (0.0160) & 0.0068 (0.0013) & 146.25 (50.22) \\
\bottomrule
\end{tabular}
\end{table}

\newpage
\bibliographystyle{plainnat}
\bibliography{references}

@article{hyndman1996computing,
 ISSN = {00031305},
 URL = {http://www.jstor.org/stable/2684423},
 author = {Rob J. Hyndman},
 journal = {The American Statistician},
 number = {2},
 pages = {120--126},
 publisher = {[American Statistical Association, Taylor & Francis, Ltd.]},
 title = {Computing and Graphing Highest Density Regions},
 urldate = {2026-02-15},
 volume = {50},
 year = {1996}
}

@book{vovk2005algorithmic,
  title={Algorithmic learning in a random world},
  author={Vovk, Vladimir and Gammerman, Alexander and Shafer, Glenn},
  year={2005},
  publisher={Springer}
}

@article{shafer2008atutorial,
author = {Shafer, Glenn and Vovk, Vladimir},
title = {A Tutorial on Conformal Prediction},
year = {2008},
issue_date = {6/1/2008},
publisher = {JMLR.org},
volume = {9},
issn = {1532-4435},
journal = {J. Mach. Learn. Res.},
month = jun,
pages = {371–421},
numpages = {51}
}

@article{lei2014distribution,
  title={Distribution-free prediction bands for non-parametric regression},
  author={Lei, Jing and Wasserman, Larry},
  journal={Journal of the Royal Statistical Society Series B: Statistical Methodology},
  volume={76},
  number={1},
  pages={71--96},
  year={2014},
  publisher={Oxford University Press}
}

@article{lei2018distribution,
  title={Distribution-free predictive inference for regression},
  author={Lei, Jing and G’Sell, Max and Rinaldo, Alessandro and Tibshirani, Ryan J and Wasserman, Larry},
  journal={Journal of the American Statistical Association},
  volume={113},
  number={523},
  pages={1094--1111},
  year={2018},
  publisher={Taylor \& Francis}
}

@article{angelopoulos2021gentle,
  title={A gentle introduction to conformal prediction and distribution-free uncertainty quantification},
  author={Angelopoulos, Anastasios N and Bates, Stephen},
  journal={arXiv preprint arXiv:2107.07511},
  year={2021}
}

@article{barber2020thelimits,
    author = {Foygel Barber, Rina and Candès, Emmanuel J and Ramdas, Aaditya and Tibshirani, Ryan J},
    title = {The limits of distribution-free conditional predictive inference},
    journal = {Information and Inference: A Journal of the IMA},
    volume = {10},
    number = {2},
    pages = {455-482},
    year = {2020},
    month = {08},
    issn = {2049-8772},
    doi = {10.1093/imaiai/iaaa017},
}

@article{vovk2015cross,
  title={Cross-conformal predictors},
  author={Vovk, Vladimir},
  journal={Annals of Mathematics and Artificial Intelligence},
  volume={74},
  number={1},
  pages={9--28},
  year={2015},
  publisher={Springer}
}

@inproceedings{
gasparin2025improving,
title={Improving the Statistical Efficiency of Cross-Conformal Prediction},
author={Matteo Gasparin and Aaditya Ramdas},
booktitle={Forty-second International Conference on Machine Learning},
year={2025},
url={https://openreview.net/forum?id=mxNEWzqXFH}
}

@inproceedings {romano2019conformalized,
author = {Romano, Yaniv and Patterson, Evan and Cand\`{e}s, Emmanuel J.},
title = {Conformalized quantile regression},
year = {2019},
publisher = {Curran Associates Inc.},
address = {Red Hook, NY, USA},
booktitle = {Proceedings of the 33rd International Conference on Neural Information Processing Systems},
articleno = {318},
numpages = {11}
}

@article{sesia2020comparison,
  title={A comparison of some conformal quantile regression methods},
  author={Sesia, Matteo and Cand{\`e}s, Emmanuel J},
  journal={Stat},
  volume={9},
  number={1},
  pages={e261},
  year={2020},
  publisher={Wiley Online Library}
}

@inproceedings{
sesia2021conformal,
title={Conformal Prediction using Conditional Histograms},
author={Matteo Sesia and Yaniv Romano},
booktitle={Advances in Neural Information Processing Systems},
editor={A. Beygelzimer and Y. Dauphin and P. Liang and J. Wortman Vaughan},
year={2021},
url={https://openreview.net/forum?id=EvhsTX6GMyM}
}

@article{guan2023localized,
    author = {Guan, Leying},
    title = {Localized conformal prediction: a generalized inference framework for conformal prediction},
    journal = {Biometrika},
    volume = {110},
    number = {1},
    pages = {33-50},
    year = {2022},
    month = {07},
    issn = {1464-3510},
    doi = {10.1093/biomet/asac040},
}

@article{hore2025conformal,
    author = {Hore, Rohan and Barber, Rina Foygel},
    title = {Conformal prediction with local weights: randomization enables robust guarantees},
    journal = {Journal of the Royal Statistical Society Series B: Statistical Methodology},
    volume = {87},
    number = {2},
    pages = {549-578},
    year = {2024},
    month = {11},
    issn = {1369-7412},
    doi = {10.1093/jrsssb/qkae103},
}

@article{gibbs2025conformal,
    author = {Gibbs, Isaac and Cherian, John J and Candès, Emmanuel J},
    title = {Conformal prediction with conditional guarantees},
    journal = {Journal of the Royal Statistical Society Series B: Statistical Methodology},
    volume = {87},
    number = {4},
    pages = {1100-1126},
    year = {2025},
    month = {03},
    issn = {1369-7412},
    doi = {10.1093/jrsssb/qkaf008},
}

@inproceedings{
bairaktari2025kandinsky,
title={Kandinsky Conformal Prediction: Beyond Class- and Covariate-Conditional Coverage},
author={Konstantina Bairaktari and Jiayun Wu and Steven Wu},
booktitle={Forty-second International Conference on Machine Learning},
year={2025},
url={https://openreview.net/forum?id=IHAnkPkoiX}
}

@article{
deutschmann2024adaptive,
title={Adaptive Conformal Regression with Split-Jackknife+ Scores},
author={Nicolas Deutschmann and Mattia Rigotti and Maria Rodriguez Martinez},
journal={Transactions on Machine Learning Research},
issn={2835-8856},
year={2024},
url={https://openreview.net/forum?id=1fbTGC3BUD},
note={}
}

@InProceedings{colombo2023training,
  title = 	 {On training locally adaptive CP},
  author =       {Colombo, Nicolo},
  booktitle = 	 {Proceedings of the Twelfth Symposium on Conformal
 and Probabilistic Prediction with Applications},
  pages = 	 {384--398},
  year = 	 {2023},
  editor = 	 {Papadopoulos, Harris and Nguyen, Khuong An and Boström, Henrik and Carlsson, Lars},
  volume = 	 {204},
  series = 	 {Proceedings of Machine Learning Research},
  month = 	 {13--15 Sep},
  publisher =    {PMLR},
}

@inproceedings{
plassier2025rectifying,
title={Rectifying Conformity Scores for Better Conditional Coverage},
author={Vincent Plassier and Alexander Fishkov and Victor Dheur and Mohsen Guizani and Souhaib Ben Taieb and Maxim Panov and Eric Moulines},
booktitle={Forty-second International Conference on Machine Learning},
year={2025},
url={https://openreview.net/forum?id=STEhUnCmdm}
}

@inbook{tibshirani2019conformal,
author = {Tibshirani, Ryan J. and Barber, Rina Foygel and Cand\`{e}s, Emmanuel J. and Ramdas, Aaditya},
title = {Conformal prediction under covariate shift},
year = {2019},
publisher = {Curran Associates Inc.},
address = {Red Hook, NY, USA},
booktitle = {Proceedings of the 33rd International Conference on Neural Information Processing Systems},
articleno = {227},
numpages = {11}
}

@article{chernozhukov2021distributional,
  title={Distributional conformal prediction},
  author={Chernozhukov, Victor and W{\"u}thrich, Kaspar and Zhu, Yinchu},
  journal={Proceedings of the National Academy of Sciences},
  volume={118},
  number={48},
  pages={e2107794118},
  year={2021},
  publisher={National Academy of Sciences}
}

@article{izbicki2022cd,
  title={Cd-split and hpd-split: Efficient conformal regions in high dimensions},
  author={Izbicki, Rafael and Shimizu, Gilson and Stern, Rafael B},
  journal={Journal of Machine Learning Research},
  volume={23},
  number={87},
  pages={1--32},
  year={2022}
}

@misc{jung2025speedcp,
      title={SpeedCP: Fast Kernel-based Conditional Conformal Prediction}, 
      author={Yeo Jin Jung and Yating Liu and Zixuan Wu and So Won Jeong and Claire Donnat},
      year={2025},
      eprint={2509.24100},
      archivePrefix={arXiv},
      primaryClass={stat.ME},
      url={https://arxiv.org/abs/2509.24100}, 
}

@inproceedings{kiyani2024length,
author = {Kiyani, Shayan and Pappas, George and Hassani, Hamed},
title = {Length optimization in conformal prediction},
year = {2024},
isbn = {9798331314385},
publisher = {Curran Associates Inc.},
address = {Red Hook, NY, USA},
booktitle = {Proceedings of the 38th International Conference on Neural Information Processing Systems},
articleno = {3158},
numpages = {45},
location = {Vancouver, BC, Canada},
series = {NIPS '24}
}

@inproceedings{
dheur2025aunified,
title={A Unified Comparative Study with Generalized Conformity Scores for Multi-Output Conformal Regression},
author={Victor Dheur and Matteo Fontana and Yorick Estievenart and Naomi Desobry and Souhaib Ben Taieb},
booktitle={Forty-second International Conference on Machine Learning},
year={2025},
url={https://openreview.net/forum?id=G8R3ni0MI4}
}

@misc{guo2026fast,
      title={Fast Conformal Prediction using Conditional Interquantile Intervals}, 
      author={Naixin Guo and Rui Luo and Zhixin Zhou},
      year={2026},
      eprint={2601.02769},
      archivePrefix={arXiv},
      primaryClass={stat.ML},
      url={https://arxiv.org/abs/2601.02769}, 
}

@inproceedings{
gao2025volume,
title={Volume Optimality in Conformal Prediction with Structured Prediction Sets},
author={Chao Gao and Liren Shan and Vaidehi Srinivas and Aravindan Vijayaraghavan},
booktitle={Forty-second International Conference on Machine Learning},
year={2025},
url={https://openreview.net/forum?id=oNDhnGrD51}
}

@inproceedings{luo2025conformal,
  title={Conformal thresholded intervals for efficient regression},
  author={Luo, Rui and Zhou, Zhixin},
  booktitle={Proceedings of the AAAI Conference on Artificial Intelligence},
  volume={39},
  number={18},
  pages={19216--19223},
  year={2025}
}

@book{casella2024statistical,
  title={Statistical inference},
  author={Casella, George and Berger, Roger},
  year={2024},
  publisher={Chapman and Hall/CRC}
}

@InProceedings{pearce2018high,
  title = 	 {High-Quality Prediction Intervals for Deep Learning: A Distribution-Free, Ensembled Approach},
  author =       {Pearce, Tim and Brintrup, Alexandra and Zaki, Mohamed and Neely, Andy},
  booktitle = 	 {Proceedings of the 35th International Conference on Machine Learning},
  pages = 	 {4075--4084},
  year = 	 {2018},
  editor = 	 {Dy, Jennifer and Krause, Andreas},
  volume = 	 {80},
  series = 	 {Proceedings of Machine Learning Research},
  month = 	 {10--15 Jul},
  publisher =    {PMLR},
  url = 	 {https://proceedings.mlr.press/v80/pearce18a.html},
}

@misc{chen2026colorful,
      title={Colorful Pinball: Density-Weighted Quantile Regression for Conditional Guarantee of Conformal Prediction}, 
      author={Qianyi Chen and Bo Li},
      year={2026},
      eprint={2512.24139},
      archivePrefix={arXiv},
      primaryClass={cs.LG},
      url={https://arxiv.org/abs/2512.24139}, 
}

@InProceedings{bao2025areview,
  title = 	 {A Review and Comparative Analysis of Univariate Conformal Regression Methods},
  author =       {Bao, Jie and Colombo, Nicolo and Manokhin, Valery and Cao, Suqun and Luo, Rui},
  booktitle = 	 {Proceedings of the Fourteenth Symposium on Conformal and Probabilistic Prediction with Applications},
  pages = 	 {282--304},
  year = 	 {2025},
  editor = 	 {Nguyen, Khuong An and Luo, Zhiyuan and Papadopoulos, Harris and Löfström, Tuwe and Carlsson, Lars and Boström, Henrik},
  volume = 	 {266},
  series = 	 {Proceedings of Machine Learning Research},
  month = 	 {10--12 Sep},
  publisher =    {PMLR},
  url = 	 {https://proceedings.mlr.press/v266/bao25a.html},
}

@misc{braun2025conditional,
      title={Conditional Coverage Diagnostics for Conformal Prediction}, 
      author={Sacha Braun and David Holzmüller and Michael I. Jordan and Francis Bach},
      year={2025},
      eprint={2512.11779},
      archivePrefix={arXiv},
      primaryClass={stat.ML},
      url={https://arxiv.org/abs/2512.11779}, 
}

@inproceedings{kiyani2024conformal,
author = {Kiyani, Shayan and Pappas, George and Hassani, Hamed},
title = {Conformal prediction with learned features},
year = {2024},
publisher = {JMLR.org},
booktitle = {Proceedings of the 41st International Conference on Machine Learning},
articleno = {992},
numpages = {21},
location = {Vienna, Austria},
series = {ICML'24}
}

@article{cauchois2021knowing,
author = {Cauchois, Maxime and Gupta, Suyash and Duchi, John C.},
title = {Knowing what you know: valid and validated confidence sets in multiclass and multilabel prediction},
year = {2021},
issue_date = {January 2021},
publisher = {JMLR.org},
volume = {22},
number = {1},
issn = {1532-4435},
journal = {J. Mach. Learn. Res.},
month = jan,
articleno = {81},
numpages = {42}
}

@article{romano2020classification,
  title={Classification with valid and adaptive coverage},
  author={Romano, Yaniv and Sesia, Matteo and Candes, Emmanuel},
  journal={Advances in neural information processing systems},
  volume={33},
  pages={3581--3591},
  year={2020}
}

@inproceedings{papadopoulos2002inductive,
  title={Inductive confidence machines for regression},
  author={Papadopoulos, Harris and Proedrou, Kostas and Vovk, Volodya and Gammerman, Alex},
  booktitle={European conference on machine learning},
  pages={345--356},
  year={2002},
  organization={Springer}
}

@misc{bike,
  author       = {Fanaee-T, Hadi},
  title        = {{Bike Sharing}},
  year         = {2013},
  howpublished = {UCI Machine Learning Repository},
  note         = {{DOI}: https://doi.org/10.24432/C5W894}
}

@misc{bio,
  author       = {Rana, Prashant},
  title        = {{Physicochemical Properties of Protein Tertiary Structure}},
  year         = {2013},
  howpublished = {UCI Machine Learning Repository},
  note         = {{DOI}: https://doi.org/10.24432/C5QW3H}
}

@misc{blog,
  author       = {Buza, Krisztian},
  title        = {{BlogFeedback}},
  year         = {2014},
  howpublished = {UCI Machine Learning Repository},
  note         = {{DOI}: https://doi.org/10.24432/C58S3F}
}

@misc{facebook,
  author       = {Singh, Kamaljot},
  title        = {{Facebook Comment Volume}},
  year         = {2015},
  howpublished = {UCI Machine Learning Repository},
  note         = {{DOI}: https://doi.org/10.24432/C5Q886}
}

@dataset{homes,
  author       = {harlfoxem},
  title        = {House Sales in King County, USA},
  year         = {2016},
  publisher    = {Kaggle},
  url          = {https://www.kaggle.com/datasets/harlfoxem/housesalesprediction},
  note         = {Accessed: 2026-03-01} 
}

@misc{superconductivty,
  author       = {Hamidieh, Kam},
  title        = {{Superconductivty Data}},
  year         = {2018},
  howpublished = {UCI Machine Learning Repository},
  note         = {{DOI}: https://doi.org/10.24432/C53P47}
}

\end{document}